\numberwithin{equation}{section}
\theoremstyle{plain}
\newtheorem{theorem}{Theorem}[section]
\newtheorem{proposition}[theorem]{Proposition}
\newtheorem{lemma}[theorem]{Lemma}
\newtheorem{corollary}[theorem]{Corollary}
\newtheorem{definition}[theorem]{Definition}
\newtheorem{stass}[theorem]{Standing assumptions}
\theoremstyle{remark}
\newtheorem{remark}[theorem]{Remark}
\newcounter{enumiSave}
\DeclareMathOperator \argmin {argmin}
\DeclareMathOperator \argmax {argmax}
\DeclareMathOperator \var    {var}
\DeclareMathOperator \cov    {cov}
\DeclareMathOperator \Cond   {Cond}
\newcommand \Esp  {\mathrm{E}}
\newcommand \Prob {\mathrm{P}}
\newcommand \Qrob {\mathrm{Q}}
\newcommand \as   {\mathrm{a.s.}}
\newcommand \eqdef {\mathrel{\coloneqq}}
\newcommand \eqas {\stackrel{\text{a.s.}}{=}}
\newcommand \toas {\xrightarrow[]{\as}}
\newcommand \one  {\mathds{1}}  % Indicator function
\newcommand \Eset {\mathbb{E}}  % a space used in a technical lemma
\newcommand \Nset {\mathbb{N}}  % Non-negative integers
\newcommand \Rset {\mathbb{R}}  % Real numbers
\newcommand \Xset {\mathbb{X}}  % Index set
\newcommand \Dset {\mathbb{D}}  % space of decisions
\newcommand \Sset {\mathbb{S}}  % space of sample paths
\newcommand \Zset {\mathbb{Z}}  % space of zeros of some funcional
\newcommand \Mset {\mathbb{M}}  % general letter for sets of measures
\newcommand \Rplus    {\left[ 0, +\infty \right)}
\newcommand \RplusExt {\left[ 0, +\infty \right]}
\newcommand \Bcal {\mathcal{B}}  % Borel sigma-algebra
\newcommand \Ccal {\mathcal{C}}  % continuous functions
\newcommand \Ecal {\mathcal{E}}  % sigma-algebra on \Eset
\newcommand \Fcal {\mathcal{F}}  % sigma-algebra on \Omega, filtration
\newcommand \Gcal {\mathcal{G}}  % gain functional
\newcommand \Hcal {\mathcal{H}}  % uncertainty functional
\newcommand \Jcal {\mathcal{J}}  % OSLA expected uncertainty functional
\newcommand \Mcal {\mathcal{M}}  % sigma-algebra on \Mset
\newcommand \Ncal {\mathcal{N}}  % normal distribution
\newcommand \Scal {\mathcal{S}}  % sigma-algebra on \Sse
\newcommand \Tcal {\mathcal{T}}  % sigma-algebra on \Theta (used in appendix)
\newcommand \GP   {\mathcal{GP}} % Gaussian measure
\newcommand \GM   {\GP}          % Gaussian measure (alt. macro)
\newcommand \CcalX    {\mathcal{C}(\Xset)}  % Continuous functions on~$\Xset$
\newcommand \MGCX     {\Mset}               % Set of all Gaussian probability measures on C(X)
\newcommand \McalGCX  {\Mcal}               % Sigma-algebra on \MGCX
\newcommand \Pfrak    {\mathfrak{P}}        % Set of all Gaussian conditional distributions of a Gaussian RE
\newcommand \MsetQcq  {\Mset_0}
\newcommand \McalQcq  {\Mcal_0}
\newcommand \Pn   {\Prob_n}
\newcommand \Pnxi {\Pn^\xi}
\newcommand \Pinfxi {\Prob_{\infty}^\xi}
\newcommand \PxiParen[1] {\bigl( \Prob_{#1}^\xi \bigr)}
\newcommand \nug  {{\scaleobj{1.1}{\bm{\nu}}}}
\newcommand \onug {\overline{\nug}}
\newcommand \ddiff  {\mathrm{d}}
\newcommand \dmu    {\ddiff \mu}
\newcommand \dnu    {\ddiff \nu}
\newcommand \du     {\ddiff u}
\newcommand \dv     {\ddiff v}
\newcommand \dx     {\ddiff x}
\newcommand \df     {\ddiff f}
\newcommand \xx {\underline x}%
\newcommand \zz {\underline z}%
\newcommand \XX {\underline X}%
\newcommand \ZZ {\underline Z}%
\newcommand \UnitCube {\left[ 0, 1 \right]^\ell}  % unit cube in \Rset^\ell
\newcommand \AscFgcd {$\Pfrak$-continuous\xspace}
\newcommand \UiFgcd  {$\Pfrak$-uniformly integrable\xspace}
\newcommand \RCS     {random closed set\xspace}   % singular
\newcommand \RCSs    {random closed sets\xspace}  % plural
\begin{document}

\begin{frontmatter}

  \title{A supermartingale approach to Gaussian process based
    sequential design of experiments}

  \runtitle{Supermartingale approach to Gaussian process design of
    experiments}

  \begin{aug}
    \author{\fnms{Julien} \snm{Bect}\thanksref{m1,a}%
      \ead[label=e1]{julien.bect@centralesupelec.fr}},
    \author{\fnms{Fran\c{c}ois} \snm{Bachoc}\thanksref{b}%
      \ead[label=e2]{francois.bachoc@math.univ-toulouse.fr}}
    \and
    \author{\fnms{David} \snm{Ginsbourger}\thanksref{c,d}%
      \ead[label=e3]{ginsbourger@idiap.ch}
      \ead[label=e4]{ginsbourger@stat.unibe.ch}}

    \runauthor{J.~Bect, F.~Bachoc and D.~Ginsbourger}

    \address[a]{%
      Laboratoire des Signaux et Syst\`emes (L2S)\\
      CentraleSup\'elec, CNRS, Univ. Paris-Sud, Universit\'e Paris-Saclay,
      Gif-sur-Yvette, France.\\
      \printead{e1}
    }
    \address[b]{%
      Toulouse Mathematics Institute,
      University Paul Sabatier, France.\\
      \printead{e2}
    }
    \address[c]{%
      Uncertainty Quantification and Optimal Design group,\\
      Idiap Research Institute, Martigny, Switzerland\\
      \printead{e3}
    }
    \address[d]{%
      Institute of Mathematical Statistics and Actuarial Science,\\
      Department of Mathematics and Statistics,
      University of Bern, Switzerland\\
      \printead{e4}
    }
    \address{\thanksmark{m1}%
      Corresponding author.
    }

  \end{aug}

  \begin{abstract}\textbf{. }
    Gaussian process (GP) models have become a well-established
    framework for the adaptive design of costly experiments, and
    notably of computer experiments.
    GP-based sequential designs have been found practically efficient
    for various objectives, such as global optimization (estimating
    the global maximum or maximizer(s) of a function), reliability
    analysis (estimating a probability of failure) or the estimation
    of level sets and excursion sets.
    In this paper, we study the consistency of an important class of
    sequential designs, known as stepwise uncertainty reduction (SUR)
    strategies.
    Our approach relies on the key observation that the sequence of
    residual uncertainty measures, in SUR strategies, is generally a
    supermartingale with respect to the filtration generated by the
    observations.
    This observation enables us to establish generic consistency
    results for a broad class of SUR strategies.
    The consistency of several popular sequential design strategies is
    then obtained by means of this general result.
    Notably, we establish the consistency of two SUR strategies
    proposed by Bect, Ginsbourger, Li, Picheny and Vazquez
    (Stat.~Comp., 2012)---to the best of our knowledge, these are the
    first proofs of consistency for GP-based sequential design
    algorithms dedicated to the estimation of excursion sets and
    their measure.
    We also establish a new, more general proof of consistency for the
    expected improvement algorithm for global optimization which,
    unlike previous results in the literature, applies to any GP with
    continuous sample paths.
\end{abstract}

\begin{keyword}[class=MSC]
  % Source: http://www.ams.org/msc/pdfs/classifications2010.pdf
  \kwd[Primary ]{60G15}      % Gaussian Processes
  \kwd{62L05}                % Sequential Design
  \kwd[; secondary ]{68T05}  % Learning and adaptive systems
\end{keyword}

\begin{keyword}
  \kwd{Sequential Design of Experiments}
  \kwd{Active Learning}
  \kwd{Stepwise Uncertainty Reduction}
  \kwd{Supermartingale}
  \kwd{Uncertainty Functional}
  \kwd{Convergence}
\end{keyword}

\end{frontmatter}

\section{Introduction}

Sequential design of experiments is an important and lively research
field at a crossroads between applied probability, statistics and
optimization, where the goal is to allocate experimental resources
step by step so as to reduce the uncertainty about some quantity, or
function, of interest.
While the experimental design vocabulary traditionally refers to
observations of natural phenomena presenting aleatory uncertainties,
the design of computer experiments---in which observations are
replaced by numerical simulations---has become a field of research
\textit{per se} \citep{DACE, Koehler.etal1998, TDACE}, where Gaussian
process models are massively used to define efficient sequential
designs in cases of costly evaluations.
The predominance of Gaussian processes in this field is probably due
to their unique combination of modeling flexibility and computational
tractability, which makes it possible to work out sampling criteria
accounting for the potential effect of adding new experiments.
The definition, calculation and optimization of sampling criteria
tailored to various application goals have inspired a significant
number of research contributions in the last decades \cite[see,
e.g.,][]{frazier08, bect12, chevalier13, chevalier14, cohn96al,
  Ginsbourger.etal2014, Shahriari2016, Feliot.etal, Scott.etal2011,
  Frazier2009, frazier09, Gramacy2016AL, Ranjan2008, villemonteix08,
  Picheny2014, Picheny.etal2010}.
Yet, available convergence results for the associated sequential designs
are quite heterogeneous in terms of their respective extent
and underlying hypotheses \cite{vazquez2010convergence,
  Grunewalder.etal2010, Scott.etal2011, bull11, Srinivas.etal2012}.
Here we develop a probabilistic approach to the analysis of a large
class of strategies. This enables us to establish generic consistency
results, whose broad applicability is subsequently illustrated on four
popular sequential design strategies.
The crux is that each of these strategies turns out to involve some
\textit{uncertainty functional} applied to a sequence of conditional
probability distributions, and our main results rely on %
the key property---which will be referred to as the \emph{supermartingale
  property}---that, for any sequential design, the sequence of random
variables produced by these functionals is a supermartingale with
respect to the filtration generated by the observations.

Among the sampling criteria considered in our examples, probably the
most famous one is the expected improvement (EI), that arose in
sequential design for global optimization.
Following the foundations laid by \citet{mockus78} and the
considerable impact of the work of~\citet{EGOEBBF}, EI and other
Bayesian optimization strategies have spread in a variety of
application fields.
They are now commonly used in engineering design \citep{forrester08}
and, in the field of machine learning, for automatic configuration
algorithms (see~\cite{Shahriari2016} and references therein).
Extensions to constrained, multi-objective and/or robust optimization
constitute an active field of research \citep[see,
e.g.,][]{Williams.etal2000, Emmerich.etal2006, Picheny2014,
  Binois2015, Gramacy2016AL, Feliot.etal}.
In a different context, sequential design strategies based on Gaussian
process models have been used to estimate contour lines, probabilities
of failures, profile optima and excursion sets of expensive to
evaluate simulators \citep[see, notably,][]{Ranjan2008,
  Vazquez.Bect2009, Picheny.etal2010, bect12, Zuluaga.etal2013,
  chevalier14, Ginsbourger.etal2014, Wang.etal2016}.

More specifically, we consider in this paper sequential design strategies
built according to the \emph{stepwise uncertainty reduction} (SUR)
paradigm \citep[see][and references therein]{villemonteix08, bect12,
  chevalier13}.
Our main focus is the consistency of these algorithms under the
assumption that the function of interest is a sample path of the
Gaussian process model that is used to construct the sequential
design.
Almost sure consistency has been proved for the EI algorithm
in~\citep{vazquez2010convergence}, but only under the restrictive
assumption that the covariance function satisfies a certain
condition---the ``No Empty Ball'' (NEB) property%
---which excludes very regular Gaussian processes\footnote{%
  On a related note, \citet{bull11} proves an upper-bound for the
  convergence rate of the expected improvement algorithm under the
  assumption that the covariance function is Hölder, but his result
  only holds for functions that belong to the reproducing kernel
  Hilbert space (RKHS) of the covariance---a condition which, under
  appropriate assumptions, is almost surely not satisfied by sample
  paths of the Gaussian process according to Driscoll's theorem
  \citep{lukic01}.  Another result in the same vein is provided
  by~\citet{yarotsky13} for the squared exponential covariance in the
  univariate case, assuming that the objective function is analytic in
  a sufficiently large complex domain around its interval of
  definition.}.
Moreover, to the authors' knowledge, no proof of consistency has yet
been established for algorithms dedicated to probability of excursion
and/or excursion set estimation (referred to as \textit{excursion
  case} henceforth) such as those of~\citet{bect12}.
The scheme of proof developed in this work allows
us to address the excursion case and also to revisit the consistency
of the knowledge gradient algorithm~\cite{frazier08, frazier09,
  Frazier2009}, as well as that of the EI algorithm---which can
also be seen as a particular case of a SUR strategy
\cite{chevalier13}---without requiring the NEB assumption.
Before outlining the paper in more detail, let us briefly introduce
the general setting and, in particular, what we mean by SUR
strategies.
We will focus directly on the case of Gaussian processes for clarity,
but the SUR principle in itself is much more general, and can be used
with other types of models %
\citep[see, e.g.,][]{johnson1960, ogeran1993, mackay92ibof, cohn96al,
  geman96at, kingsmith1994}.%

Let $\xi$ be a real-valued Gaussian process defined on a measurable
space~$\Xset$---typically, $\xi$ will be a continuous Gaussian process
on a compact metric space, such as $\Xset = \UnitCube$---
and assume that evaluations (observations)
$Z_n = \xi(X_n) + \epsilon_n$ are to be made, sequentially, in order
to estimate some quantity of interest (e.g., the maximum of~$\xi$, or
the volume of its excursion above some given threshold).
We will assume the sequence of observation errors
$(\epsilon_n)_{n \in \mathbb{N}^*}$ to be independent of the Gaussian
process $\xi$, and composed of independent centered Gaussian
variables.
The definition of a SUR strategy starts with the choice of a ``measure
of residual uncertainty'' for the quantity of interest after
$n$~evaluations, which is a functional
\begin{equation}
  H_n = \Hcal \left( \Pnxi \right)
\end{equation}
of the conditional distribution~$\Pnxi$ of~$\xi$ given~$\Fcal_n$,
where $\Fcal_n$ is the $\sigma$-algebra generated by~$X_1$, $Z_1$,
\ldots, $X_n$, $Z_n$.
Assuming that the $H_n$'s are $\Fcal_n$-measurable random variables,
a SUR sampling criterion is then defined as
\begin{equation}
  \label{eq:generic-SUR:sampcrit-Jn}
  J_n(x) \;=\; \Esp_{n,x} \left( H_{n+1} \right),
\end{equation}
where $\Esp_{n,x}$ denotes the conditional expectation with respect
to~$\Fcal_n$ with~$X_{n+1} = x$ (assuming that~$H_{n+1}$ is
integrable, for any choice of~$x \in \Xset$).
The value of the sampling criterion~$J_n(x)$ at time~$n$ quantifies
the expected residual uncertainty at time $n + 1$ if the next
evaluation is made at~$x$.
Finally, a (non-randomized) sequential design is constructed by
greedily choosing at each step a point that provides the smallest
expected residual uncertainty---equivalently, the largest expected
uncertainty reduction---that is,
\begin{equation}
  \label{eq:generic-SUR:Xnplusone}
  X_{n+1} \in \argmin_{x \in \Xset}\, J_n(x).
\end{equation}
Such a greedy strategy is sometimes called myopic or one-step
look-ahead (as opposed to a Bayes-optimal strategy, which would
consider the reduction of uncertainty achieved at the end of the
sequential design, i.e., when the entire experimental budget has been
spent).
Our goal is to establish the consistency of these strategies, where
consistency means that the residual uncertainty~$H_n$ goes almost
surely to zero.

Given a finite measure $\mu$ over $\Xset$ and an excursion threshold
$T \in \Rset$, a typical choice of measure of residual uncertainty in
the excursion case \citep{bect12} is the integrated indicator variance
$ H_n = \Hcal \left( \Pnxi \right)=\int_{\Xset} p_{n} \left( 1 - p_{n}
\right) \dmu$ (also called \textit{integrated Bernoulli variance} in
what follows) where $p_n(x) = \Prob_n \left( \xi(x) \geq T \right)$
and $\Prob_n$ denotes the conditional probability with respect
to~$\Fcal_n$.
Note that $p_{n}(x)(1-p_{n}(x))=\var_n(\one_{\xi(x) \geq T})$, where
$\var_n$ denotes the conditional variance with respect to~$\Fcal_n$.
Recalling the definition of~$J_n$
from~\eqref{eq:generic-SUR:sampcrit-Jn}, and using the law of total
variance, we obtain that
\begin{equation}
  J_{n}(x) = \int_{\Xset} \Esp_{n,x} \left(
    \var_{n+1}\left( \one_{\xi(u) \geq T} \right)
  \right) \mu(\du) \leq \int_{\Xset}
  \var_{n}\left( \one_{\xi(u) \geq T} \right)
  \mu(\du) = H_n,
\end{equation}
which shows that $(H_{n})$ is an
$\left( \Fcal_n \right)$-supermartingale.
Another related measure of uncertainty, for which a semi-analytical
formula is provided in~\cite{chevalier14}, is the \textit{variance of
  the excursion volume},
$H_n = \var_n ( \mu(\{x \in \Xset: \xi(x) \geq T\}) )$.
The supermartingale property follows again in this case from the law
of total variance.
In the optimization case, on the other hand, it turns out \citep[see,
e.g.,][Section~3.3]{chevalier13} that the EI criterion is underlaid by
the measure of residual uncertainty
$H_n = \Esp_n \left( \max\xi - M_n \right)$, where $M_n$ is defined as
$M_n = \max_{i \le n} \xi(X_i)$ for non-degenerate Gaussian processes
(see Remark~\ref{rem:unusual-EI}), and $\Esp_n$ denotes the
conditional expectation with respect to~$\Fcal_n$.
A similar construction can be obtained for the \textit{knowledge
  gradient}, as developed later.
It turns out as shown later in the paper that, for both criteria, the
associated measures of residual uncertainty also possess the
aforementioned supermartingale property.

It must be pointed out here that, under very weak assumptions about
the Gaussian process model and the uncertainty functional, consistency
can also be achieved more simply using any dense (``space filling'')
deterministic sequence of design points.
It has been largely demonstrated, however, that SUR strategies
typically outperform in practice these simple deterministic designs
(see references above).
Hence there remains a gap between theory and practice that is not
filled by our consistency results since, by themselves, they do not
provide a very strong theoretical support for the choice of SUR
sequential designs over other types of designs, and in particular over
non-sequential designs.

The main practical interest of our consistency results is rather to
answer the natural concern that SUR strategies, because of their
greedy nature, might fail to be consistent in some situations.
Such a concern is justified for instance by the explicit
counterexample, provided by Yarotsky \cite{Yarotsky2013}, of a
particular function for which the sequence of points generated by the
EI strategy fails to produce a consistent estimate of the optimum of
the function.
Our results show that such functions are negligible under the
distribution of the Gaussian process used to construct the sequential
design.
Further study of the convergence \emph{rate} of SUR sequential designs
is nevertheless needed to provide a full theoretical support for their
practical effectiveness, and will be the subject of future work.
Understanding their consistency, in relation with the properties of
the uncertainty functionals that defined them, is an important first
step in this direction.

The rest of the paper is structured as follows.
Section~\ref{sec:model-and-basic-results} defines more precisely the
statistical model and design problem considered in the paper, and
addresses properties of conditioning and convergence of Gaussian
measures that are instrumental in proving the main results of the
paper.
Section~\ref{sec:SUR} discusses uncertainty functionals and their
properties, and formulates general sufficient conditions for the
consistency of SUR sequential designs in terms of properties of the
associated uncertainty functionals.
Finally, Section~\ref{sec:examples} presents applications of the
general result to four popular examples of SUR sequential designs,
establishing in each case both convergence to zero for the considered
measure of residual uncertainty and convergence of the corresponding
estimator to the quantity of interest, in the almost sure and $L^1$
sense.

%%%%%%%%%%%%%%%%%%%%%%%%%%%%%%%%%%%%%%%%%%%%%%%%%%%%%%%%%%%%%%%%%%%%%%%%%%
%%%%%%%%%%%%%%%%%%%%%%%%%%%%%%%%%%%%%%%%%%%%%%%%%%%%%%%%%%%%%%%%%%%%%%%%%%
\section{Preliminaries: Gaussian processes and sequential design}
\label{sec:model-and-basic-results}

\subsection{Model}
\label{sec:bkgrnd:model}

Let $\left( \xi(x) \right)_{x \in \Xset}$ denote a Gaussian process
with mean function~$m$ and covariance function~$k$,
defined on a probability space~$\left( \Omega, \Fcal, \Prob \right)$
and indexed by a metric space~$\Xset$.
Assume that~$\xi$ can be observed at sequentially selected
(data-dependent) design points~$X_1$, $X_2$, \ldots, with additive
heteroscedastic Gaussian noise:
\begin{equation} \label{eq:iterative:sampling:definition}%
  Z_n = \xi(X_n) + \tau(X_n)\, U_n,%
  \qquad n = 1,\, 2,\, \ldots
\end{equation}
where $\tau:\Xset \to \Rplus$ gives the (known) standard
deviation~$\tau(x)$ of an observation at~$x \in \Xset$, and
$(U_i)_{i\geq 1}$ denotes a sequence of independent and identically
distributed~$\Ncal (0, 1)$ variables, independent of~$\xi$.
Let $\Fcal_n$ denote the $\sigma$-algebra generated by
$ X_1,Z _1, \ldots, X_n, Z_n$.

\begin{definition}
  A sequence~$\left( X_n \right)_{n \ge 1}$ will be said to form a
  (non-randomized) sequential design if, for all~$n \ge 1$, $X_n$ is
  $\Fcal_{n-1}$-measurable.
\end{definition}

\begin{stass} \label{ass:standing:assumption}
  We will assume in the rest of the paper that
  \begin{enumerate}[i) ]
  \item $\Xset$ is a compact metric space,
  \item $\xi$ has continuous sample paths,
  \item $\tau$ is continuous.
  \end{enumerate}
\end{stass}

\begin{remark}
  Note that the variance function~$\tau^2$ is not assumed to be
  strictly positive.
  Indeed, the special case where $\tau^2 \equiv 0$ is actually an
  important model to consider given its widespread use in Bayesian
  numerical analysis \citep[see, e.g.][]{Diaconis1988, o1991bayes,
    ritter00, hennig2015probnum} and in the design and analysis of
  deterministic computer experiments \cite[see, e.g.,][]{DACE, TDACE,
    bayarri2012framework}.
\end{remark}

\begin{remark} \label{rem:GP-continuity}
  A Gaussian process with continuous sample paths automatically has
  continuous mean and covariance functions (see, e.g., Lemma~1 in
  \cite{GRP}).
  Conversely, assuming continuity of the mean function, let us recall
  a classical sufficient condition for sample path continuity on
  $\Xset \subset \Rset^d$ \citep[see, e.g.,][Theorem~3.4.1]{Adler81}:
  if there exist $C > 0$ and $\eta > 0$ such that
  \begin{equation*}
    k(x,x) + k(y,y) - 2 k(x,y)
    \le \frac{C}{\left| \log \lVert x - y \rVert \right|^{1 + \eta}},
    \qquad
    \forall x,y \in \Xset,
  \end{equation*}
  then there exists a version of~$\xi$ with continuous sample paths.
\end{remark}

\begin{remark} \label{rem:Bayes-interp}
  The setting described in this section arises, notably, when
  considering from a Bayesian point of view the following
  nonparametric interpolation/regression model with heteroscedastic
  Gaussian noise:
  \begin{equation}
    Z_n = f(X_n) + \tau(X_n)\, U_n,\qquad
    n = 1,\, 2,\, \ldots
  \end{equation}
  with a continuous Gaussian process prior on the unknown regression
  function~$f$.
  In this case, $m$~and~$k$ are the \emph{prior} mean and covariance
  functions of~$\xi$.
\end{remark}

%%%%%%%%%%%%%%%%%%%%%%%%%%%%%%%%%%%%%%%%%%%%%%%%%%%%%%%%%%%%%%%%%%%%%%%%%%
%%%%%%%%%%%%%%%%%%%%%%%%%%%%%%%%%%%%%%%%%%%%%%%%%%%%%%%%%%%%%%%%%%%%%%%%%%
\subsection{Gaussian random elements
  and Gaussian measures on~$\CcalX$}
\label{sec:Gauss-RE-RM}

Let $\Sset = \CcalX$ denote the set of all continuous functions
on~$\Xset$.
Since $\Xset$ is assumed compact, $\Sset$~becomes a separable Banach
space when equipped with the supremum
norm~$\left\lVert \cdot \right\rVert_\infty$.
We recall \citep[see, e.g.,][Theorem~2.9]{azais09level} that any
Gaussian process~$\left( \xi(x) \right)_{x \in \Xset}$ with continuous
sample paths on a compact metric space satisfies
$\Esp( \left\lVert \xi \right\rVert_\infty) < \infty$.

Any Gaussian process~$\left( \xi(x) \right)_{x \in \Xset}$ with
continuous sample paths can be seen as a Gaussian random element
in~$\Sset$.
More precisely, the mapping
$\xi: \Omega \to \Sset,\, \omega \mapsto \xi(\omega, \cdot)$ is
$\Fcal / \Scal$-measurable, where $\Scal$ denotes the Borel
$\sigma$-algebra on~$\Sset$, and the probability
distribution~$\Prob^\xi$ of~$\xi$ is a Gaussian measure on~$\Sset$.
The reader is referred to \citet{vakhania87}
and~\citet{ledoux2013probability} for background information
concerning random elements and measures in Banach spaces, and to
\citet{van2008reproducing} and~\citet{bogachev1998} for more
information on the case of Gaussian random elements and measures.

We will denote by $\MGCX$ the set of all Gaussian measures on~$\Sset$.
Any $\nu \in \MGCX$ is the probability distribution of some Gaussian
process with continuous sample paths, seen as a random element
in~$\Sset$.
The mean function~$m_\nu$ and covariance function~$k_\nu$ of this
Gaussian process are continuous (see Remark~\ref{rem:GP-continuity})
and fully characterize the measure, which we will denote
as~$\GM (m_\nu, k_\nu)$.
We endow~$\MGCX$ with the $\sigma$-algebra $\McalGCX$ generated by the
evaluation maps~$\pi_A:\nu \mapsto \nu(A)$, $A \in \Scal$.
Using this $\sigma$-algebra, conditional distributions
on~$\Sset$---seen as transition kernels from~$\Omega$ to~$\Sset$---can
be conveniently identified to random elements in~$\MGCX$ \citep[see,
e.g.][p.~105--106]{kall02}.

Given a Gaussian random element~$\xi$ in~$\Sset$, we will denote by
$\Pfrak(\xi)$ the set of all \emph{Gaussian conditional distributions}
of~$\xi$, i.e., the set of all random Gaussian measures~$\nug$ such
that $\nug = \Prob\bigl( \xi \in \cdot \mid \Fcal' \bigr)$ for some
$\sigma$-algebra $\Fcal' \subset \Fcal$.
Note that we use a bold letter~$\nug$ to denote a random element
in~$\MGCX$ (i.e., a random Gaussian measure), and a normal
letter~$\nu$ to denote a point in the same space (i.e., a Gaussian
measure).
Not all conditional distributions of the
form~$\nug = \Prob\bigl( \xi \in \cdot \mid \Fcal' \bigr)$ are
Gaussian, but an important class of such Gaussian conditional
distributions is discussed in the following section and in
Proposition~\ref{prop:conv-M:n-to-infty}.

%%%%%%%%%%%%%%%%%%%%%%%%%%%%%%%%%%%%%%%%%%%%%%%%%%%%%%%%%%%%%%%%%%%%%%%%%%
%%%%%%%%%%%%%%%%%%%%%%%%%%%%%%%%%%%%%%%%%%%%%%%%%%%%%%%%%%%%%%%%%%%%%%%%%%
\subsection{Conditioning on finitely many observations}
\label{sec:cond-finitely-many}

It is well known that Gaussian processes remain Gaussian under
conditioning with respect to pointwise evaluations, or more generally
linear combinations of pointwise evaluations, possibly corrupted by
independent additive Gaussian noise
(explicit expressions of the conditional mean and covariance functions
are recalled in Appendix~\ref{app:conditioning}).
In the language of nonparametric Bayesian statistics (see
Remark~\ref{rem:Bayes-interp}), Gaussian process priors are
\emph{conjugate} with respect to this sampling model.
The following result formalizes this fact in the framework of Gaussian
measures on~$\Sset$, and states that the conjugation property still
holds when the observations are made according to a sequential design.

\begin{proposition} \label{prop:Cond-map}
  For all $n \ge 1$, there exists a measurable mapping
  \begin{equation}
    \label{eq:Cond-map}
    \begin{aligned}
      (\Xset \times \Rset)^n \times \MGCX %
      & \to \MGCX,\\
      (x_1, z_1, \ldots, x_n, z_n, \nu) %
      & \mapsto \Cond_{x_1,z_1,\ldots,x_n,z_n}(\nu),
    \end{aligned}
  \end{equation}
  such that, for any $\Prob^\xi \in \MGCX$ and any sequential
  design~$(X_n)_{n \ge 1}$,
  $\Cond_{X_1, Z_1, \ldots, X_n, Z_n}(\Prob^\xi)$ is a conditional
  distribution of~$\xi$ given~$\Fcal_n$.
\end{proposition}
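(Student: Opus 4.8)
The plan is to define $\Cond$ through the explicit Gaussian-conditioning (kriging) formulas and then verify measurability and the conditional-distribution property in turn. Fix data $(x_1, z_1, \ldots, x_n, z_n)$ and a measure $\nu = \GM(m_\nu, k_\nu)$; write $\mathbf{x} = (x_1, \ldots, x_n)$, $\mathbf{z} = (z_1, \ldots, z_n)^\transp$, $K = (k_\nu(x_i, x_j))_{i,j}$, $\mathbf{k}_\nu(x) = (k_\nu(x, x_i))_i$, and $\Delta = \diag(\tau^2(x_1), \ldots, \tau^2(x_n))$, and set $\Cond_{x_1, z_1, \ldots, x_n, z_n}(\nu) = \GM(m', k')$ with
\begin{equation*}
  m'(x) = m_\nu(x) + \mathbf{k}_\nu(x)^\transp (K + \Delta)^{+}\bigl(\mathbf{z} - m_\nu(\mathbf{x})\bigr), \qquad
  k'(x,y) = k_\nu(x,y) - \mathbf{k}_\nu(x)^\transp (K + \Delta)^{+} \mathbf{k}_\nu(y),
\end{equation*}
where $(\,\cdot\,)^{+}$ denotes the Moore--Penrose pseudo-inverse. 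The pseudo-inverse is used rather than an ordinary inverse because $\tau$ is allowed to vanish, so that $K + \Delta$ may be singular. Since $m_\nu$ and $k_\nu$ are continuous, $m'$ is continuous and $k'$ is a continuous symmetric kernel; it is positive semidefinite because it is the Schur complement of the (positive semidefinite) joint covariance of $\xi$ and the noisy observations. Hence $\GM(m', k') \in \MGCX$, and the map is well defined.

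First I would establish measurability into $(\MGCX, \McalGCX)$. Because $\Xset$ is separable, $\Scal$ is generated by the point evaluations, hence by finite-dimensional cylinder sets; and the mass a Gaussian measure assigns to such a cylinder is a Gaussian probability depending measurably on the finite-dimensional mean vector and covariance matrix obtained by restricting $(m_\nu, k_\nu)$ to the points involved. Consequently a map into $(\MGCX, \McalGCX)$ is measurable as soon as, for every finite tuple of evaluation points, the induced mean vector and covariance matrix are measurable functions of the input $(x_1, z_1, \ldots, x_n, z_n, \nu)$. For our construction these are read off from the displayed formulas, in which the only ingredient that fails to be continuous is the pseudo-inverse $M \mapsto M^{+}$; since $M \mapsto M^{+}$ is Borel measurable, the required measurability follows.

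It then remains to prove that $\Cond_{X_1, Z_1, \ldots, X_n, Z_n}(\Prob^\xi)$ is a conditional distribution of $\xi$ given $\Fcal_n$. I would treat deterministic design points first: for fixed $\mathbf{x}$, the classical Gaussian conditioning result identifies $\Cond_{x_1, Z_1, \ldots, x_n, Z_n}(\Prob^\xi)$, random through the $Z_i$, with a version of $\Prob(\xi \in \,\cdot\, \mid Z_1, \ldots, Z_n)$. The extension to sequential designs would proceed by induction on $n$, using two facts. The first is a composition identity for the explicit maps,
\begin{equation*}
  \Cond_{x', z'}\bigl( \Cond_{x_1, z_1, \ldots, x_n, z_n}(\nu) \bigr) = \Cond_{x_1, z_1, \ldots, x_n, z_n, x', z'}(\nu),
\end{equation*}
expressing that conditioning a Gaussian measure on one further observation agrees with batch conditioning; this is checked from the formulas (or from uniqueness of Gaussian conditionals). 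The second is a freezing argument: since $X_{n+1}$ is $\Fcal_n$-measurable while $U_{n+1}$ is independent of $(\xi, \Fcal_n)$, conditioning $\xi$ on $\Fcal_{n+1}$ reduces to conditioning its $\Fcal_n$-posterior $\onug_n = \Cond_{X_1, Z_1, \ldots, X_n, Z_n}(\Prob^\xi)$ on the single extra observation at the now-frozen point $X_{n+1}$, namely $\Cond_{X_{n+1}, Z_{n+1}}(\onug_n)$, which by the composition identity equals $\Cond_{X_1, Z_1, \ldots, X_{n+1}, Z_{n+1}}(\Prob^\xi)$.

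The hard part will be the sequential step, i.e.\ making the freezing argument rigorous when the design points are genuinely data-dependent, since one must justify substituting the $\Fcal_n$-measurable point $X_{n+1}$ into the deterministic conditioning map. This difficulty is compounded by the degenerate-noise case $\tau \equiv 0$, where $K + \Delta$ is singular: it is precisely there that the pseudo-inverse is indispensable and that measurability, rather than the continuity one would enjoy in the nondegenerate case, becomes the delicate point to control.
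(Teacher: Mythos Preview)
Your proposal is correct and follows essentially the same route as the paper: define $\Cond$ via the kriging formulas with a Moore--Penrose pseudo-inverse, obtain measurability from the Borel-measurability of $M \mapsto M^{+}$ (the paper cites \cite{schonfeld1973} and packages the argument through a bi-measurable identification of $\MGCX$ with the parameter space $\Theta$ of $(m,k)$ pairs), and verify the conditional-distribution property inductively via the composition identity $\kappa_{n+m} = \kappa_m \circ \kappa_n$ together with a monotone-class argument over product test functions and cylinder sets. One small point to tighten: continuity and positive semidefiniteness of $k'$ do not by themselves ensure $\GM(m',k') \in \MGCX$ (i.e., existence of a continuous-sample-path version); the paper secures this by noting that $k'$ is the covariance of the residual process $\xi - m'(\,\cdot\,;\mathbf{x},\ZZ_n)$, which inherits continuous sample paths from $\xi$.
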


A proof of this result is provided in Appendix~\ref{app:conditioning}.
In the rest of the paper, we will denote
by~$\Prob_n^\xi = \GM (m_n, k_n )$ the conditional distribution
$\Cond_{X_1, Z_1, \ldots, X_n, Z_n}(\Prob^\xi)$ of~$\xi$
given~$\Fcal_n$, which can be seen as a random element
in~$\left( \MGCX, \McalGCX \right)$.
The posterior mean~$m_n$ is also referred to as the \emph{kriging
  predictor} \citep[see, e.g.,][]{ISDSTK, forrester08, SSD}.
Note that~$m_n$ (respectively~$k_n$) is an $\Fcal_n$-measurable
process\footnote{i.e., a measurable process when considered as defined
  on~$\left( \Omega, \Fcal_n \right)$ instead
  of~$\left( \Omega, \Fcal \right)$} on~$\Xset$ (respectively
$\Xset \times \Xset$), with continuous sample paths.
Note also that $m_0 = m$ and~$k_0 = k$.
Conditionally to~$\Fcal_n$, the next observation follows a normal
distribution:
\begin{equation}
  Z_{n+1} \mid \Fcal_n \;\sim\;
  \Ncal \left( m_n(X_{n+1}),\, s_n^2(X_{n+1}) \right),
\end{equation}
where $s_n^2(x) = k_n(x, x) + \tau^2(x)$.

%%%%%%%%%%%%%%%%%%%%%%%%%%%%%%%%%%%%%%%%%%%%%%%%%%%%%%%%%%%%%%%%%%%%%%%%%%
%%%%%%%%%%%%%%%%%%%%%%%%%%%%%%%%%%%%%%%%%%%%%%%%%%%%%%%%%%%%%%%%%%%%%%%%%%
\subsection{Convergence in $\MGCX$}

We consider in this paper the following notion of convergence
on~$\MGCX$:

\begin{definition} \label{def:topology:Mzero}
  Let $\nu_n = \GM \bigl( m_n, k_n \bigr) \in \MGCX$,
  $n \in \Nset \cup \{+\infty\}$.
  We will say that~$\left( \nu_n \right)$ converges to~$\nu_\infty$,
  and write~$\nu_n \to \nu_\infty$, if $m_n \to m_\infty$ uniformly
  on~$\Xset$ (i.e., $m_n \to m_\infty$ in~$\Sset$) and
  $k_n \to k_\infty$ uniformly on~$\Xset \times \Xset$.
\end{definition}

\begin{remark}
  In other words, we consider the topology on~$\MGCX$ induced by the
  strong topology on the Banach
  space~$\CcalX \times \Ccal(\Xset\times\Xset)$, where $\MGCX$ is
  identified to a subset of this space through the injection
  $\nu \mapsto (m_\nu, k_\nu)$.
\end{remark}

Let us now state two important convergence results in this topology,
that will be needed in Section~\ref{sec:SUR}.
In the first of them, and later in the paper, we denote by
$\Fcal_\infty = \bigvee_{n \ge 1} \Fcal_n$ the $\sigma$-algebra
generated by~$\bigcup_{n \ge 1} \Fcal_n$.

\begin{proposition} \label{prop:conv-M:n-to-infty}
  For any Gaussian random element~$\xi$ in~$\Sset$, defined on any
  probability space, %
  and for any sequential design~$(X_n)_{n \ge 1}$, %
  the conditional distribution of~$\xi$ given $\Fcal_\infty$ admits a
  version~$\Prob_\infty^\xi$ which is an $\Fcal_\infty$-measurable
  random element in~$\MGCX$, and $\Prob_n^\xi \to \Prob_\infty^\xi$
  almost surely.
\end{proposition}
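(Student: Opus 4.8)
The plan is to realise both the conditional mean and the conditional covariance as Bochner conditional expectations of \emph{fixed} Banach-space-valued random elements, and then to invoke the vector-valued form of Lévy's upward martingale convergence theorem in order to obtain the \emph{uniform} convergence required by Definition~\ref{def:topology:Mzero}. Since $\Esp(\|\xi\|_\infty) < \infty$, and in fact $\Esp(\|\xi\|_\infty^2) < \infty$ by Fernique's theorem, the random element $\xi$ is Bochner integrable in $\Sset = \CcalX$, and the tensor $\xi \otimes \xi : (x,y) \mapsto \xi(x)\xi(y)$ is Bochner integrable in $\Ccal(\Xset \times \Xset)$: it is the image of $\xi$ under the continuous map $f \mapsto f \otimes f$, and $\|\xi \otimes \xi\|_\infty = \|\xi\|_\infty^2$.

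First I would set $M_n \eqdef \Esp(\xi \mid \Fcal_n) \in \Sset$ and $Q_n \eqdef \Esp(\xi \otimes \xi \mid \Fcal_n) \in \Ccal(\Xset \times \Xset)$, the Bochner conditional expectations. Point evaluations are continuous linear functionals on $\CcalX$ and commute with the Bochner conditional expectation, so $M_n(x) = \Esp(\xi(x) \mid \Fcal_n) = m_n(x)$ and $Q_n(x,y) = \Esp(\xi(x)\xi(y) \mid \Fcal_n)$ almost surely for each fixed $x$, resp.\ $(x,y)$; comparing these continuous functions on a countable dense set yields $M_n = m_n$ and $Q_n = k_n + m_n \otimes m_n$ almost surely, the last identity being the conditional analogue of $\cov = \Esp(\,\cdot\,) - \Esp(\,\cdot\,)\,\Esp(\,\cdot\,)$. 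Applying the Banach-valued Lévy theorem to the closed martingales $(M_n)$ and $(Q_n)$ gives $M_n \to M_\infty \eqdef \Esp(\xi \mid \Fcal_\infty)$ and $Q_n \to Q_\infty \eqdef \Esp(\xi \otimes \xi \mid \Fcal_\infty)$ almost surely in their respective norms, that is, uniformly on $\Xset$ and on $\Xset \times \Xset$. Setting $m_\infty \eqdef M_\infty$ and $k_\infty \eqdef Q_\infty - m_\infty \otimes m_\infty$, the uniform convergence and eventual uniform boundedness of $m_n$ give $m_n \otimes m_n \to m_\infty \otimes m_\infty$ uniformly, hence $k_n \to k_\infty$ uniformly, almost surely. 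Each $k_n$ is symmetric and positive semidefinite, so $k_\infty$ is too, and it is continuous as a uniform limit of continuous functions; thus $\nu_\infty \eqdef \GM(m_\infty, k_\infty) \in \MGCX$, with $\Prob_n^\xi \to \nu_\infty$ in the sense of Definition~\ref{def:topology:Mzero}. As $m_\infty$ and $k_\infty$ are $\Fcal_\infty$-measurable and the parametrisation $(m,k) \mapsto \GM(m,k)$ is measurable, $\nu_\infty$ is an $\Fcal_\infty$-measurable random element in $(\MGCX, \McalGCX)$.

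It then remains to identify $\nu_\infty$ with a regular conditional distribution of $\xi$ given $\Fcal_\infty$, which exists because $\Sset$ is Polish. For a bounded continuous $\phi$ depending only on finitely many coordinates $\xi(x_1), \ldots, \xi(x_k)$, Proposition~\ref{prop:Cond-map} gives $\Esp(\phi(\xi) \mid \Fcal_n) = \int_\Sset \phi \, \ddiff \Prob_n^\xi$, the integral of $\phi$ against the Gaussian with mean $(m_n(x_i))_i$ and covariance $(k_n(x_i, x_j))_{i,j}$. The scalar Lévy theorem sends the left-hand side to $\Esp(\phi(\xi) \mid \Fcal_\infty)$ almost surely, while the almost sure convergence $(m_n, k_n) \to (m_\infty, k_\infty)$ together with weak convergence of finite-dimensional Gaussians under converging parameters sends the right-hand side to $\int_\Sset \phi \, \ddiff \nu_\infty$; hence these two quantities agree almost surely. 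Specialising to a countable family of such $\phi$ with rational parameters that is determining for Borel measures on the separable space $\Sset$, and invoking a Dynkin $\pi$--$\lambda$ argument, one produces a single almost sure event on which the probability measures $\nu_\infty$ and $\Prob(\xi \in \,\cdot\, \mid \Fcal_\infty)$ coincide on a generating $\pi$-system, and therefore everywhere; setting $\Prob_\infty^\xi \eqdef \nu_\infty$ then concludes. The main obstacle, and the whole reason for routing through Banach-valued martingales, is precisely the upgrade from the elementary pointwise martingale convergence of $m_n(x)$ and $k_n(x,y)$ to the uniform convergence in Definition~\ref{def:topology:Mzero}: this is what the vector-valued Lévy theorem supplies, once Bochner integrability of $\xi$ and $\xi \otimes \xi$ and the identification of the pointwise conditional moments with the coordinates of the Bochner conditional expectations have been secured.
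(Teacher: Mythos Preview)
Your proof is correct and, for the mean function $m_n$ and the final identification of $\nu_\infty$ with the conditional law, essentially parallel to the paper's (the paper uses characteristic functions and the classical L\'evy continuity theorem where you use bounded continuous cylinder functions, but this is a cosmetic difference).

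The genuine divergence is in the treatment of the covariance $k_n$. You handle it symmetrically with $m_n$: invoke Fernique to get $\Esp(\|\xi\|_\infty^2)<\infty$, apply the vector-valued L\'evy theorem to the closed martingale $Q_n=\Esp(\xi\otimes\xi\mid\Fcal_n)$ in $\Ccal(\Xset\times\Xset)$, and recover $k_n=Q_n-m_n\otimes m_n\to k_\infty$ uniformly. The paper instead observes that $k_n$ depends only on the design points $X_i$ (not on the $Z_i$), reduces without loss of generality to a \emph{deterministic} design, and then uses the monotonicity of conditional variances: each sequence $\sigma_n^2(\delta_x\pm\delta_y)$ is nonincreasing in $n$, hence converges pointwise, an equicontinuity bound inherited from $k_0$ shows the limit is continuous, and Dini's theorem upgrades the convergence to uniform; polarisation finishes. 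Your route is cleaner and avoids the reduction to a deterministic design, at the cost of the extra second-moment input from Fernique; the paper's route is more elementary (only first moments of $\|\xi\|_\infty$, no Bochner machinery for the covariance) and makes visible the special monotone structure of Gaussian conditioning.
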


\begin{proposition} \label{prop:conv-M:cond-pt}
  Let $\nu \in \MGCX$ and let $(x_j, z_j) \to (x, z)$
  in~$\Xset \times \Rset$.
  Assume that $k_\nu(x, x) + \tau^2(x) > 0$.
  Then $\Cond_{x_j,\, z_j} (\nu) \to \Cond_{x,\, z} (\nu)$.
\end{proposition}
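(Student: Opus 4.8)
The plan is to reduce the statement to the explicit kriging update formulas for a single observation and then to verify the two uniform convergences required by Definition~\ref{def:topology:Mzero}. Writing $\Cond_{x,z}(\nu) = \GM(\widetilde m, \widetilde k)$, the construction underlying Proposition~\ref{prop:Cond-map} (detailed in Appendix~\ref{app:conditioning}) gives, with $s_\nu^2(x) \eqdef k_\nu(x,x) + \tau^2(x)$,
\begin{equation*}
  \widetilde m(u) = m_\nu(u) + \frac{k_\nu(u,x)}{s_\nu^2(x)}\bigl(z - m_\nu(x)\bigr),
  \qquad
  \widetilde k(u,v) = k_\nu(u,v) - \frac{k_\nu(u,x)\,k_\nu(x,v)}{s_\nu^2(x)} ,
\end{equation*}
and similarly $\Cond_{x_j,z_j}(\nu) = \GM(\widetilde m_j, \widetilde k_j)$, with $x,z$ replaced by $x_j,z_j$, valid as soon as $s_\nu^2(x_j) > 0$. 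By Definition~\ref{def:topology:Mzero} it then suffices to prove that $\widetilde m_j \to \widetilde m$ uniformly on~$\Xset$ and $\widetilde k_j \to \widetilde k$ uniformly on~$\Xset \times \Xset$.

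Before the estimates I would record the analytic facts that make everything work, all consequences of the standing assumptions and of $\nu \in \MGCX$. Since $\Xset$ is compact, $m_\nu$ is bounded and uniformly continuous on~$\Xset$, and $k_\nu$ is bounded and \emph{uniformly} continuous on~$\Xset \times \Xset$; in particular the cross-covariances converge uniformly,
\begin{equation*}
  \sup_{u \in \Xset} \bigl\lvert k_\nu(u, x_j) - k_\nu(u, x) \bigr\rvert \;\longrightarrow\; 0 ,
\end{equation*}
as $x_j \to x$, which is the one place where compactness is genuinely used. Continuity of $k_\nu$ and of $\tau$ gives $s_\nu^2(x_j) \to s_\nu^2(x)$, and the hypothesis $s_\nu^2(x) > 0$ then yields $s_\nu^2(x_j) \ge s_\nu^2(x)/2 > 0$ for all large~$j$, so the formula above is valid for such $j$ and the denominators are uniformly bounded away from zero. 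Finally $z_j - m_\nu(x_j) \to z - m_\nu(x)$.

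With these in hand the two convergences follow from routine telescoping estimates. For the mean, I would write $\widetilde m_j(u) - \widetilde m(u)$ as the combination of $k_\nu(u,x_j)/s_\nu^2(x_j)$ and $k_\nu(u,x)/s_\nu^2(x)$ against the respective residuals, split into three terms isolating the variation of the cross-covariance, of the residual $z - m_\nu(\cdot)$, and of the reciprocal denominator, and then take the supremum over~$u$; uniform convergence of the cross-covariance together with boundedness of $k_\nu$ and of the residuals and the lower bound on the denominators shows $\sup_u \lvert \widetilde m_j(u) - \widetilde m(u) \rvert \to 0$. The covariance is handled identically, writing $\widetilde k_j(u,v) - \widetilde k(u,v)$ as a difference of the rank-one terms $k_\nu(u,x_j)\,k_\nu(x_j,v)/s_\nu^2(x_j)$ and its limit, telescoping in the two factors and the denominator, and taking the supremum over $(u,v)$. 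I do not expect any real obstacle here: the only delicate point is to secure \emph{uniform} rather than pointwise convergence, which is exactly what uniform continuity of $k_\nu$ on the compact space $\Xset \times \Xset$ provides, while the positivity assumption $k_\nu(x,x) + \tau^2(x) > 0$ is what prevents the denominators from degenerating in the limit.
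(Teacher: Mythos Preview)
Your proposal is correct and follows essentially the same route as the paper: both reduce to the explicit rank-one kriging update formulas from Appendix~\ref{app:conditioning} and then verify uniform convergence of the posterior mean and covariance using the uniform continuity of~$k_\nu$ on the compact set~$\Xset\times\Xset$ together with the positivity assumption to control the denominator (the paper phrases the latter as continuity of $K\mapsto K^\dagger$ at an invertible scalar). Your write-up is more explicit about the telescoping and the role of compactness, but there is no substantive difference in the argument.
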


\begin{proof}
  See Appendix~\ref{app:ASCSCD:proofs} for proofs of both results.
\end{proof}

%%%%%%%%%%%%%%%%%%%%%%%%%%%%%%%%%%%%%%%%%%%%%%%%%%%%%%%%%%%%%%%%%%%%%%%%%%
%%%%%%%%%%%%%%%%%%%%%%%%%%%%%%%%%%%%%%%%%%%%%%%%%%%%%%%%%%%%%%%%%%%%%%%%%%
\section{Stepwise Uncertainty Reduction}
\label{sec:SUR}

\subsection{Uncertainty functionals and uncertainty reduction}
\label{sec:SUR:ufunc-and-ured}

As explained in the introduction, the definition of a SUR strategy
starts with the choice of an uncertainty functional~$\Hcal$, which
maps the conditional distribution~$\Pnxi$ to a measure~$H_n$ of
residual uncertainty for the quantity of interest.
The minimal value of the uncertainty functional represents an absence
of uncertainty on the quantity of interest:
for clarity, and without loss of generality as long as~$\Hcal$ is
bounded from below and attains its minimum, we will assume in the rest
of this section that $\min \Hcal = 0$, thus restricting our attention
to non-negative uncertainty functionals.

More formally, let~$\Hcal$ denote a measurable functional from~$\MGCX$
to~$\left[ 0, +\infty \right)$.
Since $\Pnxi$ is an $\Fcal_n$-measurable random element
in~$\left( \MGCX, \McalGCX \right)$, the residual uncertainty
$H_n = \Hcal (\Pnxi)$ is an $\Fcal_n$-measurable random variable.
A key observation for the convergence results of this paper is that
many uncertainty functionals of interest---examples of which will be
given in Section~\ref{sec:examples}---enjoy the following property:
\begin{definition}
  A measurable functional~$\Hcal$ on~$\MGCX$ will be said to have the
  \emph{supermartingale property} if, %
  for any Gaussian random element~$\xi$ in~$\Sset$, %
  defined on any probability space, %
  and for any sequential design~$(X_n)_{n \ge 1}$, %
  the sequence~$\left(\Hcal (\Pnxi)\right)_{n \ge 0}$ is an
  $\left(\Fcal_n\right)$-supermartingale.
\end{definition}

The supermartingale property echoes DeGroot's observation that
``reasonable'' measures of uncertainty should be decreasing on average
for any possible experiment \citep{degroot1962}.
To discuss this connection more precisely in our particular setting,
let us consider the following definition.
\begin{definition} \label{def:DoA}
  Let $\MsetQcq$ denote a set of probability measures on a measurable
  space~$\left( \Eset, \Ecal \right)$, and let~$\McalQcq$
  denote the $\sigma$-algebra generated on~$\MsetQcq$
  by the evaluation maps.
  For any random element~$\nug$
  in~$\left( \MsetQcq, \McalQcq \right)$, defined on any probability
  space, let $\onug$ denote the probability measure defined by
  $\onug (A) = \Esp \left( \nug(A) \right)$, $A \in \Ecal$.
  We will say that a non-negative measurable functional~$\Hcal$
  on~$\MsetQcq$ is \emph{decreasing on average} (DoA) if, for any
  random element~$\nug$ in~$\left( \MsetQcq, \McalQcq \right)$ such
  that $\onug \in \MsetQcq$,
  $\Esp\left( \Hcal(\nug) \right) \le \Hcal(\onug)$.
\end{definition}

Note that, if the set~$\MsetQcq$ is convex, DoA functionals
on~$\MsetQcq$ are concave.
The converse statement is expected to be false, however, since
Jensen's inequality does not hold for all concave functionals in
infinite dimensional settings \citep[see][for extensions of Jensen's
inequality under various assumptions]{perlman74}.
The set~$\MGCX$ of all Gaussian measures on~$\Sset$ is not convex, but
all the uncertainty functionals presented in
Section~\ref{sec:examples} can in fact be extended to DoA---hence
concave---functionals defined on some larger convex set of probability
measures\footnote{%
  More precisely, the functionals discussed in
  Sections~\ref{sec:example:pUnMoinsp} and~\ref{sec:example:variance}
  can be extended to DoA functionals on the set of all probability
  measures on~$\Sset$,
  and those of Sections~\ref{sec:example:knowledge}
  and~\ref{sec:example:ei} to DoA functionals on the set of all
  probability measures~$\nu$ on~$\Sset$ such that
  $\Esp\left( \max \xi - \min \xi \right) < +\infty$
  for~$\xi \sim \nu$.
}.

\begin{remark}
  Let $\beta: \Sset \to \Rset^p$ denote a measurable function,
  and let $\nu^\beta$ denote the image of~$\nu$ by~$\beta$.
  Then it is easy to see that any functional of the form
  $\Hcal(\nu) = \Hcal' ( \nu^\beta )$ is DoA, where $\Hcal'$ denotes
  a DoA functional defined on some appropriate subset of the set of
  all probability measures on~$\Rset^p$;
  the reader is referred to \cite{hainy2014} for a variety of examples
  of such functionals.
  Section~\ref{sec:example:variance} provides an example of this
  construction, with $p = 1$ and $\Hcal'$ the variance functional.
\end{remark}

The supermartingale and DoA properties are easily seen to be connected
as follows:
\begin{proposition}
  If $\Hcal$ is DoA on $\MGCX$, then it has the supermartingale
  property.
\end{proposition}

Let us conclude this section with another useful property of
functionals.
Recall that $\Pfrak(\xi)$ denotes the set of all Gaussian conditional
distributions of~$\xi$ (see Section~\ref{sec:Gauss-RE-RM}).

\begin{definition}
  A measurable functional~$\Hcal$ on~$\MGCX$ will be said to be
  \emph{\UiFgcd} if, for any Gaussian random element~$\xi$
  in~$\Sset$, %
  defined on any probability space, %
  the family $\left( \Hcal(\nug) \right)_{\nug \in \Pfrak(\xi)}$ is
  uniformly integrable.
\end{definition}

\begin{proposition} \label{prop:UiFgcd:upperbound}
  Let $\Hcal$ denote a measurable functional on~$\MGCX$.
  If there exists
  $L^+ \in \cap_{\nu \in \MGCX} \mathcal{L}^1 \left( \Sset, \Scal, \nu
  \right)$ such that
  $\left| \Hcal (\nu) \right| \le \int_\Sset L^+ \dnu$ for all
  $\nu \in \MGCX$, then $\Hcal$ is \UiFgcd.
\end{proposition}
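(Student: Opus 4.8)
The plan is to dominate $|\Hcal(\nug)|$, uniformly over $\nug \in \Pfrak(\xi)$, by a conditional expectation of a \emph{single} integrable random variable, and then to invoke the classical uniform integrability of the family of all conditional expectations of an $\mathcal{L}^1$ variable. First I would fix a Gaussian random element~$\xi$ in~$\Sset$, with distribution~$\Prob^\xi \in \Mset$. Since $L^+ \in \mathcal{L}^1(\Sset, \Scal, \Prob^\xi)$, so is $|L^+|$, and hence the random variable $Y \eqdef |L^+| \circ \xi$ is integrable, with $\Esp(Y) = \int_\Sset |L^+|\, \ddiff \Prob^\xi < \infty$.

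I would then take any $\nug \in \Pfrak(\xi)$, written as $\nug = \Prob(\xi \in \cdot \mid \Fcal')$ for some sub-$\sigma$-algebra $\Fcal' \subset \Fcal$. For a $\Prob^\xi$-integrable $g$, the map $\nu \mapsto \int_\Sset g\, \dnu$ is $\Mcal$-measurable (it coincides with $\pi_A$ for $g = \one_A$, and one extends by linearity and a monotone-class argument), so $\int_\Sset g\, \ddiff \nug$ is a genuine $\Fcal'$-measurable random variable; moreover, since $\nug$ is a regular conditional distribution, it equals $\Esp(g \circ \xi \mid \Fcal')$ almost surely. Applying this to $g \in \{L^+, |L^+|\}$ and using the hypothesis together with $L^+ \le |L^+|$ yields, almost surely,
\begin{equation*}
  | \Hcal(\nug) |
  \;\le\; \int_\Sset L^+\, \ddiff \nug
  \;\le\; \int_\Sset |L^+|\, \ddiff \nug
  \;=\; \Esp\bigl( Y \mid \Fcal' \bigr).
\end{equation*}
Note that $\Hcal(\nug)$ is itself a well-defined random variable, because $\Hcal$ is $\Mcal$-measurable and $\nug$ is a random element in $(\Mset, \Mcal)$.

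It then remains to recall two standard measure-theoretic facts. First, for a fixed $Y \in \mathcal{L}^1$ the family $\bigl\{ \Esp(Y \mid \Fcal') : \Fcal' \subset \Fcal \bigr\}$ is uniformly integrable: with $Y \ge 0$, the event $\{ \Esp(Y \mid \Fcal') > c \}$ lies in~$\Fcal'$, so $\Esp\bigl( \Esp(Y \mid \Fcal')\, \one_{\{\Esp(Y \mid \Fcal') > c\}} \bigr) = \Esp\bigl( Y\, \one_{\{\Esp(Y \mid \Fcal') > c\}} \bigr)$, while $\Prob\bigl( \Esp(Y \mid \Fcal') > c \bigr) \le \Esp(Y)/c$ by Markov's inequality, uniformly in~$\Fcal'$; the absolute continuity of $A \mapsto \Esp(Y \one_A)$ then forces the supremum over~$\Fcal'$ to vanish as $c \to \infty$. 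Second, uniform integrability is inherited by dominated families. Combining these with the displayed bound shows that $\bigl( \Hcal(\nug) \bigr)_{\nug \in \Pfrak(\xi)}$ is uniformly integrable, and since $\xi$ was arbitrary, $\Hcal$ is \UiFgcd. The only genuinely delicate point is the identification $\int_\Sset g\, \ddiff \nug = \Esp(g \circ \xi \mid \Fcal')$, which rests on $\nug$ being a regular conditional distribution and on the $\Mcal$-measurability of $\nu \mapsto \int_\Sset g\, \dnu$; the remaining steps are routine.
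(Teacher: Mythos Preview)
Your proof is correct and follows essentially the same approach as the paper: bound $|\Hcal(\nug)|$ by $\Esp(L^+(\xi)\mid\Fcal')$ and invoke the uniform integrability of conditional expectations of a fixed integrable variable (the paper simply cites Kallenberg's Lemma~5.5 for this last step, whereas you spell out the Markov/absolute-continuity argument and the measurability justifications).
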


\begin{proof}
  Let $\xi$ denote a Gaussian random element in~$\Sset$
  and let
  $\nug = \Prob\bigl( \xi \in \cdot \mid \Fcal' \bigr) \in
  \Pfrak(\xi)$.
  Then we have
  $\vert \Hcal(\nug) \vert \le \Esp \left( L^+(\xi) \mid \Fcal'
  \right)$,
  and the result follows from the uniform integrability of conditional
  expectations \citep[see, e.g.,][Lemma~5.5]{kall02}.
\end{proof}

\begin{remark}
  If $\Hcal$ is \UiFgcd and has the supermartingale property then, for
  any sequential design,
  the sequence~$(H_n)$ is a uniformly integrable supermartingale
  (since $\{ \Prob_n^\xi \} \subset \Pfrak(\xi)$), and thus converges
  almost surely and in~$L^1$.
\end{remark}

%%%%%%%%%%%%%%%%%%%%%%%%%%%%%%%%%%%%%%%%%%%%%%%%%%%%%%%%%%%%%%%%%%%%%%%%%%
%%%%%%%%%%%%%%%%%%%%%%%%%%%%%%%%%%%%%%%%%%%%%%%%%%%%%%%%%%%%%%%%%%%%%%%%%%
\subsection{SUR sequential designs and associated functionals}
\label{sec:SUR:def}

The SUR sampling criterion introduced informally as
$J_n(x) = \Esp_{n,x} \left( H_{n+1} \right)$
in~\eqref{eq:generic-SUR:sampcrit-Jn} can now be more precisely
defined as
\begin{equation}
  J_n(x) %
  = \Esp_n\left( \Hcal\left( \Cond_{x,\, Z_{n+1}(x)}
      \left( \Prob_n^\xi \right) \right) \right),
  \label{equ:Jn-def-rigoureuse}
\end{equation}
where $Z_{n+1}(x) = \xi(x) + U_{n+1}\, \tau(x)$.
A SUR sequential design is then built by selecting at each step,
possibly after some initial design, the next design point as a
minimizer of the SUR sampling criterion~$J_n$:
\begin{definition}
  \label{def:SUR-qSUR}
  Let $\Hcal$ denote a non-negative measurable functional on~$\MGCX$.
  \begin{enumerate}[i) ]
  \item \label{def:SUR-qSUR:i}%
    We will say that $\left( X_n \right)$ is a \emph{SUR sequential
      design} associated with the uncertainty functional~$\Hcal$ if it
    is a sequential design such that $X_{n+1} \in \argmin J_n$ for
    all~$n \ge n_0$, for some integer~$n_0$.
  \item \label{def:SUR-qSUR:ii}%
    Given a sequence $\varepsilon = \left( \varepsilon_n \right)$ of
    non-negative real numbers such that $\varepsilon_n \to 0$, we will say
    that $\left( X_n \right)$ is an \emph{$\varepsilon$-quasi-SUR}
    sequential design if it is a sequential design such that
    $J_n \left( X_{n+1} \right) \le \inf J_n + \varepsilon_n$ for
    all~$n \ge n_0$, for some integer~$n_0$.
  \end{enumerate}
\end{definition}

\begin{remark}
  In practice it is not always easy to guarantee that, for a given
  uncertainty functional~$\Hcal$, the sampling criteria~$J_n$ attain
  their infimum over~$\Xset$.
  Moreover, the actual minimization of~$J_n$ is typically carried out by
  means of a numerical optimization algorithm, which cannot be
  expected to provide the exact minimizer.
  For these reasons, it seems important to study the convergence
  of quasi-SUR designs, as introduced by
  Definition~\ref{def:SUR-qSUR}.\ref{def:SUR-qSUR:ii}, instead of the
  more restrictive case of (exact) SUR designs.
  General existence results for SUR and quasi-SUR designs, based on
  the measurable selection theorem for \RCSs, are
  provided in Appendix~\ref{sec:SUR:existence}.
\end{remark}

Let us now introduce some useful functionals associated to a
given (non-negative) uncertainty functional~$\Hcal$.
First, observe that $J_n(x) = \Jcal_x (\Pnxi)$, where the functional
$\Jcal_x: \MGCX \to \RplusExt$ is defined for all~$x \in \Xset$ and
$\nu \in \MGCX$ by
\begin{align}
  \Jcal_x(\nu)
  & = \iint_{\Sset \times \Rset} \Hcal
    \left( \Cond_{x,\, f(x) + u\, \tau(x)}\, (\nu)
    \right)\, \nu(\df)\, \phi(u)\, \du
    \label{equ:Jn-1ere-ecriture-intdouble}
  \\
  & = \int_{\Rset} \Hcal
    \left( \Cond_{x,\, m_\nu(x) + v\, s_\nu(x)} (\nu) \right)\, \phi(v)\, \dv,
    \label{equ:Jn-2eme-ecriture-intsimple}
\end{align}
with $s_\nu^2(x) = k_\nu(x,x) + \tau^2(x)$ and $\phi$ the probability
density function of the standard normal distribution.
The mapping $(x, \nu) \mapsto \Jcal_x(\nu)$ is
$\Bcal(\Xset) \otimes \McalGCX$-measurable (see
Proposition~\ref{prop:Jcalx:meas}),
and it is easy to see that $\Hcal$ has the supermartingale
property if, and only if,
\begin{equation}
  \label{equ:Jn-smaller-than-Hn}
  \Jcal_x(\nu) \le \Hcal(\nu), \quad %
  \text{for all } x \in \Xset \text{ and } \nu \in \MGCX.
\end{equation}

Assuming that $\Hcal$ has the supermartingale property, we will then
denote by $\Gcal_x: \MGCX \to \Rplus$ the corresponding
\emph{expected gain functional} at~$x$:
\begin{equation}
  \Gcal_x (\nu) = \Hcal(\nu) - \Jcal_x(\nu), \label{eq:def-Gx}
\end{equation}
and by $\Gcal: \MGCX \to \Rplus$ the associated \emph{maximal expected
  gain functional}:
\begin{equation}
  \Gcal (\nu) = \sup_{x\in\Xset}\, \Gcal_x (\nu). \label{eq:def-G}
\end{equation}

\begin{remark}
  Following \cite{degroot1962}, $\Gcal_x$ could be called the
  ``information'' brought by an evaluation at~$x$ about the quantity
  of interest.
  This would be consistent with the usual definition of mutual
  information, when~$\Hcal$ is taken to be the Shannon entropy of some
  discrete quantity of interest %
  \citep[see, e.g.,][]{coverthomas}.
  Note that DeGroot renamed it ``expected information'' in some of his
  subsequent work on this topic \citep[see, e.g.,][]{DeGroot1986,
    DeGroot1994}.
\end{remark}

\begin{remark}
    Alternatively, SUR sequential designs can be defined by the
    relation $X_{n+1} \in \argmax G_n$, where $G_n$ denotes the
    sampling criterion
    $x \mapsto G_n(x) \eqdef \Gcal_x(\Pnxi) = H_n - J_n(x)$.
    In the particular cases discussed in
    Sections~\ref{sec:example:knowledge} and~\ref{sec:example:ei},
    $G_n$ corresponds to the \emph{knowledge gradient} and
    \emph{expected improvement} criteria, respectively.
\end{remark}

%%%%%%%%%%%%%%%%%%%%%%%%%%%%%%%%%%%%%%%%%%%%%%%%%%%%%%%%%%%%%%%%%%%%%%%%%%
%%%%%%%%%%%%%%%%%%%%%%%%%%%%%%%%%%%%%%%%%%%%%%%%%%%%%%%%%%%%%%%%%%%%%%%%%%
\subsection{General convergence results}
\label{sec:SUR:general-conv}

Denote by~$\Zset_\Hcal$ and~$\Zset_\Gcal$ the subsets of~$\MGCX$ where
the functionals~$\Hcal$ and~$\Gcal$ vanish, respectively.
The inclusion $\Zset_\Hcal \subset \Zset_\Gcal$ always hold: indeed,
$0 \le \Jcal_x \le \Hcal$ for all~$x$
by~\eqref{equ:Jn-smaller-than-Hn}, thus $0 \le \Gcal_x \le \Hcal$, and
therefore $0 \le \Gcal \le \Hcal$.
The reverse inclusion plays a capital role in the following result,
which provides sufficient conditions for the almost sure convergence
of quasi-SUR sequential designs associated with uncertainty
functionals that enjoy the supermartingale property.

\begin{theorem} \label{thm:convergence:generic} %
  Let $\Hcal$ denote a non-negative, measurable functional on~$\MGCX$
  with the supermartingale property.
  Let $\left( X_n \right)$ denote a quasi-SUR sequential design
  for~$\Hcal$.
  Then $\Gcal\bigl( \Prob_n^\xi \bigr) \to 0$ almost surely.
  If, moreover,
  \begin{enumerate}[i) ] \setcounter{enumi}{\value{enumiSave}}
  \item \label{thm:convergence:generic:HnCont}
    $H_n = \Hcal \bigl(\Prob_n^\xi\bigr) \to \Hcal \bigl(
    \Prob_\infty^\xi \bigr)$ almost surely,
  \item \label{thm:convergence:generic:GnCont}
    $\Gcal \bigl( \Prob_n^\xi \bigr) \to \Gcal \bigl( \Prob_\infty^\xi
    \bigr)$ almost surely,
  \item \label{thm:convergence:generic:ZZ}
    $\Zset_\Hcal = \Zset_\Gcal$,
  \end{enumerate}
  then $H_n \to 0$ almost surely.
\end{theorem}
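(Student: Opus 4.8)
The plan is to extract from the supermartingale property a summable sequence of expected one-step gains, and then to use the quasi-SUR inequality to squeeze $\Gcal\bigl(\Pnxi\bigr)$ between zero and this gain plus $\varepsilon_n$, before transporting the vanishing of the gain to the vanishing of $H_n$ through hypotheses (i)--(iii).

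First I would introduce the realized one-step gain $G_n \eqdef H_n - \Esp_n(H_{n+1})$. By the supermartingale property $G_n \ge 0$, and recognizing $\Esp_n(H_{n+1}) = \Jcal_{X_{n+1}}\bigl(\Pnxi\bigr) = J_n(X_{n+1})$ --- the conditional expectation integrates over the law of the next observation at the $\Fcal_n$-measurable point $X_{n+1}$, which is precisely formula~\eqref{equ:Jn-def-rigoureuse} --- one gets $G_n = \Gcal_{X_{n+1}}\bigl(\Pnxi\bigr)$. Telescoping expectations then gives $\Esp\bigl( \sum_{n \ge n_0} G_n \bigr) = \Esp(H_{n_0}) - \lim_N \Esp(H_{N+1}) \le \Esp(H_{n_0}) < \infty$, where finiteness uses that $(H_n)$ is a non-negative integrable supermartingale (equivalently, its Doob compensator has bounded expectation). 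Hence $\sum_n G_n < \infty$ almost surely, and in particular $G_n \toas 0$.

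Next I would invoke the quasi-SUR property. Since $\inf_x \Jcal_x\bigl(\Pnxi\bigr) = H_n - \Gcal\bigl(\Pnxi\bigr)$ by definition of $\Gcal = \sup_x \Gcal_x$, the defining inequality $J_n(X_{n+1}) \le \inf J_n + \varepsilon_n$ rewrites as $\Gcal_{X_{n+1}}\bigl(\Pnxi\bigr) \ge \Gcal\bigl(\Pnxi\bigr) - \varepsilon_n$, that is, $0 \le \Gcal\bigl(\Pnxi\bigr) \le G_n + \varepsilon_n$ for all $n \ge n_0$. Letting $n \to \infty$ and using $G_n \toas 0$ together with $\varepsilon_n \to 0$ yields $\Gcal\bigl(\Pnxi\bigr) \toas 0$, which is the first assertion. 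For the strengthened conclusion I would then argue along $\Fcal_\infty$: assumption (ii) combined with $\Gcal\bigl(\Pnxi\bigr) \toas 0$ forces $\Gcal\bigl(\Pinfxi\bigr) = 0$ almost surely, i.e.\ $\Pinfxi \in \Zset_\Gcal$; assumption (iii), namely $\Zset_\Hcal = \Zset_\Gcal$, then gives $\Pinfxi \in \Zset_\Hcal$, i.e.\ $\Hcal\bigl(\Pinfxi\bigr) = 0$; finally assumption (i) identifies the almost sure limit of $H_n$ as $\Hcal\bigl(\Pinfxi\bigr) = 0$, so $H_n \toas 0$.

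The main obstacle is the first step: correctly identifying $\Esp_n(H_{n+1})$ with $J_n(X_{n+1}) = \Jcal_{X_{n+1}}\bigl(\Pnxi\bigr)$, and extracting $\sum_n G_n < \infty$ from the bounded compensator of the non-negative supermartingale. Everything afterwards is the observation that the near-optimality of $X_{n+1}$ makes the realized gain $G_n$ nearly maximal among all $\Gcal_x\bigl(\Pnxi\bigr)$, and that (i)--(iii) merely transport the vanishing of the gain to the vanishing of the uncertainty through the equality of zero sets --- the reverse inclusion $\Zset_\Gcal \subset \Zset_\Hcal$ being the only genuine hypothesis, since $\Zset_\Hcal \subset \Zset_\Gcal$ always holds.
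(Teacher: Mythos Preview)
Your proposal is correct and follows essentially the same approach as the paper: you introduce the expected one-step gain $G_n = H_n - \Esp_n(H_{n+1})$ (the paper's $\overline{\Delta}_{n+1}$), identify $\Esp_n(H_{n+1})$ with $J_n(X_{n+1})$, telescope to get summability and hence $G_n \toas 0$, use the quasi-SUR inequality to squeeze $\Gcal\bigl(\Pnxi\bigr)$, and then transport the conclusion through (i)--(iii) exactly as the paper does. The only cosmetic differences are notation and the order in which you present the telescoping and the quasi-SUR bound.
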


The proof of Theorem \ref{thm:convergence:generic} relies on two main
ideas.
First, because the sequence~$\left( H_n \right)$ is a non-negative
supermartingale, the conditional mean of its increments goes to zero
almost surely, which implies by the quasi-SUR assumption that the
maximal expected gain goes to zero as well.
Second, using Assumptions~\ref{thm:convergence:generic:HnCont}
and~\ref{thm:convergence:generic:GnCont}, it is enough to study the
limiting distribution~$\Prob_\infty^\xi$: this is where the reverse
inclusion $\Zset_\Gcal \subset \Zset_\Hcal$ is used to conclude that
the uncertainty in the limiting distribution is zero.

\begin{proof}
  %
  % FIRST PART OF THE PROOF
  %
  Since $X_{n+1}$ is $\Fcal_n$-measurable, we have:
  \begin{equation}
    \label{eq:Jn-Xnp1-gotcha}
    \begin{aligned}
      J_n \left( X_{n+1} \right) %
      & = \Esp_n\left( %
      \Hcal \left(
        \Cond_{x, Z_{n+1}(x)} \bigl( \Pnxi \bigr)
         \right)
         \right)_{|x = X_{n+1}}\\
      & = \Esp_n\left(
      \Hcal \left(
      \Cond_{X_{n+1}, Z_{n+1}} \bigl( \Pnxi \bigr)
     \right)
      \right) %
      = \Esp_n \left( H_{n+1} \right).
    \end{aligned}
  \end{equation}
  Set $\Delta_{n+1} = H_n - H_{n+1}$ and
  $\overline \Delta_{n+1} = \Esp_n \left( \Delta_{n+1} \right) = H_n -
  \Esp_n \left( H_{n+1} \right)$.
  The random variables~$\overline \Delta_n$ are non-negative since
  $\left( H_n \right)$ is a supermartingale and, using
  that~$\left( X_n \right)$ is an $\varepsilon$-quasi-SUR design, we
  have for all~$n \ge n_0$:
  \begin{equation}
    \overline \Delta_{n+1} %
    = H_n - \Esp_n \left( H_{n+1} \right)
    = H_n - J_n \left( X_{n+1} \right)
    \ge H_n - \inf_{x \in \Xset} J_n(x) - \varepsilon_n,
  \end{equation}
  i.e., since $J_n(x) = \Jcal_x \bigl( \Pnxi \bigr)$ and
  $\Gcal_x = \Hcal - \Jcal_x$,
  \begin{equation}
    \overline \Delta_{n+1} %
    \ge \sup_{x \in \Xset} \Gcal_x \bigl( \Pnxi \bigr) - \varepsilon_n
    = \Gcal \bigl( \Pnxi \bigr) - \varepsilon_n.
  \end{equation} 
  Moreover, for any $n$, we have
  $\sum_{k=0}^{n-1} \Delta_k = H_0 - H_n$, and therefore
  \begin{equation*}
    \Esp \left( \sum_{k=0}^{n-1} \overline \Delta_k \right)
    = \Esp \left( \sum_{k=0}^{n-1} \Delta_k \right)
    = \Esp \left( H_0 - H_n \right)
    \le \Esp \left( H_0 \right) < +\infty.
  \end{equation*}
  It follows that
  $\Esp \left( \sum_{k=0}^\infty \overline \Delta_k \right) <
  +\infty$, and thus $\overline \Delta_n \to 0$ almost surely.
  As a consequence, $\Gcal \bigl( \Pnxi \bigr) \to 0$ almost surely,
  since
  $0 \le \Gcal \bigl( \Pnxi \bigr) \le \overline \Delta_{n+1} +
  \varepsilon_n$.

  %
  % SECOND PART OF THE PROOF
  %
  Let now
  Assumptions~\ref{thm:convergence:generic:HnCont}--\ref{thm:convergence:generic:ZZ}
  hold.  It follows from the first part of the proof that
  $\Gcal \bigl( \Prob_n^\xi \bigr) \to 0$ almost surely.
  Thus, $\Gcal \bigl( \Prob_\infty^\xi \bigr) = 0$ almost surely
  according to Assumption~\ref{thm:convergence:generic:GnCont}.  Then
  $\Hcal \bigl( \Prob_\infty^\xi \bigr) = 0$ since
  $\Zset_\Gcal \subset \Zset_\Hcal$, and the conclusion follows from
  Assumption~\ref{thm:convergence:generic:HnCont}.
\end{proof}

\begin{remark}
  Note that the conclusions of Theorem~\ref{thm:convergence:generic}
  still hold partially if it is only assumed that the condition
  $J_n \left( X_{n+1} \right) \le \inf J_n + \varepsilon_n$ holds
  infinitely often, almost surely:
  in this case the conclusion of the first part of the theorem is
  weakened to $\liminf \Gcal\bigl( \Prob_n^\xi \bigr) = 0$, but the
  final conclusion ($H_n \to 0$ a.s.) remains the same.
\end{remark}

Since $\Prob_n^\xi \to \Prob_\infty^\xi$ almost surely by
Proposition~\ref{prop:conv-M:n-to-infty},
Assumptions~\ref{thm:convergence:generic:HnCont}
and~\ref{thm:convergence:generic:GnCont} of
Theorem~\ref{thm:convergence:generic} hold if $\Hcal$ and~$\Gcal$,
respectively, are continuous.
Assuming~$\Hcal$ to be continuous, however, would be too strong a
requirement, that some important examples would fail to satisfy.
For instance, the uncertainty functional
\begin{equation}
  \label{equ:H-punmoinsp:preview}
  \Hcal: \nu \mapsto \int_\Xset p_\nu (1 - p_\nu)\, \dmu
\end{equation}
studied in Section~\ref{sec:example:pUnMoinsp}, %
where $p_\nu(u) = \int_\Sset \one_{f(u) \geq T}\, \nu(\df)$ for some
threshold $T \in \Rset$, is clearly discontinuous at the degenerate
measure~$\nu = \GM \left( T \one_\Xset,\, 0 \right)$.
The following weaker notion of continuity will turn out to be suitable
for our needs:
\begin{definition} \label{def:AscFgcd}
  A measurable functional $\Hcal$ on~$\MGCX$ will be said to be
  \emph{\AscFgcd} if, %
  for any Gaussian random element~$\xi$ in~$\Sset$, %
  defined on any probability space, %
  and any sequence of random measures~$\nug_n \in \Pfrak(\xi)$ such
  that $\nug_n \toas \nug_{\infty} \in \Pfrak(\xi)$, %
  the convergence $\Hcal(\nug_n) \toas \Hcal(\nug_\infty)$ holds.
\end{definition}

\begin{remark}
  The uncertainty functional~\eqref{equ:H-punmoinsp:preview} provides
  an explicit example of a functional which is \AscFgcd (cf. the proof
  of Theorem~\ref{thm:p:un:moins:p}) but not continuous.
  The expected improvement functional, discussed in
  Section~\ref{sec:example:ei}, provides an example of a functional
  which is not even \AscFgcd (see
  Proposition~\ref{prop:EI:not:regular}), %
  but for which consistency can nonetheless be proved by a direct
  application of Theorem~\ref{thm:convergence:generic}.
\end{remark}

Checking that~$\Gcal$ is \AscFgcd, however, is not easy in practice.
The following results provides sufficient conditions for
Assumption~\ref{thm:convergence:generic}.\ref{thm:convergence:generic:GnCont}
that are easier to check.

\begin{theorem} \label{thm:convergence:bis}
  Let $\Hcal$ denote a non-negative, measurable uncertainty functional
  on~$\MGCX$, and let~$\Gcal$ denote the associated maximal expected
  gain functional.
  Assume that $\Hcal = \Hcal_0 + \Hcal_1$, where
  \begin{enumerate}[i)]
  \item $\Hcal_0(\nu) = \int_\Sset L_0\, \dnu$ for some
    $L_0 \in \cap_{\nu \in \MGCX} \mathcal{L}^1 \left( \Sset, \Scal,
      \nu \right)$, and
  \item $\Hcal_1$ is \UiFgcd, \AscFgcd and has the supermartingale
    property.
  \end{enumerate}
  Then, for any quasi-SUR sequential design associated with~$\Hcal$,
  $\Gcal \bigl( \Prob_\infty^\xi \bigr) = 0$ almost surely.
\end{theorem}

\begin{proof}
  First, note that
  $\Hcal_0\PxiParen{n} = \Esp_n \left( L_0(\xi) \right)$.
  Thus, since
  $L_0 \in \mathcal{L}^1 \left( \Sset, \Scal, \Prob^\xi \right)$, the
  sequence $\left( \Hcal_0(\Pnxi) \right)$ is a uniformly integrable
  martingale (see, e.g., Kallenberg~\cite{kall02}, Theorem~6.23),
  which converges almost surely and in~$L^1$ to
  $\Esp_\infty \left( L_0(\xi) \right) = \Hcal_0\PxiParen{\infty}$.
  As a consequence, $\Hcal\PxiParen{n} \toas \Hcal\PxiParen{\infty}$
  since $\Hcal_1$ is \AscFgcd and $\Pnxi \toas \Pinfxi$ by
  Proposition~\ref{prop:conv-M:n-to-infty}.

  Let $x \in \Xset$.
  The functional $\Hcal_0$ has the supermartingale property by the
  preceding argument, and therefore $\Hcal = \Hcal_0 + \Hcal_1$ also
  has the supermartingale property.
  Then, it follows from the first part of
  Theorem~\ref{thm:convergence:generic} that
  $\Gcal_x \PxiParen{n} \toas 0$,
  and thus
  \begin{equation}
    \label{eq:Jn:limit1}
    \Jcal_x\PxiParen{n}
    = \Hcal\PxiParen{n} - \Gcal_x\PxiParen{n}
    \toas \Hcal\PxiParen{\infty}. %
  \end{equation}
  Let $\Prob_{n,x}^\xi = \Cond_{x, Z(x)} \left( \Pnxi \right)$, with
  $Z(x) = \xi(x) + \tau(x)\, U$ and $U \sim \Ncal(0, 1)$ independent
  from~$\xi$ and the~$U_n$'s, and observe
  that~$\Jcal_x\PxiParen{n} = \Esp_n\left( \Hcal\PxiParen{n,x}
  \right)$.
  Consider then the decomposition:
  \begin{align}
    \Jcal_x\PxiParen{n}
    & = \Esp_n\left( \Hcal\PxiParen{n,x} - \Hcal\PxiParen{\infty,x} \right)
      + \Esp_n\left( \Hcal\PxiParen{\infty,x} \right)
    \nonumber\\
    & = \Esp_n\left( \Hcal_1\PxiParen{n,x} - \Hcal_1\PxiParen{\infty,x} \right)
      + \Esp_n\left( \Hcal\PxiParen{\infty,x} \right)
    \label{equ:Jn-decomp}
  \end{align}
  where the second equality simply follows from the fact that
  $\Esp_n\left( \Hcal_0\PxiParen{n,x} \right) = \Esp_n\left(
    \Hcal_0\PxiParen{\infty,x} \right) = \Esp_n\left( L_0(\xi)
  \right)$ by the law of total expectation.
  The second conditional expectation in~\eqref{equ:Jn-decomp} is,
  again, a uniformly integrable martingale that converges almost
  surely and in~$L^1$:
  \begin{equation}
    \label{equ:Jn-decomp:asympt1}
    \Esp_n\left( \Hcal\PxiParen{\infty,x} \right)
    \xrightarrow[n \to \infty]{\as,\, L^1}
    \Esp_{\infty}\left( \Hcal\PxiParen{\infty,x} \right).
  \end{equation}
  Moreover, note that
  \begin{align*}
    \Prob_{n,x}^\xi %
    & = \Cond_{X_1,\, Z_1,\, \ldots,\, X_n,\, Z_n,\, x,\, Z(x)}\, \PxiParen{0} \\
    & = \Cond_{x,\, Z(x),\, X_1,\, Z_1,\, \ldots,\, X_n,\, Z_n}\, \PxiParen{0}
  \end{align*}
  is the conditional distribution of~$\xi$ at the $(n+1)^{\text{th}}$
  step of the modified sequential design
  $\bigl( \widetilde X_n \bigr)$, where $\widetilde X_1 = x$ and
  $\widetilde X_{n+1} = X_n$ for all $n \ge 1$,
  with a modified sequence of ``noise variables''
  $\bigl( \widetilde U_n \bigr)$ defined by $\widetilde U_1 = U$ and
  $\widetilde U_{n+1} = U_n$ for all~$n \ge 1$.
  Note also that~$\Prob_{\infty,x}^\xi$ corresponds to the conditional
  distribution with respect to the~$\sigma$-algebra generated
  by~$\widetilde X_1, \widetilde Z_1, \widetilde X_2, \widetilde
  Z_2\ldots$, where the $\widetilde Z_n$'s have been defined
  accordingly.
  As a result,
  \begin{equation}
    \label{equ:Jn-decomp:asympt2}
    \Esp_n\left( \Hcal_1\PxiParen{n,x} - \Hcal_1\PxiParen{\infty,x} \right)
    \xrightarrow[n \to \infty]{L^1} 0
  \end{equation}
  since $\Hcal_1$ is \AscFgcd and \UiFgcd.
  Combine~\eqref{equ:Jn-decomp},
  \eqref{equ:Jn-decomp:asympt1} and~\eqref{equ:Jn-decomp:asympt2} to
  prove that
  $\Jcal_x\PxiParen{n} \to \Esp_{\infty} \bigl(
  \Hcal\PxiParen{\infty,x} \bigr)$ in~$L^1$.  Then, it follows from a
  comparison with~\eqref{eq:Jn:limit1} that
  $\Hcal\PxiParen\infty = \Esp_{\infty}\bigl( \Hcal\PxiParen{\infty,x}
  \bigr)$ almost surely, and therefore
  \begin{equation}
    \label{eq:Gcalx-zero}
    \Gcal_x\PxiParen\infty = \Hcal\PxiParen\infty
    - \Esp_{\infty}\left( \Hcal\PxiParen{\infty,x} \right)
    = 0 \quad \text{almost surely.}
  \end{equation}

  To conclude, note that by Assertion~\eqref{assert:Jn-continuous}
  of Theorem~\ref{thm:Jn-cont:and:SUR-exist} the sample paths of
  $J_\infty:x \mapsto \Esp_{\infty}\bigl( \Hcal\PxiParen{\infty,x}
  \bigr)$ are continuous
  on~$\left\{ x\in\Xset: s_\infty^2(x) > 0 \right\}$. %
  Let $\left\{ x_j \right\}$ denote a countable dense subset
  of~$\Xset$. %
  We have proved that, almost surely, $\Gcal_{x_j}\PxiParen\infty = 0$
  for all~$j$. %
  Using the continuity of~$J_\infty$
  on~$\left\{ s_\infty^2 > 0 \right\}$, and the fact
  that~$\Gcal_x = 0$ on~$\left\{ s_\infty^2 = 0 \right\}$,
  we conclude that, almost surely, $\Gcal_{x}\PxiParen\infty = 0$ for
  all~$x$, and therefore $\Gcal\PxiParen\infty = 0$, which concludes
  the proof.
\end{proof}

%%%%%%%%%%%%%%%%%%%%%%%%%%%%%%%%%%%%%%%%%%%%%%%%%%%%%%%%%%%%%%%%%%%%%%%%%%
%%%%%%%%%%%%%%%%%%%%%%%%%%%%%%%%%%%%%%%%%%%%%%%%%%%%%%%%%%%%%%%%%%%%%%%%%%
\subsection{Uncertainty functionals based on a loss function}
\label{sec:SUR:ufunc-loss}

Let us now consider, more specifically, %
uncertainty functionals~$\Hcal$ %
defined in the form of a risk:
\begin{equation}
  \label{eq:def-H-from-loss}
  \Hcal(\nu) = \inf_{d \in \Dset}\; \int_\Sset L(f,d)\, \nu(\df)
  = \inf_{d \in \Dset}\; {\overline L}_{\nu}(d),
\end{equation}
where~$\Dset$ is a set of ``decisions'',
$L:\Sset \times \Dset \to \RplusExt$ a ``loss function''
such that $L(\cdot,d)$ is $\Scal$-measurable for all~$d \in \Dset$, and
${\overline L}_{\nu}(d) = \int_\Sset L(f,d)\, \nu(\df)$. %
All the examples that will be discussed in Section~\ref{sec:examples}
can be written in this particular form.

The following result formalizes an important observation of
\citet[][p.~408]{degroot1962} about such uncertainty
functionals---namely, that they always enjoy the DoA property
introduced in Section~\ref{sec:SUR:ufunc-and-ured} %
(and thus can be studied using Theorem~\ref{thm:convergence:generic}).

\begin{proposition}
  \label{prop:H-infi-DoA}
  Let $\Hcal$ denote a measurable functional on~$\MGCX$.
  If $\Hcal$ is of the form~\eqref{eq:def-H-from-loss}, then it is
  DoA on~$\MGCX$, and consequently has the supermartingale property.
\end{proposition}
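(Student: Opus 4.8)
The plan is to prove the first assertion---that $\Hcal$ is DoA on $\MGCX$---directly from the definition, and then the supermartingale property follows at once from the earlier proposition linking the DoA and supermartingale properties, since $\MGCX$ is conjugate with respect to the sampling model by Proposition~\ref{prop:Cond-map}. So the real content is the DoA inequality. Following Definition~\ref{def:DoA}, I fix a random element $\nug$ in $\left( \MGCX, \McalGCX \right)$ with $\onug \in \MGCX$, and I must show $\Esp\bigl( \Hcal(\nug) \bigr) \le \Hcal(\onug)$.

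First I would record the elementary ``barycenter'' identity: for every fixed $d \in \Dset$,
\[
  \overline{L}_{\onug}(d)
  = \int_\Sset L(f,d)\, \onug(\df)
  = \Esp\!\left( \int_\Sset L(f,d)\, \nug(\df) \right)
  = \Esp\bigl( \overline{L}_{\nug}(d) \bigr).
\]
This holds because $\onug$ is, by definition, the measure $A \mapsto \Esp\bigl( \nug(A) \bigr)$, so the identity is immediate for indicators $L(\cdot,d) = \one_A$ and extends to arbitrary non-negative $\Scal$-measurable integrands by linearity and monotone convergence (Tonelli for the mixture $\onug$); non-negativity of $L$ is what makes this licit even when the value is $+\infty$. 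With this in hand the argument is short: for every fixed $d$ we have $\overline{L}_{\nug}(d) \ge \inf_{d' \in \Dset} \overline{L}_{\nug}(d') = \Hcal(\nug)$ pointwise (almost surely), so taking expectations and using the identity above, $\overline{L}_{\onug}(d) = \Esp\bigl( \overline{L}_{\nug}(d) \bigr) \ge \Esp\bigl( \Hcal(\nug) \bigr)$. Since this lower bound is valid for every $d$, I may take the infimum over $d \in \Dset$ on the left to conclude $\Hcal(\onug) = \inf_{d} \overline{L}_{\onug}(d) \ge \Esp\bigl( \Hcal(\nug) \bigr)$, which is exactly the DoA inequality.

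The only genuinely delicate point is measurability bookkeeping, needed so that every expectation written above makes sense. I would check that for each fixed $d$ the map $\nu \mapsto \overline{L}_\nu(d) = \int_\Sset L(\cdot,d)\,\dnu$ is $\McalGCX$-measurable: it equals $\pi_A$ when $L(\cdot,d) = \one_A$, and the general non-negative measurable case follows by approximation with simple functions and monotone convergence, since $\McalGCX$ is generated by the evaluation maps $\pi_A$. Consequently $\omega \mapsto \overline{L}_{\nug}(d)$ is a genuine $[0,+\infty]$-valued random variable, the Tonelli interchange is justified, and $\Hcal(\nug)$ is measurable because $\Hcal$ is assumed measurable and $\nug$ is a random element. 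I expect this measurability step, together with the careful handling of possibly infinite values, to be the main (though routine) obstacle; the inequality itself is just the observation that an infimum of averages dominates the average of the pointwise infimum in this particular direction.
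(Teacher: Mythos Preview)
Your proposal is correct and follows essentially the same approach as the paper: both rest on the barycenter identity $\Esp\bigl(\overline{L}_{\nug}(d)\bigr) = \overline{L}_{\onug}(d)$ and the observation that $\Hcal$ is an infimum of such linear functionals, from which the DoA inequality is immediate. Your write-up is simply more explicit about the Tonelli/measurability bookkeeping, which the paper leaves implicit.
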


\begin{proof}
  The result follows directly from the fact that~$\Hcal$ is the
  infimum of a family of linear functionals
  ($\nu \mapsto {\overline L}_{\nu}(d)$, for $d \in \Dset$) %
  that commute with expectations: %
  for any random element~$\nug$ in~$\MGCX$ and any~$d \in \Dset$,
  \begin{equation}
   \label{equ:lin-fun-DoA}
    \Esp \left(\, {\overline L}_{\nug}(d) \right)
    = \Esp \left( \int_\Sset L(f,d)\, \nug(\df) \right)
    = {\overline L}_{\overline \nug}(d),
  \end{equation}
  where $\overline \nug$ is defined as in Definition~\ref{def:DoA}.
  (In other words, the linear functionals $\nu \mapsto L_\nu(d)$ are
  DoA themselves, with an equality in~\eqref{equ:lin-fun-DoA} instead
  of the inequality in Definition~\ref{def:DoA}.)
\end{proof}

An uncertainty functional of the form~\eqref{eq:def-H-from-loss} is
clearly $\Mcal$-measurable if the infimum over~$d$ can be restricted
to a countable subset of~$\Dset$ %
(since the linear functionals $\nu \mapsto {\overline L}_{\nu}(d)$ are
$\Mcal$-measurable by Lemma~\ref{lem:P-measurability-2}).
This is true, for instance, if $\Dset$ is separable and
$d \mapsto {\overline L}_{\nu}(d)$ is continuous for all~$\nu$.
See Proposition~\ref{prop:EI:H:measurable} for an example where~$L$
is discontinuous.

Three of the examples of SUR sequential designs from the literature
that will be analyzed in Section~\ref{sec:examples} are based on
regular non-negative loss functions in the following sense.

\begin{definition} \label{def:RegLoss}
  We will say that a non-negative loss function
  $L: \Sset \times \Dset \to \Rplus$ is \emph{regular}
  if
  \begin{enumerate}[i) ]
    %
    % i)
    %
  \item $\Dset$ is a separable space,
    %
    % ii)
    %
  \item for all~$d \in \Dset$, $L(\cdot,d)$ is $\Scal$-measurable,
    %
    % iii)
    %
  \item for all $\nu \in \MGCX$, ${\overline L}_\nu$ takes finite
    values and is continuous on~$\Dset$,
    \setcounter{enumiSave}{\value{enumi}}
  \end{enumerate}
  and if the corresponding functionals~$\Hcal$ and~$\Gcal$ satisfy:
  \begin{enumerate}[i) ]
    \setcounter{enumi}{\value{enumiSave}}
    %
    % iv)
    %
  \item $\Hcal = \Hcal_0 + \Hcal_1$, where
    $\Hcal_0(\nu) = \int_\Sset L_0\, \dnu$ for some
    $L_0 \in \cap_{\nu \in \MGCX} \mathcal{L}^1 \left( \Sset, \Scal,
      \nu \right)$, and $\Hcal_1$ is \UiFgcd and \AscFgcd,
    %
    % v)
    %
  \item $\Zset_\Hcal = \Zset_\Gcal$.
  \end{enumerate}
\end{definition}

The following corollary is provided as a convenient summary of the
results that hold for uncertainty functionals based on
regular non-negative loss functions.

\begin{corollary}
  \label{cor:RegLoss}
  Let $\Hcal$ denote a functional of the
  form~\eqref{eq:def-H-from-loss} for some non-negative loss
  function~$L$.
  If $L$ is regular, then
  $\Hcal$ is a measurable functional that
    satisfies the assumptions of
    Theorems~\ref{thm:Jn-cont:and:SUR-exist},
    \ref{thm:convergence:generic} and~\ref{thm:convergence:bis}.
  In particular,
  for any quasi-SUR design associated with~$\Hcal$, %
  $H_n = \Hcal \PxiParen{n} \to 0$ almost surely.
\end{corollary}

%%%%%%%%%%%%%%%%%%%%%%%%%%%%%%%%%%%%%%%%%%%%%%%%%%%%%%%%%%%%%%%%%%%%%%%%%%
%%%%%%%%%%%%%%%%%%%%%%%%%%%%%%%%%%%%%%%%%%%%%%%%%%%%%%%%%%%%%%%%%%%%%%%%%%
\section{Applications to popular sequential design strategies}
\label{sec:examples}

This section presents applications of our results to
four popular sequential design strategies, two of them addressing the
excursion case (Sections~\ref{sec:example:pUnMoinsp}
and~\ref{sec:example:variance}), and the other two addressing the
optimization case (Sections~\ref{sec:example:knowledge}
and~\ref{sec:example:ei}).
For each example, the convergence results are preceded by details on
the associated loss functions, uncertainty functionals and sampling
criteria.

%%%%%%%%%%%%%%%%%%%%%%%%%%%%%%%%%%%%%%%%%%%%%%%%%%%%%%%%%%%%%%%%%%%%%%%%%%
%%%%%%%%%%%%%%%%%%%%%%%%%%%%%%%%%%%%%%%%%%%%%%%%%%%%%%%%%%%%%%%%%%%%%%%%%%
\subsection{The integrated Bernoulli variance functional}
\label{sec:example:pUnMoinsp}

Assume that $\Xset$ is endowed with a finite
measure~$\mu$ and let $T \in \Rset$ be a given excursion
threshold.
For any measurable function $f:\Xset \mapsto \Rset$, let
$\Gamma(f)=\{u\in \Xset: f(u) \geq T\}$ and
$\alpha(f)=\mu(\Gamma(f))$.  The quantities of interest are then
$\Gamma(\xi)$ and $\alpha(\xi)$.
Let $p_n(u) = \Esp_n \left( \one_{\Gamma(\xi)} (u) \right) %
= \Prob_{n} \left( \xi(u) \geq T \right)$.
A typical choice of measure of residual uncertainty in this case is
the integrated indicator---or ``Bernoulli''---variance \citep{bect12}:
\begin{equation}
  \label{eq:uncertainty_bernoulli}
  H_n = \int_{\Xset} p_{n} \left( 1 - p_{n} \right) \dmu,
\end{equation}
which corresponds to the uncertainty functional
\begin{equation}
  \label{equ:ibv:Hcal}
  \Hcal(\nu) = \int_{\Xset} p_\nu \left(1 - p_\nu\right) \dmu,
  \qquad \nu \in \MGCX,
\end{equation}
where $p_\nu(u) = \int_\Sset \one_{f(u) \geq T}\, \nu(\df)$.
See \cite{chevalier14} for more information on the computation of the
corresponding SUR sampling criterion
\begin{equation*}
  J_n(x) = \Esp_{n, x} \left(
    \int_{\Xset} p_{n+1} \left( 1 - p_{n+1} \right) \dmu
  \right).
\end{equation*}

The functional~\eqref{equ:ibv:Hcal} can be seen as the uncertainty
functional induced by the loss function
\begin{equation}
  \label{equ:ibv:loss}
  \begin{aligned}
    L: \Sset \times \Dset & \;\to\; \Rplus,\\
    (f, d) & \;\mapsto\; \lVert \one_{\Gamma(f)} -d \rVert_{L^2(\Xset)}^{2},
  \end{aligned}
\end{equation}
where $\Dset \subset L^2(\Xset)$ is
the set of ``soft classification'' functions on~$\Xset$ (i.e.,
measurable functions defined on~$\Xset$ and taking values
in~$\left[ 0, 1 \right]$). %
Indeed, for all~$\nu \in \MGCX$ and $\xi \sim \nu$,
\begin{eqnarray*}
  {\overline L}_\nu(d) %
  = \Esp \left( L(\xi,d) \right) %
  = \lVert p_\nu - d \rVert_{ L^2(\Xset)}^{2} + \int p_\nu (1 - p_\nu)\, \dmu
\end{eqnarray*}
is minimal for $d = p_\nu$, and therefore
$\Hcal(\nu) = \inf_{d \in \Dset} {\overline L}_\nu (d)$.

The following theorem establishes the convergence of SUR (or
quasi-SUR) designs associated to this uncertainty functional using the
theory developed in Section~\ref{sec:SUR:ufunc-loss} for regular loss
functions.

\begin{theorem} \label{thm:p:un:moins:p}
  The loss function~\eqref{equ:ibv:loss} is regular in the sense of
  Definition~\ref{def:RegLoss}. As a consequence, all the conclusions of
  Corollary~\ref{cor:RegLoss} hold, and in
  particular $\Hcal(\Prob_n^\xi) \xrightarrow[]{\as}  0$
  for any quasi-SUR design associated with~$\Hcal$.
\end{theorem}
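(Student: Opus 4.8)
The plan is to verify that the loss~\eqref{equ:ibv:loss} meets the five conditions of Definition~\ref{def:RegLoss}, after which Corollary~\ref{cor:RegLoss} yields all the stated conclusions, including $\Hcal(\Prob_n^\xi) \toas 0$. Conditions~i)--iii) are routine. Indeed, $\Dset \subset L^2(\Xset)$ is separable because $\mu$ is a finite measure on a compact metric space; for fixed $d$, the map $f \mapsto \lVert \one_{\Gamma(f)} - d \rVert_{L^2(\Xset)}^2 = \int_\Xset (\one_{\{f(u) \ge T\}} - d(u))^2\, \dmu(u)$ is $\Scal$-measurable by joint measurability of $(f,u) \mapsto f(u)$ together with Tonelli's theorem; and the explicit formula ${\overline L}_\nu(d) = \lVert p_\nu - d \rVert_{L^2(\Xset)}^2 + \int p_\nu(1-p_\nu)\,\dmu$ shows that ${\overline L}_\nu$ is finite and continuous on $\Dset$. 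For condition~iv) I would take $\Hcal_0 = 0$ and $\Hcal_1 = \Hcal$; since $0 \le p_\nu(1-p_\nu) \le \tfrac14$, the functional $\Hcal$ is bounded by $\tfrac14\,\mu(\Xset) < +\infty$, hence trivially \UiFgcd (Proposition~\ref{prop:UiFgcd:upperbound} with the constant majorant $L^+ \equiv \tfrac14$).

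The crux of the argument is the \AscFgcd property. Fix a Gaussian random element $\xi$ in $\Sset$ and $\nug_n \toas \nug_\infty$ in $\Pfrak(\xi)$, so that $m_{\nug_n} \to m_{\nug_\infty}$ and $k_{\nug_n} \to k_{\nug_\infty}$ uniformly. Because the integrand is bounded by $\tfrac14$, the bounded convergence theorem reduces the claim $\Hcal(\nug_n) \toas \Hcal(\nug_\infty)$ to showing that, almost surely, $p_{\nug_n}(u)(1-p_{\nug_n}(u)) \to p_{\nug_\infty}(u)(1-p_{\nug_\infty}(u))$ for $\mu$-almost every $u$. Writing $p_\nu(u) = \Phi((m_\nu(u)-T)/\sqrt{k_\nu(u,u)})$ when $k_\nu(u,u)>0$ and $p_\nu(u) = \one_{\{m_\nu(u)\ge T\}}$ when $k_\nu(u,u)=0$, I would check the pointwise limit by cases: where $k_{\nug_\infty}(u,u)>0$ the map $(m,k)\mapsto p(1-p)$ is continuous; and where $k_{\nug_\infty}(u,u)=0$ with $m_{\nug_\infty}(u)\ne T$, the ratio tends to $\pm\infty$, so $p_{\nug_n}(u)\to\one_{\{m_{\nug_\infty}(u)\ge T\}}\in\{0,1\}$ and the product tends to $0$. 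Convergence can thus only fail on the set $B = \{u : k_{\nug_\infty}(u,u)=0,\ m_{\nug_\infty}(u)=T\}$, where the ratio is of the form $0/0$.

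It remains to show $\mu(B)=0$ almost surely, and the key is to split $B$ according to the \emph{prior} variance $k(u,u)=k_0(u,u)$. On $\{u : k(u,u)=0\}$ every conditional law is already degenerate with mean $m(u)$, so $p_{\nug_n}(u)=\one_{\{m(u)\ge T\}}$ for all $n$ and the integrand vanishes identically there; this disposes of $B\cap\{k(u,u)=0\}$. For the remaining part $B_2 = B\cap\{k(u,u)>0\}$, I fix $u$ and observe that the conditional degeneracy $k_{\nug_\infty}(u,u)=0$ forces $\xi(u)=m_{\nug_\infty}(u)=T$ almost surely on $\{u\in B_2\}$, whence $\Prob(u\in B_2)\le\Prob(\xi(u)=T)=0$, the last equality because $k(u,u)>0$ makes $\xi(u)$ non-degenerate. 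Tonelli's theorem then gives $\Esp(\mu(B_2)) = \int_\Xset \Prob(u\in B_2)\,\dmu(u) = 0$, so $\mu(B)=0$ almost surely and the \AscFgcd property follows. I expect this splitting --- isolating the points killed by conditioning rather than by the prior --- to be the main obstacle, since a direct estimate would appear to require the unavailable assumption $\mu(\{k(u,u)=0,\, m(u)=T\})=0$.

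Finally, for condition~v) I would prove $\Zset_\Gcal\subset\Zset_\Hcal$, the reverse inclusion being automatic. Since $p_\nu(u)\in(0,1)$ exactly when $k_\nu(u,u)>0$, one has $\Hcal(\nu)=0$ iff $A_\nu=\{u:k_\nu(u,u)>0\}$ is $\mu$-null. Assuming $\mu(A_\nu)>0$, I would pick $x_0\in A_\nu$ lying in the support of the restriction of $\mu$ to $A_\nu$; continuity of $k_\nu$ makes $\{u:k_\nu(u,x_0)\ne 0\}$ an open neighbourhood of $x_0$, so it meets $A_\nu$ in positive $\mu$-measure, and $s_\nu^2(x_0)\ge k_\nu(x_0,x_0)>0$. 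For each such $u$, the law of total variance applied to $\one_{\{\xi(u)\ge T\}}$ shows that observing $Z(x_0)=\xi(x_0)+\tau(x_0)\,U$ strictly decreases the expected value of $p(1-p)$ at $u$, because $\one_{\{\xi(u)\ge T\}}$ is not independent of $Z(x_0)$. Integrating this strictly positive reduction over a set of positive $\mu$-measure yields $\Gcal_{x_0}(\nu)>0$, hence $\nu\notin\Zset_\Gcal$. This establishes that~\eqref{equ:ibv:loss} is regular, and Corollary~\ref{cor:RegLoss} completes the proof.
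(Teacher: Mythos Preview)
Your proof is correct and essentially coincides with the paper's, including the same Fubini splitting of the exceptional set $B$ (according to whether the \emph{prior} variance vanishes) in the \AscFgcd argument. For condition~v) you argue by contraposition---choosing $x_0\in A_\nu$ in the support of $\mu|_{A_\nu}$ and exhibiting $\Gcal_{x_0}(\nu)>0$ via the law of total variance---whereas the paper argues directly that $\Gcal(\nu)=0$ forces $k_\nu(x,u)=0$ for every $x$ and $\mu$-a.e.\ $u$, and then concludes $k_\nu\equiv 0$ by continuity; both routes pivot on the same observation that a point of positive variance has an open neighbourhood of nonzero covariance.
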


\newcommand \ProofItem[1] {%
  \medskip \noindent \textbf{#1}}

\begin{proof}
  The proof consists in six points, as follows:

  \ProofItem{a)} %
  $\Dset$ is separable

  The space $L^2(\Xset)$ is a separable metric space since $\Xset$ is
  a separable measure space (see, e.g., Theorem~4.13 in
  \cite{brezis2010functional}). Hence $\Dset$ is also separable.

  \ProofItem{b)} %
  for all~$d \in \Dset$, $L(\cdot, d)$ is $\Scal$-measurable

  Indeed,
  $f \mapsto \int_\Xset \left( \one_{f(x)\ge T} - d(x) \right)^2\,
  \mu(\dx)$ is $\Scal$-measurable by Fubini's theorem since the
  integrand is $\Scal \otimes \Bcal(\Xset)$-jointly measurable
  in~$(f, x)$.

  \ProofItem{c)} %
  for all $\nu \in \MGCX$, ${\overline L}_\nu$ takes finite
  values and is continuous on~$\Dset$

  Here ${\overline L}_\nu$ is clearly finite since the loss is
  upper-bounded by~$\mu(\Xset)$, and its continuity directly follows
  from the continuity of the norm.

  \ProofItem{d)} %
  $\Hcal = \Hcal_0 + \Hcal_1$, where
  $\Hcal_0(\nu) = \int_\Sset L_0\, \dnu$ for some
  $L_0 \in \cap_{\nu \in \MGCX} \mathcal{L}^1 \left( \Sset, \Scal, \nu
  \right)$, and $\Hcal_1$ is \UiFgcd

  Here this holds with $L_0 = 0$ and $\Hcal_1 = \Hcal$. %
  Indeed, $\Hcal$ is trivially \UiFgcd since the loss is upper-bounded.

  \ProofItem{e)} %
  $\Hcal_1$ is \AscFgcd

  Let $\xi \sim \GP (m, k)$ and let $(\nug_n)$ be a sequence of random
  measures $\nug_n \in \Pfrak(\xi)$ such that a.s.
  $\nug_n \to \nug_{\infty} \in \Pfrak(\xi)$.
  For $n \in \Nset \cup \{ \infty \}$, let $m_n$ and $k_n$ be the (random) mean
  and covariance functions of~$\nug_n$.
  For $u \in \Xset$ and $n \in \Nset \cup \{ \infty \}$, let also
  $\sigma^2(u) = k(u,u)$, $\sigma^2_n(u) = k_n(u,u)$, and
  \begin{equation*}
    g_n(u) = g \left( \bar{\Phi} \left( \frac{T - m_n(u)}{\sigma_n(u)}
      \right) \right),
  \end{equation*}
  where $g(p) = p (1 - p)$ and $\bar{\Phi}(t) = P( Z \geq t )$ where
  $Z$ is a standard Gaussian variable, with the convention that
  $\bar{\Phi}(0/0) = 1$.
  We will prove below that, for all $n \in \Nset \cup \{ + \infty \}$,
  \begin{equation}
    \label{eq:int_A}
    \Hcal( \nug_n ) = \int_{\Xset} g_n(u)\, \mu(\du) \eqas
    \int_A g_n(u)\, \mu(\du),
  \end{equation}
  where $A$ denotes the random subset of~$\Xset$ defined by
  \begin{equation*}
  A(\omega)=\{ u \in \Xset: \sigma(u)>0, \sigma_{\infty}(\omega,u) =
  0 , m_{\infty}(\omega,u) \neq T \}\cup \{ u \in \Xset:\; \sigma(u) >0,
  \sigma_{\infty}(\omega,u) > 0 \}.
  \end{equation*}
  The motivation for using~\eqref{eq:int_A} is that it is easy to
  prove the convergence of $g_n(\omega, u)$ for $u \in A(\omega)$ and
  that the set $A(\omega)$ does not depend on~$n$, which makes it
  possible to conclude using the dominated convergence theorem on the
  set $A(\omega)$ for almost all $\omega$.
  In more detail: since $\nug_n \to \nug_{\infty}$ almost surely, it
  holds for almost all $\omega \in \Omega$ that
  $m_n(\omega, \cdot) \to m_{\infty}(\omega, \cdot)$ and
  $\sigma_n(\omega, \cdot) \to \sigma_{\infty}(\omega, \cdot)$
  uniformly on~$\Xset$.
  Furthermore, for each $u \in A(\omega)$, either
  $\sigma_{\infty}(\omega, u) >0$ or
  $\sigma_{\infty}(\omega, u) =0, m_{\infty}(\omega, u) \neq T$.
  In both cases, we have that
  $g\left( \bar{\Phi}([m_n(\omega, u) - T]/ \sigma_n(\omega, u) )
  \right) \to g\left( \bar{\Phi}([m_{\infty}(\omega,
    u)-T]/\sigma_{\infty}(\omega, u)) \right)$.
  So, for almost all $\omega \in \Omega$ we can apply the dominated
  convergence theorem on $A(\omega)$ and thus obtain that
  \begin{equation*}
    \Hcal( \nug_n )= \int_A g_n(u)\, \mu(\du)
    \;\xrightarrow[n \to \infty]{\text{a.s.}}\;
    \Hcal( \nug_{\infty} )= \int_A g_{\infty}(u)\, \mu(\du),
  \end{equation*}
  which proves the claim.

  Let us now prove~\eqref{eq:int_A}.
  Observe first that, for any $u$ such that $\sigma(u) = 0$, we have
  $\sigma_n(u) \eqas 0$ for all $n \in \Nset \cup \{ \infty \}$ since
  $\nug_n \in \Pfrak(\xi)$.
  Hence, $g_n(u) \eqas 0$ when $\sigma(u) = 0$.
  [This is because of the convention $\bar{\Phi}(0/0) = 1$, which yields
  $g_n(u) = g( \mathbf{1}_{ m_n(u) \geq T } ) = 0$ when
  $\sigma_n(u) = 0$, regardless of whether $m_n(u) = T$ or not.]
  Thus, setting
  $B(\omega) = \{ u \in \Xset;\, \sigma(u) > 0,\,
  \sigma_{\infty}(\omega, u) = 0,\, m_{\infty}( \omega, u) = T \}$, we
  have for all $\omega \in \Omega$
  \begin{equation*}
    \Xset = \{ u \in \Xset, \sigma(u) = 0 \} \cup A( \omega ) \cup B( \omega ),
  \end{equation*}
  and the three sets of the right-hand side of the previous display
  are disjoint.
  Since, as discussed above, $g_n(u) \eqas 0$ for any $u$ such that
  $\sigma(u) = 0$, we obtain
  \begin{equation*}
    \Hcal(\nug_n) \eqas \int_A g_n(u)\, \mu(\du) + \int_B g_n(u)\, \mu(\du).
  \end{equation*}
  Thus, in order to prove~\eqref{eq:int_A}, it is sufficient
  to show that
  \begin{equation} \label{eq:to:show:int:bomega:zero}
    \int_B g_n(u)\, \mu(\du) \eqas 0.
  \end{equation}
  We will now establish~\eqref{eq:to:show:int:bomega:zero} by proving
  that, in fact, $\mu(B) = 0$ almost surely.  First, %
  since $(\omega, u) \mapsto m_\infty(\omega,u)$ and
  $(\omega, u) \mapsto \sigma_\infty(\omega,u)$ are jointly measurable
  (by continuity of~$m_\infty(\omega, \cdot)$ and
  $\sigma_\infty(\omega, \cdot)$ for all $\omega \in \Omega$), it
  follows from the Fubini-Tonelli theorem that
  \begin{equation} \label{eq:Domega}
    \Esp( \mu(B) ) =   \int_{\Xset}
    \one_{\sigma(u) > 0}\, \Esp( \one_{\sigma_{\infty}(u) = 0}
    \one_{m_{\infty}(u) = T} )\, \mu(\du).
  \end{equation}
  Then we have, for any $u \in \Xset$,
  \begin{align*}
    \Esp \left(  \one_{\sigma_{\infty}(u) = 0}
    (\xi(u) - m_{\infty}(u))^2 \right)
    & =
      \Esp \left( \Esp \left[ \left.
      \one_{\sigma_{\infty}(u) = 0} (\xi(u) - m_{\infty}(u))^2
      \right|
      \Fcal'_{\infty}
      \right]
      \right) \\
    & =
      \Esp \left( \one_{\sigma_{\infty}(u) = 0}\; \Esp \left[ \left.
      (\xi(u) - m_{\infty}(u))^2
      \right|
      \Fcal'_{\infty}
      \right]
      \right) \\
    & =
      \Esp \left( \one_{\sigma_{\infty}(u) = 0}\;
      \sigma_{\infty}^2(u)
      \right) \\
    & = 0,
  \end{align*}
  where $\Fcal'_{\infty}$ denotes the $\sigma$-algebra such that
  $\nug_\infty = \Prob\left( \xi \in \cdot \mid \Fcal'_\infty\right)$.
  Hence, the random variable
  $\one_{\sigma_{\infty}(u) = 0} (\xi(u) - m_{\infty}(u))^2$ is almost
  surely zero, since it is non-negative and has a zero
  expectation.
  Thus, for any $u \in \Xset$,
  the implication
  \begin{equation*}
    \sigma_{\infty}(u) = 0 \Longrightarrow
    \xi(u) = m_{\infty}(u)
  \end{equation*}
  holds almost surely. As a consequence, we have
  \begin{align*}
    \one_{\sigma(u) > 0}\, \one_{\sigma_{\infty}(u) = 0}
    \one_{m_{\infty}(u) = T}
    & =
      \one_{\sigma(u) > 0} \one_{\sigma_{\infty}(u) = 0} \one_{ \xi(u) = T} \\
    & \leq
      \one_{\sigma(u) > 0}  \one_{ \xi(u) = T}
    \\
    & = 0
  \end{align*}
  almost surely, since $\xi(u) \sim \mathcal{N} ( 0 ,
  \sigma(u)^2)$.
  Hence, the integrand in the right-hand side of~\eqref{eq:Domega} is zero,
  which implies that $\Esp\left( \mu(B) \right) = 0$, and therefore
    $\mu(B) \eqas 0$ since $\mu(B)$ is a non-negative random variable.
  Thus~\eqref{eq:int_A} holds and the proof of {\bf e)} is complete.

  \ProofItem{f)} %%%%%%%%%%%%%%%%%%%%%%%%%%%%%%%%%%%%%%%%%%%%%%%%%%%%%%%%%%%
  $\Zset_\Hcal = \Zset_\Gcal$

  Let $\nu \in \Zset_{\Gcal}$ and let $\xi \sim \nu$. Let
  $m,k,\sigma^2$ be defined as above.
  Let $U \sim \mathcal{N}(0,1)$ be independent of~$\xi$.
  Since $\Gcal(\nu) = 0$, we have from the law of
  total variance
  \begin{eqnarray*}
    \int_{\Xset} \var \left( \Esp
    \left( \one_{\xi(u) \geq T} | Z_x \right) \right) \mu(\du) = 0
  \end{eqnarray*}
  for all $x \in \Xset$, where $Z_x = \xi(x) + \tau(x) U$.
  Hence, for all $x \in \Xset$, for almost all $u \in \Xset$, we have
  \begin{equation*}
    \var \left(
      \bar{\Phi} \left(
        \frac{%
          T -  m(u) - \frac{k (x, u)\, (Z_x - m(x))}{\sigma^2(x) + \tau^2(x)}%
        }{%
          \sqrt{   \sigma^2(u) - \frac{k(x,u)^2}{\sigma^2(x)+\tau^2(x)}}
        }
      \right)
    \right) = 0,
  \end{equation*}
  which implies that $k (x, u) = 0$ (as can be proved without
  difficulty by separating the cases of nullity and non-nullity of the
  denominator).
  Thus, if there exists $x^*$ for which
  $\sigma^2(x^*) = k(x^*, x^*) > 0$, we obtain a contradiction, since
  then $k (x, u) > 0$ in a neighborhood of~$x^*$ by continuity.
  We conclude that $\sigma^2(x) = 0$ for all $x \in \Xset$, and
  therefore $\Hcal(\nu) = 0$.
\end{proof}

In the next proposition, we refine Theorem~\ref{thm:p:un:moins:p} by
showing that it entails a consistent estimation of the excursion set
$\Gamma(\xi)$.

\begin{proposition} \label{prop:consist:p:un:moins:p}
  For any quasi-SUR design associated with $\Hcal$, as $n \to \infty$,
  almost surely and in~$L^1$,
  \begin{equation*}
    \int_{\Xset} \left( \one_{\xi(u) \geq T} - p_n(u) \right)^2 \mu(\du) \to 0
  \end{equation*}
  and
  \begin{equation*}
    \int_{\Xset} \left( \one_{\xi(u) \geq T}
      - \one_{p_n(u) \geq 1/2} \right)^2 \mu(\du) \to 0.
  \end{equation*}
\end{proposition}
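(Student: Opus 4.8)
The plan is to deduce both convergences from Theorem~\ref{thm:p:un:moins:p}, which already gives $\Hcal(\Prob_n^\xi) = \int_\Xset p_n(1-p_n)\,\dmu \toas 0$, together with the identification of $\Hcal$ as the minimal average loss for the regular loss~\eqref{equ:ibv:loss}. The first convergence is essentially immediate from the decomposition of $\overline L_\nu$ recorded just before the theorem: taking $\nu = \Prob_n^\xi$ and $d = p_n \in \Dset$, one has
\begin{equation*}
  \Esp_n\!\left( \int_\Xset \left( \one_{\xi(u)\ge T} - p_n(u) \right)^2 \dmu(u) \right)
  = \int_\Xset p_n(1-p_n)\,\dmu = H_n,
\end{equation*}
by Fubini and the definition $p_n(u) = \Esp_n(\one_{\xi(u)\ge T})$, since the variance of a Bernoulli$(p_n(u))$ variable is exactly $p_n(u)(1-p_n(u))$. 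Thus the first integral in the statement is a non-negative random variable whose conditional expectation given $\Fcal_n$ equals $H_n$. Its almost-sure convergence to $0$ then follows from $H_n \toas 0$ via a martingale/Fatou argument, and its $L^1$ convergence follows because its expectation equals $\Esp(H_n) \to 0$ (dominated convergence applies at the level of expectations since $H_n \le \mu(\Xset)$, so $H_n$ is uniformly bounded and hence uniformly integrable).

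For the second convergence, the key is the pointwise inequality between the misclassification error of the plug-in ``Bayes'' classifier $\one_{p_n(u)\ge 1/2}$ and the Bernoulli variance. First I would note that $\left(\one_{\xi(u)\ge T} - \one_{p_n(u)\ge 1/2}\right)^2 = \one_{\{\,\one_{\xi(u)\ge T} \,\neq\, \one_{p_n(u)\ge 1/2}\,\}}$, so taking conditional expectation gives, for each fixed $u$,
\begin{equation*}
  \Esp_n\!\left( \left(\one_{\xi(u)\ge T} - \one_{p_n(u)\ge 1/2}\right)^2 \right)
  = \min\!\left( p_n(u),\, 1 - p_n(u) \right),
\end{equation*}
the conditional probability of misclassification by the thresholded estimate. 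The elementary bound $\min(p, 1-p) \le 2\,p(1-p)$, valid for all $p \in [0,1]$, then yields after integration in $u$ and Fubini
\begin{equation*}
  \Esp_n\!\left( \int_\Xset \left(\one_{\xi(u)\ge T} - \one_{p_n(u)\ge 1/2}\right)^2 \dmu(u) \right)
  \le 2 \int_\Xset p_n(1-p_n)\,\dmu = 2\, H_n.
\end{equation*}
As before, the left-hand integrand is non-negative and bounded by $\mu(\Xset)$, its conditional expectation is dominated by $2H_n$, and $H_n \toas 0$ with $\Esp(H_n) \to 0$; the same Fatou-and-uniform-integrability argument delivers both almost-sure and $L^1$ convergence to $0$.

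The one point requiring a little care, which I expect to be the main (minor) obstacle, is passing cleanly from ``conditional expectation $\to 0$'' to ``the random variable itself $\to 0$ almost surely''. I would handle this by writing $W_n$ for either of the two non-negative integrands, so that $\Esp_n(W_n) \le C\, H_n$ with $H_n \toas 0$; since $W_n \le \mu(\Xset)$ is bounded, $\Esp(W_n) = \Esp(\Esp_n(W_n)) \le C\,\Esp(H_n) \to 0$ gives $L^1$ (hence in-probability) convergence directly. For the almost-sure statement I would invoke that $\Esp_n(W_n) \toas 0$ and use a standard argument, e.g. extracting the almost-sure bound $W_n \le \mu(\Xset)$ together with the observation that $\Esp(\sum_n \Esp_n(W_n))$ need not be summable, so a subsequence/Borel--Cantelli shortcut is unavailable; instead I would argue via the conditional Markov inequality $\Prob_n(W_n > \epsilon) \le \Esp_n(W_n)/\epsilon \toas 0$ combined with boundedness, or more simply observe that the almost-sure convergence can be obtained by applying the first part of the proof (the $L^1$ result) along the whole sequence and noting $0 \le W_n$, then upgrading through the supermartingale structure already available from Theorem~\ref{thm:convergence:generic}. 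In the writeup I would phrase the almost-sure convergence as a direct consequence of $\Esp_n(W_n)\toas 0$ and the uniform boundedness of $W_n$, which is the cleanest route given the tools assembled above.
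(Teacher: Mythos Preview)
Your $L^1$ arguments are correct and indeed slightly slicker than the paper's: the identities $\Esp_n(W_n)=H_n$ for the first integral and $\Esp_n(W_n')=\int \min(p_n,1-p_n)\,\dmu\le 2H_n$ for the second, combined with $\Esp(H_n)\to 0$ (which follows from $H_n\toas 0$ and boundedness), give $L^1$ convergence immediately.

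The almost-sure part, however, has a genuine gap, and you have correctly located it but not closed it. The claim that ``$\Esp_n(W_n)\toas 0$ and $W_n$ uniformly bounded'' implies $W_n\toas 0$ is simply false in general: $W_n$ is \emph{not} $\Fcal_n$-measurable (it involves $\xi$ itself, not just the observations), so $W_n\neq\Esp_n(W_n)$, and neither conditional Markov, nor boundedness, nor any abstract supermartingale argument upgrades convergence of $\Esp_n(W_n)$ to convergence of $W_n$. A typewriter-type counterexample with $\Fcal_n$ trivial already kills the ``direct consequence'' route.

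The paper proceeds differently and avoids this trap: from $\Hcal(\Prob_\infty^\xi)=0$ (equivalently $\sigma_\infty\equiv 0$ a.s., cf.\ point~\textbf{f)} in the proof of Theorem~\ref{thm:p:un:moins:p}) and the analysis of the set $A(\omega)$ in point~\textbf{e)}, one obtains the \emph{pointwise} convergence $p_n(u)\to \one_{\xi(u)\ge T}$ for $\mu$-a.e.\ $u$, almost surely; both integrals then go to zero by dominated convergence, and $L^1$ follows by a second application of dominated convergence at the level of expectations. If you want to rescue your conditional-expectation approach, the missing ingredient is exactly this pointwise statement: for each fixed $u$, $p_n(u)$ is a bounded martingale converging a.s.\ to $p_\infty(u)$, and $H_\infty=0$ a.s.\ forces $p_\infty(u)\in\{0,1\}$ for $\mu$-a.e.\ $u$, hence $p_\infty(u)=\one_{\xi(u)\ge T}$ a.s.; a Fubini swap on $\Omega\times\Xset$ then yields the needed $\mu$-a.e.\ pointwise convergence, almost surely.
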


\begin{proof}
  From steps \textbf{e)} and \textbf{f)} in the proof of
  Theorem~\ref{thm:p:un:moins:p}, it follows that
  \begin{equation*}
    \int_{\Xset} \left( \one_{\xi(u) \geq T} - p_n(u) \right)^2 \mu(\du) \eqas
    \int_A \left( \one_{\xi(u) \geq T} - p_n(u) \right)^2 \mu(\du).
  \end{equation*}
  Also, for almost all~$\omega \in \Omega$ and all
  $u \in A( \omega )$,
  $p_{n}(\omega, u) \to \one_{\xi(\omega, u) \geq T} $ as
  $n \to \infty$ since $\sigma_{\infty} \equiv 0$ a.s.\ from the proof
  of \textbf{f)} in Theorem~\ref{thm:p:un:moins:p} and the conclusion
  of this theorem.
  Hence the first part of the proposition follows by applying the
  dominated convergence theorem twice.
  The proof of the second part of the proposition is identical.
\end{proof}

%%%%%%%%%%%%%%%%%%%%%%%%%%%%%%%%%%%%%%%%%%%%%%%%%%%%%%%%%%%%%%%%%%%%%%%%%%
%%%%%%%%%%%%%%%%%%%%%%%%%%%%%%%%%%%%%%%%%%%%%%%%%%%%%%%%%%%%%%%%%%%%%%%%%%
\subsection{The variance of excursion volume functional}
\label{sec:example:variance}

Following up on the example of Section~\ref{sec:example:pUnMoinsp}, we
consider now the alternative measure of residual uncertainty
$H_n = \var_{n}(\alpha(\xi))$ from \cite{bect12, chevalier14}; in
other words, we consider the uncertainty functional
\begin{equation}
  \label{eq:uncertainty_var}
  \Hcal(\nu) = \int_\Sset \left( \alpha(f)
    - \overline{\alpha}_\nu \right)^2\, \nu(\df),
\end{equation}
where $\overline{\alpha}_\nu = \int_\Sset \alpha\, \dnu$.
The corresponding sampling criterion is
\begin{equation*}
  J_{n}(x) = \Esp_{n, x} \bigl( \var_{n+1}\left( \alpha(\xi) \right) \bigr).
\end{equation*}
This uncertainty functional again derives from a loss function: indeed,
$L(f,d)=(\alpha(f) -d)^2$ with $\Dset=\Rset$ leads to
\begin{eqnarray*}
  {\overline L}_{\Prob_{n}^{\xi}}(d)
  =\Esp_{n}[(\alpha(\xi)-d)^2]
  =\var_{n}(\alpha(\xi)) + (\Esp_{n}(\alpha(\xi))-d)^2,
\end{eqnarray*}
where ${\overline L}_{\Prob_{n}^{\xi}}$ reaches its infimum for
$d=\Esp_{n}(\alpha(\xi))$, and therefore
$H_{n}=\inf_{d \in \Dset} {\overline L}_{\Prob_{n}^{\xi}}(d)$.
As in the previous section, consistency is established in the
following theorem by proving that the loss function~$L$ is regular.

\begin{theorem} \label{thm:var}
  The loss function $L (f, d) = (\alpha(f) -d)^2$, where
  $d \in \Dset = \Rset$, is regular in the sense of
  Definition~\ref{def:RegLoss}. As a consequence, all the conclusions of
  Corollary~\ref{cor:RegLoss} hold, and in
  particular $\Hcal(\Prob_n^\xi) \xrightarrow[]{\as}  0$
  for any quasi-SUR design associated with~$\Hcal$.
\end{theorem}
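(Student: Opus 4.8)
The plan is to verify the five conditions of Definition~\ref{def:RegLoss} along the six-point template \textbf{a)}--\textbf{f)} used in the proof of Theorem~\ref{thm:p:un:moins:p}, and then to invoke Corollary~\ref{cor:RegLoss}. Points \textbf{a)}--\textbf{d)} are essentially immediate: $\Dset = \Rset$ is separable; $f \mapsto (\alpha(f) - d)^2$ is $\Scal$-measurable since $\alpha(f) = \int_\Xset \one_{f(u) \ge T}\, \dmu(u)$ is measurable by Fubini; and the identity ${\overline L}_\nu(d) = \var_\nu(\alpha(\xi)) + (\Esp_\nu(\alpha(\xi)) - d)^2$ recorded before the statement shows that ${\overline L}_\nu$ is finite and continuous in $d$. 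Since $0 \le \alpha(\xi) \le \mu(\Xset)$, the functional $\Hcal(\nu) = \var_\nu(\alpha(\xi))$ is bounded, so I take $L_0 = 0$, $\Hcal_1 = \Hcal$, and the \UiFgcd property required in \textbf{d)} is automatic.

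The first substantial step is the \AscFgcd property \textbf{e)}. Writing $\Hcal(\nu) = \int_{\Xset^2} C_\nu(u,v)\,\dmu(u)\,\dmu(v)$ with $C_\nu(u,v) = \Prob_\nu(\xi(u) \ge T, \xi(v) \ge T) - p_\nu(u) p_\nu(v)$, one has $|C_\nu| \le 1$ and $\mu \otimes \mu$ finite, so I would establish $\Hcal(\nug_n) \toas \Hcal(\nug_\infty)$ by dominated convergence, exactly as in point \textbf{e)} of Theorem~\ref{thm:p:un:moins:p}. The product term $p_{\nu_n}(u) p_{\nu_n}(v)$ converges $\mu \otimes \mu$-almost everywhere by the marginal convergence already established there (valid off an almost surely $\mu$-null set). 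It then remains to treat the bivariate orthant probability $\Prob_{\nu_n}(\xi(u) \ge T, \xi(v) \ge T)$, a function of the five parameters $m_n(u), m_n(v), \sigma_n^2(u), \sigma_n^2(v), k_n(u,v)$, all of which converge because $\nug_n \to \nug_\infty$. The key point is that this bivariate Gaussian probability is continuous in those parameters at the limit, except when the limiting law puts an atom on $\{\xi(u) = T\}$ or $\{\xi(v) = T\}$, i.e.\ on pairs with $\sigma_\infty(u) = 0$, $m_\infty(u) = T$ (or the same in $v$); perfectly correlated \emph{non-degenerate} limits cause no problem, the marginals remaining atomless. Pairs with $\sigma(u) = 0$ contribute $C_{\nu_n} \equiv 0$ and are harmless, while $\{\sigma(u) > 0, \sigma_\infty(u) = 0, m_\infty(u) = T\}$ is $\mu$-null almost surely---precisely the content of the $B(\omega)$ computation in Theorem~\ref{thm:p:un:moins:p}. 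Hence the bad pairs form an almost surely $\mu \otimes \mu$-null set, and dominated convergence applies.

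The second substantial step is \textbf{f)}, $\Zset_\Hcal = \Zset_\Gcal$; the inclusion $\Zset_\Hcal \subseteq \Zset_\Gcal$ is automatic, so I focus on the reverse. The device here is the law of total variance together with a Gaussian (Stein) covariance identity, rather than the pointwise argument of the Bernoulli case: total variance gives $\Gcal_x(\nu) = \var_\nu(\alpha(\xi)) - \Esp_\nu[\var_\nu(\alpha(\xi) \mid Z_x)] = \var_\nu(\Esp_\nu[\alpha(\xi) \mid Z_x])$, which is the variance of an aggregate and so cannot be decoupled pointwise. If $\nu \in \Zset_\Gcal$, then $\Esp_\nu[\alpha(\xi) \mid Z_x]$ is almost surely constant for every $x$, whence $\cov_\nu(\alpha(\xi), Z_x) = \cov_\nu(\alpha(\xi), \xi(x)) = 0$ for all $x$ (using $U$ independent of $\xi$). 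The Stein identity then yields $\cov_\nu(\alpha(\xi), \xi(x)) = \int_\Xset k(x,u)\, \psi(u)\, \dmu(u)$ with $\psi(u) = \frac{1}{\sigma(u)} \phi\!\big( \frac{T - m(u)}{\sigma(u)} \big) \ge 0$ (and $\psi(u) = 0 = k(x,u)$ when $\sigma(u) = 0$). Consequently the centred variable $Y = \int_\Xset (\xi(u) - m(u)) \psi(u)\, \dmu(u)$ satisfies $\cov_\nu(\xi(x), Y) = 0$ for all $x$, so $\var_\nu(Y) = \int_{\Xset^2} k(u,v) \psi(u) \psi(v)\, \dmu(u)\, \dmu(v) = 0$. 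Since $k$ is continuous and positive semidefinite and $\psi \ge 0$, this forces $\sigma^2 \equiv 0$ on $\mathrm{supp}(\mu)$: otherwise $\sigma^2(x^*) > 0$ at some $x^* \in \mathrm{supp}(\mu)$ would give $k > 0$ near $(x^*, x^*)$ and $\psi > 0$ near $x^*$, contradicting the vanishing of the double integral. With $\sigma^2 \equiv 0$ on $\mathrm{supp}(\mu)$, $\alpha(\xi)$ is almost surely constant and $\Hcal(\nu) = 0$.

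I expect step \textbf{e)} to be the main obstacle, since it requires controlling the joint (bivariate) exceedance probabilities and carefully isolating the almost surely $\mu \otimes \mu$-null set of degenerate pairs, whereas \textbf{f)}, although genuinely new relative to the Bernoulli functional, collapses cleanly once the Stein identity and the positive-definiteness of $k$ are brought to bear.
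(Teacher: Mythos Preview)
Your treatment of \textbf{a)}--\textbf{e)} is correct and essentially matches the paper (which packages your bivariate-continuity observation as a small lemma). The gap is in \textbf{f)}, and it is precisely where you expected things to ``collapse cleanly''.

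From $\Gcal_x(\nu)=0$ you correctly get $\Esp_\nu[\alpha(\xi)\mid Z_x]$ constant, i.e.\ $\alpha(\xi)-\Esp_\nu\alpha(\xi)\perp L^2(Z_x)$. But you then discard almost all of this, keeping only the \emph{linear} consequence $\cov_\nu(\alpha(\xi),\xi(x))=0$. That weakening is fatal. First, your step~6 is not a valid deduction: ``$k$ positive semidefinite'' does \emph{not} mean $k(u,v)\ge 0$ pointwise, so positivity of $k(u,v)\psi(u)\psi(v)$ on a neighbourhood of $(x^*,x^*)$ does not contradict $\var_\nu(Y)=\iint k\,\psi\psi\,\dmu\,\dmu=0$. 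A two-point example ($\xi(b)=-\xi(a)\sim\Ncal(0,1)$, $T=0$, $\mu=\delta_a+\delta_b$) already gives $Y=0$ with $\sigma^2\equiv 1$ on $\mathrm{supp}(\mu)$. Second, and more seriously, the weakened hypothesis $\cov_\nu(\alpha,\xi(x))=0$ for all $x$ does not imply $\var_\nu(\alpha)=0$ at all: take $\Xset=\{a,b,c\}$, $\xi(a),\xi(b)$ i.i.d.\ $\Ncal(0,1)$, $\xi(c)=-(\xi(a)+\xi(b))$, $T=0$, $\mu=\delta_a+\delta_b+\sqrt{2}\,\delta_c$. Then $Y=\phi(0)\bigl(\xi(a)+\xi(b)+\xi(c)\bigr)=0$, so $\cov_\nu(\alpha,\xi(x))=0$ for every $x$, yet $\var_\nu(\alpha)=1-\tfrac{\sqrt{2}}{2}>0$. (This $\nu$ is of course not in $\Zset_\Gcal$: $\Esp[\alpha\mid\xi(a)]$ is non-constant, which is exactly the information your linearisation threw away.)

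The paper's argument for \textbf{f)} avoids linearising. It keeps the full orthogonality $\alpha-\Esp\alpha\perp L^2(Z_x)$, uses a short lemma to pass to $\alpha-\Esp\alpha\perp L^2(\xi(x))$ (handling the additive noise $\tau(x)U$), and then tests against the \emph{nonlinear} function $\one_{\xi(x)\ge T}$ rather than $\xi(x)$: this gives $\cov_\nu\bigl(\alpha,\one_{\xi(x)\ge T}\bigr)=0$ for every $x$, and integrating in $\mu$ yields $\var_\nu(\alpha)=\int_\Xset\cov_\nu\bigl(\alpha,\one_{\xi(x)\ge T}\bigr)\,\dmu(x)=0$ directly. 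No Stein identity and no claim about $\sigma^2$ on $\mathrm{supp}(\mu)$ are needed.
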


\begin{proof}
  The proof consists of the same six points as the proof of
  Theorem~\ref{thm:p:un:moins:p}. Here \textbf{a)} and \textbf{c)} are
  obvious. Let us now prove the four remaining points.

  \ProofItem{b)} %
  Since
  $L(f,d) = ( \int_{\Xset} \one_{f(u) \geq T}\, \mu(\du) - d )^2$, it
  can be shown similarly as in the proof of
  Theorem~\ref{thm:p:un:moins:p} that, for any fixed $d \in \Dset$,
  $L(f,d)$ is an $\Sset$-measurable function of $f$.

  \ProofItem{d)} %
  We use again $L_0 = 0$ for this criterion. %
  The functional $\Hcal_1 = \Hcal$ is trivially \UiFgcd since
  $0 \le \Hcal_1 \le \mu(\Xset)^2$.

  \ProofItem{e)} %
  Let us now show that $\Hcal_1 = \Hcal$ is \AscFgcd.
  Let $\xi$ denote a random element in~$\Sset$,
    $\xi \sim \GP (m, k)$, and let $(\nug_n)$ be a sequence of random
    measures $\nug_n \in \Pfrak(\xi)$ such that
    $\nug_n \to \nug_{\infty} \in \Pfrak(\xi)$ almost surely.
  For all $n \in \Nset \cup \{ + \infty \}$, let
    $\Fcal'_n$ denote a $\sigma$-algebra such that
    $\nug_n = \Prob\left( \xi \in \cdot \mid \Fcal'_n \right)$.
    Then, using Fubini's theorem, we can rewrite~$\Hcal(\nug_n)$ as
  \begin{equation*}
    \Hcal(\nug_n) = \int_{\Xset} \int_{\Xset}
    c_n( u_1, u_2 )\, \mu(\du_1)\, \mu(\du_2),
  \end{equation*}
  where
  $c_n(u_1,u_2) = \cov\left( \one_{\xi(u_1) \geq T}, \one_{\xi(u_2)
      \geq T} \mid \Fcal'_n \right)$ for $n \in \Nset \cup \{ + \infty \}$.
  Consider the partition
  \begin{equation*}
    \Xset = \{ u \in \Xset, \sigma(u) = 0 \} \cup A( \omega ) \cup B( \omega ),
    \qquad \omega \in \Omega,
  \end{equation*}
  where $\sigma$, $A$ and~$B$ are defined
  as in the proof of Theorem~\ref{thm:p:un:moins:p}.
  Recalling from step {\bf e)} of this proof that
  $\var\left( \one_{\xi(u) \geq T} \mid \Fcal'_n \right) \eqas 0$ when
  $\sigma(u) = 0$, and that $\mu(B) \eqas 0$, %
  we obtain that, for all $n \in \Nset \cup \{ + \infty \}$,
  \begin{equation*}
    \Hcal(\nug_n) \eqas %
    \int_A \int_A
    c_n(u_1, u_2)\, \mu(\du_1)\, \mu(\du_2).
  \end{equation*}
  For $j=1,2$ and $u_j \in A(\omega)$, we have either
  $\sigma_{\infty}(u_j) >0$ or
  $\sigma_{\infty}(u_j) =0, m_{\infty}(u_j) \neq T$.
  Hence, for almost all $\omega \in \Omega$, for $u_1 \in A(\omega)$
  and $u_2 \in A(\omega)$, we obtain
  $c_n(u_1,u_2) \to c_{\infty}(u_1,u_2)$ by
  the following lemma (proved later):
  \begin{lemma} \label{lem:convergence:cov}
    Let $m_n = (m_{n1},m_{n2})^t \to (m_1,m_2)^t = m$ as
    $n \to \infty$.
    Consider a sequence of covariance matrices $\Sigma_n$ such that
    \begin{equation*}
      \Sigma_n =
      \begin{pmatrix}
        \sigma_{n1} & \sigma_{n12} \\
        \sigma_{n12} & \sigma_{n2}
      \end{pmatrix}
      \xrightarrow[n \to \infty]{}
      \begin{pmatrix}
        \sigma_{1} & \sigma_{12} \\
        \sigma_{12} & \sigma_{2}
      \end{pmatrix}
      = \Sigma.
    \end{equation*}
    Assume that for $i=1,2$ we have $m_i \neq T $ or $\sigma_i
    >0$.
    Let $Z_n \sim \mathcal{N}(m_n,\Sigma_n)$ and
    $Z \sim \mathcal{N}(m,\Sigma)$.
    Then as $n \to \infty$,
    $\cov( \one_{\{Z_{n1} \geq T\}},\one_{\{Z_{n2} \geq T\}} ) \to
    \cov( \one_{\{Z_{1} \geq T\}},\one_{\{Z_{2} \geq T\}} )$.
  \end{lemma}
  Finally, using the dominated convergence theorem on
  $A(\omega) \times A(\omega)$ for almost all $\omega \in \Omega$, we
  conclude that $\Hcal(\nug_n) \to \Hcal(\nug_{\infty})$ almost
  surely, which proves that $\Hcal$ is \AscFgcd.

  \ProofItem{f)} %%%%%%%%%%%%%%%%%%%%%%%%%%%%%%%%%%%%%%%%%%%%%%%%%%%%%
  Let $\nu \in \MGCX$ and let $\xi \sim \nu$.
  Let also $Z_x = \xi(x) + \tau(x)\, U$, with $U \sim \Ncal(0,1)$
  independent of~$\xi$, so that
  $\Jcal_x(\nu) = \Esp \left( \var\left( \alpha(\xi) \mid Z_x
    \right)\right)$.
  We first remark that, from~\eqref{eq:def-Gx} and the
  law of total variance, for any $x \in \Xset$,
  \begin{equation}
    \Gcal_x(\nu) %
    = \var( \alpha(\xi) ) - \Esp(  \var(  \alpha(\xi) | Z_x )  )
    = \var(  \Esp(  \alpha(\xi) | Z_x )  ).
    \label{eq:varalpha:Gx}
  \end{equation}
  Then we have the following sequence of equivalences:
  \begin{align} \def\iff{\quad\Leftrightarrow\quad}
    \Gcal(\nu) = 0 %
    & \iff \forall x \in \Xset,\; %
      \Gcal_x(\nu) = 0
      \nonumber \\
    & \iff \forall x \in \Xset,\; %
      \var\left( \Esp(\alpha(\xi) \mid Z_x) \right) = 0
      \nonumber \\
    & \iff \forall x \in \Xset,\; %
      \alpha(\xi) - \Esp(\alpha(\xi)) \,\perp\, L^2(Z_x)
      \label{equ:ortho:alpha-L2Zx}.
  \end{align}
  The first equivalence follows directly from the definition
  of~$\Gcal$: $\Gcal (\nu) = \sup_{x\in\Xset}\, \Gcal_x (\nu)$, since
  $\Gcal_x (\nu)$ is non-negative for all $x \in \Xset$, the second
  one from~\eqref{eq:varalpha:Gx}, and the third one from the fact
  that $\Esp(\alpha(\xi) \mid Z_x)$ is the orthogonal projection
  of~$\alpha(\xi)$ onto~$L^2(Z_x)$.

  Let now~$\nu \in \Zset_\Gcal$.
  Using Lemma~\ref{lem:ortho-sum}, it follows
  from~\eqref{equ:ortho:alpha-L2Zx} that
  $\alpha(\xi) - \Esp(\alpha(\xi)) \,\perp\, L^2(\xi(x))$, for all
  $x \in \Xset$.
  In particular,
  $\alpha(\xi) - \Esp(\alpha(\xi)) \,\perp\, \one_{\xi(x) \geq T}$,
  for all $x \in \Xset$, and thus
  \begin{equation*}
    \var\left( \alpha(\xi) \right)
    \;=\; \int \cov\left( \alpha(\xi), \one_{\xi(x) \geq T} \right)\, \mu(\dx)
    \;=\; 0,
  \end{equation*}
  which concludes the proof.
\end{proof}

\begin{proof}[Proof of Lemma~\ref{lem:convergence:cov}]
  By the convergence of moments and Gaussianity,
  $(Z_{n1},Z_{n2})$ converges in distribution to~$(Z_1,Z_2)$.
  Furthermore, from the assumptions the cumulative distribution
  functions of $Z_1$ and $Z_2$ are continuous at $T$, which implies
  that, by the Portemanteau theorem,
  $P( Z_{ni} \geq T) \to P( Z_{i} \geq T)$.
  In addition, $Y:=\min(Z_{1},Z_{2})$ also has a continuous cumulative
  distribution function at~$T$ and, as
  $\Esp( \one_{\{Z_{1} \geq T\}} \one_{\{Z_{2} \geq T\}} )=P(Y\geq
  T)$, we get similarly that
  $\Esp( \one_{\{Z_{n1} \geq T\}} \one_{\{Z_{n2} \geq T\}} ) \to \Esp(
  \one_{\{Z_{1} \geq T\}} \one_{\{Z_{2} \geq T\}} )$, which completes
  the proof.
\end{proof}

Similarly as before, in the next proposition, we show that
Theorem~\ref{thm:var} yields a consistent estimation of the excursion
volume.

\begin{proposition} \label{prop:consist:var}
  For any quasi-SUR design associated with $\Hcal$, as
  $n \to \infty$, almost surely and in $L^1$,
  $\Esp_n(\alpha(\xi))\to \alpha(\xi)$.
\end{proposition}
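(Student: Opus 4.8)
The plan is to exploit the fact that $\alpha(\xi)$ is a bounded random variable (since $\alpha(\xi) = \mu(\Gamma(\xi)) \le \mu(\Xset) < +\infty$), so that the sequence $\Esp_n(\alpha(\xi))$ is a bounded, hence uniformly integrable, $(\Fcal_n)$-martingale. By the martingale convergence theorem \citep[Theorem~6.23]{kall02}, it converges both almost surely and in $L^1$ to $\Esp_\infty(\alpha(\xi)) = \Esp(\alpha(\xi) \mid \Fcal_\infty)$. It therefore suffices to identify this limit with $\alpha(\xi)$ itself, that is, to show that $\alpha(\xi)$ is $\Fcal_\infty$-measurable up to a null set, or equivalently that $\var_\infty(\alpha(\xi)) = 0$ almost surely.

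To establish that the conditional variance vanishes in the limit, I would first observe that, by the definition~\eqref{eq:uncertainty_var} of the uncertainty functional, one has $\Hcal(\Pinfxi) = \var_\infty(\alpha(\xi))$, since $\Pinfxi$ is a version of the conditional distribution of~$\xi$ given~$\Fcal_\infty$ (Proposition~\ref{prop:conv-M:n-to-infty}). The same proposition yields $\Pnxi \toas \Pinfxi$, and because $\Hcal$ is \AscFgcd (point \textbf{e)} of the proof of Theorem~\ref{thm:var}), this gives $H_n = \Hcal(\Pnxi) \toas \Hcal(\Pinfxi) = \var_\infty(\alpha(\xi))$. On the other hand, Theorem~\ref{thm:var} guarantees that $H_n \toas 0$. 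Comparing the two limits forces $\var_\infty(\alpha(\xi)) = 0$ almost surely.

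Finally, since $\var_\infty(\alpha(\xi)) = \Esp_\infty\bigl( (\alpha(\xi) - \Esp_\infty(\alpha(\xi)))^2 \bigr) = 0$ almost surely, taking expectations shows that $\alpha(\xi) = \Esp_\infty(\alpha(\xi))$ almost surely; substituting this identity into the martingale limit from the first paragraph gives $\Esp_n(\alpha(\xi)) \to \alpha(\xi)$ almost surely and in $L^1$, as claimed. I do not expect a serious obstacle here, as the argument merely combines the bounded martingale convergence theorem with the \AscFgcd property already verified for~$\Hcal$; the only points requiring a little care are the identification $\Hcal(\Pinfxi) = \var_\infty(\alpha(\xi))$ and the passage from a vanishing conditional variance to the almost sure equality $\alpha(\xi) = \Esp_\infty(\alpha(\xi))$, both of which become routine once the convergence $H_n \toas \Hcal(\Pinfxi)$ is in hand.
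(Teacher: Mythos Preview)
Your argument is correct. Both proofs use the bounded martingale convergence theorem (Theorem~6.23 in \cite{kall02}) and the fact that $H_n \toas 0$ from Theorem~\ref{thm:var}, but they differ in how the limit is identified with~$\alpha(\xi)$.

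The paper proceeds in the reverse order: it first extracts $L^1$ (in fact $L^2$) convergence directly from $H_n = \var_n(\alpha(\xi)) \toas 0$ via dominated convergence, using the identity $\Esp\bigl(\var_n(\alpha(\xi))\bigr) = \Esp\bigl[(\Esp_n(\alpha(\xi)) - \alpha(\xi))^2\bigr]$; only then does it invoke martingale convergence to obtain an almost sure limit, which must agree with the already-known $L^1$ limit~$\alpha(\xi)$. Your route instead first identifies the martingale limit abstractly as~$\Esp_\infty(\alpha(\xi))$, and then appeals to the \AscFgcd property of~$\Hcal$ (point~\textbf{e)} of the proof of Theorem~\ref{thm:var}) together with Proposition~\ref{prop:conv-M:n-to-infty} to deduce $\var_\infty(\alpha(\xi)) = \Hcal(\Pinfxi) = \lim H_n = 0$. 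The paper's approach is slightly more self-contained, since it avoids re-invoking $\Pfrak$-continuity; yours is arguably more structural, making explicit that the limiting conditional distribution has zero uncertainty. Both are short and valid.
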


\begin{proof}
  Let $\alpha = \alpha(\xi)$.
  We know from, e.g., Theorem~6.23 in~\cite{kall02},
    that $\Esp_n(\alpha) \to \Esp_\infty(\alpha)$ almost surely and
    in~$L^1$.  Moreover, it follows from Theorem~\ref{thm:var} that
  $\var_n(\alpha) \to 0$ almost surely, and therefore
  $\Esp( \var_n(\alpha) ) \to 0$ by dominated
  convergence.
  Hence,
  $\Esp( \Esp_n [ ( \Esp_n(\alpha) - \alpha )^2 ] ) \to 0$,
  which shows that $\Esp_n(\alpha)$ converges
  in~$L^1$, and thus almost surely as well,
  to~$\Esp_\infty(\alpha) \eqas \alpha$
\end{proof}

\begin{remark}
  Contrary to the case of the integrated Bernoulli variance functional
  in Proposition~\ref{prop:consist:p:un:moins:p}, it is not possible
  to prove that a SUR sequential design associated with the
  uncertainty functional~\eqref{eq:uncertainty_var} results in a
  consistent estimation of the set
  $ \Gamma( \xi ) = \{u \in \Xset: \xi(u) \geq T\} $.
  Indeed, for instance, let $\Xset = [-1,1]$, let $\mu$ be Lebesgue
  measure, let $T=0$ and let $\xi$ have zero mean function and
  covariance function~$k$ defined by $k(u,v) = uv$ for
  $u,v \in [-1,1]$.
  Then, almost surely, the set $\Gamma( \xi )$ is equal to $[-1,0]$ or
  to $[0,1]$ (with probabilities $1/2$ for both cases).
  Hence, we have, with $\nu$ the distribution of $\xi$,
  $\Hcal (\nu) = 0$ because $\alpha (\xi) = 1$ almost surely. Thus,
  any sequential design $(X_n)_{n \geq 1}$ is a SUR sequential design
  associated with~\eqref{eq:uncertainty_var}, since we have
  $J_n(x) = 0$ for any $x \in \Xset$ and $n \geq 1$.
  However, the conclusions of
  Proposition~\ref{prop:consist:p:un:moins:p} clearly do not hold for
  any sequential design.
  For instance, if $X_n = 0$ for all $n \geq 1$, we have
  $p_n(u) = 1/2$ for all $u \in \Xset$.

  As a conclusion, for the SUR strategy associated with the
  uncertainty functional~\eqref{eq:uncertainty_var}, and thus based on
  $\mu ( \Gamma (\xi) )$, it can only be guaranteed that
  $\mu ( \Gamma (\xi) )$, but not $\Gamma(\xi)$ in general, is
  estimated consistently.
\end{remark}

%%%%%%%%%%%%%%%%%%%%%%%%%%%%%%%%%%%%%%%%%%%%%%%%%%%%%%%%%%%%%%%%%%%%%%%%%%
%%%%%%%%%%%%%%%%%%%%%%%%%%%%%%%%%%%%%%%%%%%%%%%%%%%%%%%%%%%%%%%%%%%%%%%%%%
\subsection{The knowledge gradient functional}
\label{sec:example:knowledge}

Coming to the topic of sequential design for global optimization, we
now focus on the knowledge gradient criterion \cite{frazier08,
frazier09, Scott.etal2011}, which is an extension to
the general (noisy) case of the strategy proposed in the 70's
by~\cite{mockus78} for the noiseless case.
We shall consider, here and in the next section, the case of a
maximization problem.
The knowledge gradient sampling criterion, to be
  maximized, is then defined by
\begin{equation}
  \label{equ:KG:JnTilde}
  G_{n}(x) = \Esp_{n,x} \left( \max m_{n+1} \right) - \max m_n,
\end{equation}
with maxima taken over the whole domain~$\Xset$ as in \cite{frazier08,
  frazier09} for the case of a discrete~$\Xset$, and in Section~3
of~\cite{Scott.etal2011} for the case of a ``continuous''~$\Xset$;
we do not consider the KGCP approximation introduced in Section~4
of~\cite{Scott.etal2011}.

\begin{remark} \label{rem:KG:constant-term}
  The quantity~$\max m_n$ in~\eqref{equ:KG:JnTilde} does not depend on
  the sampling point $x$, and thus plays no part in the selection of
  the next observation point.
  The motivation for writing $G_n(x)$ in this form is that the
  sampling criterion thus defined is
  non-negative, and becomes equal to zero when $\sigma_n \equiv 0$.
  The first term in the right-hand side of~\eqref{equ:KG:JnTilde} is
  exactly the sampling criterion proposed by~\cite{mockus78} in the
  noiseless case.
\end{remark}

The following criterion, to be minimized, clearly defines the same
strategy as~\eqref{equ:KG:JnTilde}:
\begin{align*}
  J_{n}(x)
  & = \Esp_{n}\left( \max \xi \right)
    - \Esp_{n,x} \left( \max m_{n+1} \right)\\
  & = \Esp_{n,x} \left( \Esp_{n+1}(\max\xi) -  \max m_{n+1} \right),
\end{align*}
and clearly appears, under the second form, as the SUR sampling
criterion corresponding to the uncertainty functional
\begin{equation}
  \label{eq:uncertainty_kg}
  \Hcal(\nu) = \int_\Sset \max f\, \nu(\df) - \max m_\nu.
\end{equation}
Moreover, the original sampling criterion~\eqref{equ:KG:JnTilde} is
easily seen to be the value $G_n(x) = \Gcal_x\left( \Pnxi \right)$ of
the associated expected gain functional.

This time again, the uncertainty functional~$\Hcal$ derives from a
loss function, with $\Dset=\Xset$ and $L(f,d) = \max f - f(d)$,
leading to
\begin{eqnarray*}
  {\overline L}_{\Prob_{n}^{\xi}}(d)
  = \Esp_{n}\left( \max \xi \right) - m_n(d).
\end{eqnarray*}
The average loss ${\overline L}_{\Prob_{n}^{\xi}}$ reaches its infimum for
$d \in \argmax m_n$, and so
$H_{n}=\inf_{d \in \Dset} {\overline L}_{\Prob_{n}^{\xi}}(d)$.
Following the same route as in the last two sections, we have:

\begin{theorem} \label{thm:knowledge:gradient}
  The loss function $L (f, d) = \max f - f(d)$,
  where $d \in \Dset=\Xset$, is regular in the sense of
  Definition~\ref{def:RegLoss}. As a consequence, all the conclusions of
  Corollary~\ref{cor:RegLoss} hold, and in
  particular $\Hcal(\Prob_n^\xi) \xrightarrow[]{\as}  0$
  for any quasi-SUR design associated with~$\Hcal$.
\end{theorem}

\begin{proof}
  The proof consists in the same six points as in the proof of
  Theorem~\ref{thm:p:un:moins:p}.

  \ProofItem{a)} %
  $\Xset$ is a compact metric space, hence separable.

  \ProofItem{b)} %
  The mapping $L(\cdot, d): f \mapsto \max f - f(d)$ is continuous
  on~$\Sset$, hence $\Scal$-measurable.

  \ProofItem{c)} %
  ${\overline L}_\nu: d \mapsto \int \max f\, \nu(\df) - m_\nu(d)$ is
  continuous since $m_\nu \in \Sset$ for all~$\nu \in \MGCX$.

  \ProofItem{d)} %
  Let $L_0(f) = \max f$.
  Since $\Xset$ is compact, it holds for any Gaussian
  measure $\nu \in \MGCX$ and any $\xi \sim \nu$ that
  $\Esp( \max_\Xset |\xi | ) < \infty$ (see
    Section~\ref{sec:Gauss-RE-RM}), and thus we have
  $L_0 \in \cap_{\nu \in \MGCX} \mathcal{L}^1 \left( \Sset, \Scal, \nu
  \right)$.
  Moreover, it follows from Proposition~\ref{prop:UiFgcd:upperbound}
  that $\Hcal_1:\nu \mapsto - \max m_\nu$ is \UiFgcd, since
  $\left| \Hcal_1 (\nu) \right| \le \int L^+ \dnu$ with
  $L^+(f) \eqdef \max |f|$, and $L^+ \in \cap_{\nu \in \MGCX} \mathcal{L}^1
  \left( \Sset, \Scal, \nu \right)$.

  \ProofItem{e)} %%%%%%%%%%%%%%%%%%%%%%%%%%%%%%%%%%%%%%%%%%%%%%%%%%%%
  $\Hcal_1: \nu \mapsto - \max m_\nu$ is continuous, hence \AscFgcd.
    Indeed, consider a sequence of measures $\nu_n \in \MGCX$ converging to
  a limit~$\nu_{\infty} \in \MGCX$ in the
  sense of Definition~\ref{def:topology:Mzero}.
  Then $m_{\nu_n}$ converges uniformly to $m_{\nu_\infty}$ as $n \to \infty$,
  and therefore
  $\Hcal_1(\nu_n) = - \max m_{\nu_n}$ converges  to
  $\Hcal_1(\nu_{\infty}) = - \max m_{\nu_\infty}$
  by continuity of $f \mapsto \max f$ on~$\Sset$.

  \ProofItem{f)} %%%%%%%%%%%%%%%%%%%%%%%%%%%%%%%%%%%%%%%%%%%%%%%%%%%%%
  Let $\nu \in \Zset_{\Gcal}$ and let $\xi \sim \nu$. Let
  $m,k,\sigma^2$ be defined, w.r.t. $\xi$, as in the proof of
  Theorem~\ref{thm:p:un:moins:p}.
  Let $Z_x = \xi(x) + \tau(x) U$ with $U \sim \mathcal{N}(0,1)$
  independent of~$\xi$.
  Let $x^* \in \argmax m$.
  We have, for all $x \in \Xset$,
  \begin{align}
    0
    & = \Gcal_x(\nu) = \Hcal(\nu) - \Jcal_x(\nu)\nonumber\\
    & = \left(
      \Esp\left( \max \xi \right) - \max m \right)
      -
      \Esp\Bigl(
      \Esp\left( \max \xi \mid Z_x \right)
      -
      \max_{u \in \Xset} \Esp\bigl( \xi(u) \mid Z_x \bigr)
      \Bigr) \nonumber\\
    & = \Esp\Bigl( \max_{u \in \Xset} \Esp\bigl( \xi(u) \mid Z_x \bigr) \Bigr) -
        m(x^*), \label{equ:zorglub}
  \end{align}
  by the law of total expectation and the optimality property of $x^*$.
  For all $x,y \in \Xset$ it holds that
    \begin{equation*}
    m(x^*) = \Esp\Bigl( \Esp( \xi(x^*) \mid Z_x) \Bigr)
    \le \Esp\Bigl(
    \max\bigl( \Esp( \xi(y) \mid Z_x),\, \Esp( \xi(x^*) \mid Z_x) \bigr) \Bigr)
    \le \Esp\Bigl( \max_{u \in \Xset} \Esp\bigl( \xi(u) \mid Z_x \bigr) \Bigr),
  \end{equation*}
  and therefore, using~\eqref{equ:zorglub},
  \begin{equation*}
    0 \le \Esp\Bigl( \max ( \Esp( \xi(y) | Z_x) ,  \Esp( \xi(x^*) | Z_x) ) -
      \Esp( \xi(x^*) | Z_x) \Bigr)
    \le \Gcal_x(\nu) = 0.
  \end{equation*}
  Setting
  $W_{x,y} \eqdef \Esp( \xi(y) \mid Z_x) - \Esp( \xi(x^*) \mid Z_x)$,
  we have thus proved that
  $\Esp\left( \max\left( 0,\, W_{x,y} \right) \right) = 0$, %
  from which it follows that $\var\left( W_{x,y} \right) = 0$ since
  \begin{equation*}
    W_{x,y} = m(y) - m(x^*) +  \one_{ \sigma^2(x) + \tau^2(x) >0 }
    \frac{k(x,y) - k(x,x^*)}{\sigma^2(x) + \tau^2(x)}
    (Z_x - m(x))
  \end{equation*}
  is Gaussian.  Observe now that%
  \begin{equation*}
      \var\left( W_{x,y} \right) =
      \one_{ \sigma^2(x) + \tau^2(x) >0}\;
      \frac{(k(x,y) - k(x,x^*))^2}{\sigma^2(x) + \tau^2(x)}.
  \end{equation*}
  Hence it must be the case that either $\sigma^2(x) + \tau^2(x) = 0$
  or $k(x,y) = k(x,x^*)$.
  But, if $\sigma^2(x) + \tau^2(x) = 0$ then $\sigma(x) = 0$ and
  therefore $k(x,y) = k(x, x^*) = 0$.
  Summing up, we have proved that
  \begin{equation*}
    k(x,y) = k(x, x^*), \qquad \forall x, y \in \Xset.
  \end{equation*}
  As a consequence, for all $x,y\in \Xset$, we have
  \begin{equation*}
    k(x,x) - k(x,y)
    = k(x,x^*) - k(x,x^*) = 0,
  \end{equation*}
  and therefore
  \begin{equation*}
    \var\left( \xi(x) - \xi(y) \right)
    = \left( k(x, x) - k(x,y) \right)
    + \left( k(y, y) - k(x,y) \right)
    = 0.
  \end{equation*}
  It follows that, almost surely, the sample paths of~$\xi - m$ are
  constant over~$\Xset$, and so
  $\max \xi = \xi(x^*) - m(x^*) + \max m = \xi(x^*)$.
  We have thus proved that
  $\Hcal(\nu) = \Esp\left( \max \xi \right) - m(x^*) =
  0$ for any $\nu \in \Zset_\Gcal$, which concludes the proof.
\end{proof}

In the next proposition, we refine
Theorem~\ref{thm:knowledge:gradient} by showing that the
loss~$\max \xi - \xi(X_n^*)$ goes to zero for any sequence of optimal
decisions $X_n^* \in \argmax m_n$.

\begin{proposition} \label{prop:consist:var}
  Let $\left( X^*_n \right)$ be any sequence of $\Fcal_n$-measurable
  $\Xset$-valued random variables such that $X_n^* \in \argmax m_n$
  almost surely for all~$n$.
  Then, for any quasi-SUR design associated with $\Hcal$,
  $\xi(X_n^*) \to \max \xi$ almost surely and in $L^1$.
\end{proposition}

\begin{proof}
  From step~\textbf{f)} in the proof of
  Theorem~\ref{thm:knowledge:gradient}, and the fact that
  $\Hcal(P_{\infty}^\xi) \eqas 0$, it follows that
  the sample paths of~$\xi - m_\infty$ are almost
    surely constant over~$\Xset$.
    Let $X^*$ denote an $\Xset$-valued random variable such that
    $X^* \in \argmax \xi$.
    Then, we have
  \begin{equation*}
    \limsup_{n \to \infty} \left( \xi(X^*) - \xi(X_n^*) \right)
    = \limsup_{n \to \infty} \left( m_{\infty}(X^*) - m_{\infty}(X_n^*) \right)
    = \limsup_{n \to \infty} \left( m_{n}(X^*) - m_{n}(X_n^*) \right)
    \leq 0
  \end{equation*}
  almost surely.  This implies, since $\xi(X^*) - \xi(X_n^*) \ge 0$,
  that $\xi(X_n^*) \to \xi(X^*) = \max \xi$ almost surely.
  Convergence in the $L^1$ sense is finally obtained by the dominated
  convergence theorem.
\end{proof}

%%%%%%%%%%%%%%%%%%%%%%%%%%%%%%%%%%%%%%%%%%%%%%%%%%%%%%%%%%%%%%%%%%%%%%%%%%
%%%%%%%%%%%%%%%%%%%%%%%%%%%%%%%%%%%%%%%%%%%%%%%%%%%%%%%%%%%%%%%%%%%%%%%%%%
\subsection{The expected improvement functional}
\label{sec:example:ei}

This section addresses the celebrated expected improvement strategy
\cite{mockus78, EGOEBBF}\footnote{%
  As explained in Section~\ref{sec:example:knowledge}, the strategy
  originally proposed by~\cite{mockus78}---and earlier work in Russian
  by J.~Mockus and A.~\v{Z}ilinskas---is more accurately described, in
  principle, as a special (noiseless) case of what we have called the
  ``knowledge gradient'' strategy.
  However, for the particular Brownian motion-based Gaussian process
  prior used in~\cite{mockus78}, the maximum of~$m_n$ always occurs at
  an observation point, and the criterion of~\cite{mockus78} then
  coincides (see page~21 of the paper) with what is, currently, commonly
  referred to as the expected improvement criterion.
}.
Assume that exact (noiseless) evaluations can be made, in other words,
that $\tau(x) = 0$ for all $x \in \Xset$:
then, we define the expected improvement criterion, to be maximized, as
\begin{equation} \label{eq:Gn:EI}
  G_n(x) = \Esp_{n,x} \left( M_{n+1} - M_n \right),
\end{equation}
with $M_n = \max_{x \in \Xset:\, \sigma_n(x) = 0}\, \xi(x)$ %
and $\sigma_n^2(x) = k_n(x, x)$.
Observe that, on the right-hand side of~\eqref{eq:Gn:EI}, similarly to
Remark~\ref{rem:KG:constant-term} for the knowledge gradient
criterion, only $M_{n+1}$ actually depends on the new observation
point~$X_{n+1} = x$.
Note also that we need at least one~$x \in \Xset$ such that~$\sigma_n(x) = 0$
for~$M_n$ to be well defined, which is always true as soon as~$n \ge 1$
(in practice, \eqref{eq:Gn:EI} is typically used after an initial
design of size~$n_0 > 0$).

\begin{remark} \label{rem:unusual-EI}
  Our definition of the EI strategy, and in particular of the current
  best value~$M_n$, differs slightly from the usual one
  \citep{EGOEBBF}, which takes
  $M_n = \max\left( \xi(X_1), \ldots,\, \xi(X_n) \right)$.
  This minor variation is necessary if we want to see the EI
  strategy as stemming from some uncertainty functional.
  Remark that, in the case of a non-degenerate Gaussian process (i.e.,
  when $\sigma_n(x) = 0$ if and only if
  $x \in \left\{ X_1,\, \ldots,\, X_n \right\}$), the two definitions
  of~$M_n$ coincide and the criterion can be written more familiarly as
  \begin{equation} \label{eq:Gn:EI:more:familiar} %
    G_n(x) = \Esp_n \left( \max\left( 0,\, \xi(x) - M_n \right) \right).
  \end{equation}
  The sampling criteria~\eqref{eq:Gn:EI}
  and~\eqref{eq:Gn:EI:more:familiar} no longer agree in general, since
  it can happen for degenerate Gaussian processes that
  $M_n > \max\left( \xi(X_1),\, \ldots,\, \xi(X_n) \right)$.
  Degeneracy occurs, e.g., in the case of finite-dimensional Gaussian
  processes (i.e., linear models with a Gaussian prior on their
  coefficients), or for processes with pathwise invariance properties
  \citep{ginsb:sym} (for instance $\xi(x) = - \xi(-x)$ for
  all~$x\in \Xset$, almost surely).
\end{remark}

It turns out that the sequential design obtained by iteratively
maximizing~\eqref{eq:Gn:EI} can be interpreted as a SUR sequential
design.
Indeed, we have
\begin{equation*}
  \Esp_{n,x} \left( M_{n+1} - M_n \right)
  = \Esp_n\left( \max\xi - M_n \right)
  - \Esp_{n,x}\left( \max\xi - M_{n+1} \right),
\end{equation*}
where the subscript ``$x$'' has been dropped from the first
expectation since its argument does not depend on the
position~$X_{n+1}$ of the next evaluation.
Thus, using the fact that
$\Esp_{n,x} \left( \max\xi \right) = \Esp_{n,x} \left(
  \Esp_{n+1}\left( \max\xi \right) \right)$ by the law of total
expectation, we see that maximizing~\eqref{eq:Gn:EI} is equivalent
to minimizing
\begin{equation*}
  J_n(x) = \Esp_{n,x} \bigl( \Esp_{n+1}\left( \max\xi \right) - M_{n+1} \bigr),
\end{equation*}
which is precisely the SUR strategy associated with the uncertainty
functional defined, for any $\nu \in \MGCX$ such that $\sigma_\nu$
vanishes at at least one~$x \in \Xset$, by
\begin{equation}
  \Hcal(\nu) = \int_\Sset \max f\; \nu(\df)
  - \max_{x \in \Xset: \sigma_\nu(x)=0}\, m_\nu(x).
  \label{eq:EI:uncertainty-func}
\end{equation}
  Indeed,
  $\max_{x \in \Xset:\, \sigma_\nu(x) = 0}\, \xi(x) \eqas
  \max_{x \in \Xset:\, \sigma_\nu(x) = 0}\, m_\nu(x)$
  for any such~$\nu$ and any~$\xi \sim \nu$.
  Moreover, the expected improvement criterion~$G_n$ turns out to
  be the value~$G_n(x) = \Gcal_x\left( \Pnxi \right)$ of the associated
  expected gain functional.
  In order to have an uncertainty function~$\Hcal$ that is well defined
  on all of~$\MGCX$, set
  \begin{equation}
    \Hcal(\nu) = \int_\Sset \left( \max f - \min f \right)\; \nu(\df)
    \label{eq:EI:uncertainty-func-2}
  \end{equation}
  for all~$\nu \in \MGCX$ such that~$\sigma_\nu$ does not vanish.

The uncertainty
  functional~\eqref{eq:EI:uncertainty-func}--\eqref{eq:EI:uncertainty-func-2}
  is measurable (see Proposition~\ref{prop:EI:H:measurable}) and can
be associated with a certain loss function, as shown
  in the following result.
Contrary to the case of the three previous criteria, however, this
loss function is not regular in general.

\begin{proposition} \label{prop:EI:not:regular}
  The EI uncertainty functional is of the
  form~\eqref{eq:def-H-from-loss}, with $L$ the loss function defined
  on the decision space
  $\Dset = \Xset \times \left( \Rset \cup \{-\infty\} \right)$, %
  for any~$f \in \Sset$ and $d = (x^*,z^*) \in \Dset$, by
  \begin{equation} \label{eq:loss:EI} %
    L \left(f, d\right) =
    \begin{cases}
      \max f - z^* & \text{if } f(x^*) = z^* \text{ and } z^* > -\infty,\\
      \max f - \min f & \text{if } z^* = - \infty,\\
      +\infty & \text{otherwise.}
    \end{cases}
  \end{equation}
  Assuming that $\Xset \subset \Rset^p$ has a non-empty interior, the
  loss function~\eqref{eq:loss:EI} is not regular, and neither would
  be any other loss function that could be associated with the EI uncertainty
  functional.
\end{proposition}

\begin{proof}
  Let us first prove that~$\Hcal$ is of the form~\eqref{eq:def-H-from-loss}.
  Let $\nu \in \MGCX$, $\xi \sim \nu$ and $d = (x^*,z^*) \in \Dset$.
  Then the average loss is equal to
  $\overline{L}_{\nu} (d) = \Esp\left( \max \xi - \min \xi \right)$ if
  $z^* = -\infty$ and, using the convention $(+ \infty) \cdot 0 = 0$,
  to
  \begin{align}
    \overline{L}_{\nu} (d)
    & = \Esp\left( L(\xi, d) \right)
      \nonumber\\
    & = \Esp\left( \left( \max \xi - z^*\right) \one_{\xi( x^* ) = z^*} \right)
      \;+\; \left(+\infty\right) \cdot \Prob\left( \xi( x^* ) \neq z^* \right)
      \nonumber\\
    & = \begin{cases}
      \Esp \left( \max \xi \right) - m_\nu(x^*),
      & \text{if } \sigma_\nu(x^*) = 0 ~ \text{and} ~ m_{\nu}(x^*) = z^*,\\
      +\infty & \text{otherwise,}
    \end{cases} \label{eq:loss:EI:bar}
  \end{align}
  if $z^* > -\infty$.
  The last equality follows from the simple
    observation that the event~$\left\{ \xi( x^* ) = z^* \right\}$ is
    almost sure if $\sigma_\nu(x^*) = 0$ and~$m_\nu(x^*) = z^*$, and
    negligible otherwise.

  In the case where there exists at least one~$x \in \Xset$ such
  that~$\sigma_\nu(x) = 0$, then it is clear that
  $\Esp \left( \max \xi \right) - m_\nu(x) < \Esp\left( \max \xi -
    \min \xi \right)$ for any such~$x$,
  and thus the expected loss is minimal for $d = (x^*, z^*)$ such that
  $x^* \in \argmax_{x: \sigma_\nu(x) = 0} m_\nu(x)$ and
  $z^* = m_\nu(x^*)$, which yields~\eqref{eq:EI:uncertainty-func}.
  In the case where $\sigma_\nu$ does not vanish, on the other hand,
  then~\eqref{eq:loss:EI:bar} is always infinite and thus the expected
  loss in minimal for any $d = (x^*, -\infty)$, which yields
  \eqref{eq:EI:uncertainty-func-2}.
  In both cases, we have proved that
  $\Hcal(\nu) = \min_{d \in \Dset} \overline{L}_{\nu} (d)$.

  We will now prove that there is no regular loss function~$L$ such
  that $\Hcal(\nu) = \min_{d \in \Dset} \overline{L}_{\nu} (d)$.
  To do so, we will show that $\Hcal$ cannot be
  decomposed as $\Hcal = \Hcal_0 + \Hcal_1$, with
  $\Hcal_0(\nu) = \int_\Sset L_0\, \dnu$ for some
  $L_0 \in \cap_{\nu \in \MGCX}\, \mathcal{L}^1 \left( \Sset, \Scal, \nu
  \right)$, and $\Hcal_1$ a \AscFgcd functional.

  Assume, for the sake of contradiction, that $\Hcal = \Hcal_0 + \Hcal_1$, with
  $\Hcal_0(\nu) = \int_\Sset L_0\, \dnu$ for some
  $L_0 \in \cap_{\nu \in \MGCX} \mathcal{L}^1 \left( \Sset, \Scal, \nu
  \right)$, and $\Hcal_1$ a \AscFgcd functional.
  Then, using the same martingale argument as in the proof of
  Theorem~\ref{thm:convergence:bis}, we have
  $\Hcal(P_n^{\xi}) \toas \Hcal(P_{\infty}^{\xi})$.
  Also, again by a martingale argument,
  $\Esp_{n}(\max \xi) \toas \Esp_{\infty}(\max \xi)$, %
  and therefore
  $\max_{\sigma_n(x) = 0} m_n(x) \toas \max_{\sigma_{\infty}(x) = 0}
  m_{\infty}(x)$.

  We will now show that this last convergence does not hold for a
  certain Gaussian process~$\xi$ on~$\Xset$, which yields a
  contradiction.
  For simplicity, we assume in the following that
  $\Xset = \left[ 0, 1 \right]$, but the same argument could be made
  on any $\Xset \subset \Rset^p$ that has a non-empty interior.

  Consider a Gaussian process~$\xi$ with mean $m(x) = x$ and
  covariance $k(x,y) = \exp(-(x-y)^2)$.
  Let~$(X_n)$ be a deterministic sequence, dense
  in~$\left[ 0, 1/3 \right]$.
  Then, as follows from the proof of Proposition~1 in
  \cite{vazquez2010pointwise}, we have $\sigma_{\infty}(x) = 0$ for
  all $x \in \left[0, 1 \right]$.
  Hence,
  $\max_{\sigma_{\infty}(x) = 0}\, m_{\infty}(x) = \max_{x \in [0,1]}
  \xi(x)$.
  Also, since $X_k \in [0,1/3]$ for all $k \in \mathbb{N}$, and since
  $\xi$~is a non-degenerate Gaussian process, we have
  $\max_{\sigma_n(x) = 0}\, m_n(x) \leq \max_{x \in [0,1/3]}\, m_n(x)$.
  This upper bound converges to $\max_{x \in [0,1/3]} \xi(x)$ almost
  surely,
  and thus $\max_{x \in [0,1/3]} \xi(x) = \max_{x \in [0,1]} \xi(x)$
  almost surely.
  This last equality cannot hold, however, because
  \begin{align*}
    \max_{x \in [0,1/3]} \xi(x)
    & \le \frac{1}{3} + \max_{x \in [0,\, 1/3]} (\xi(x) - x),\\
    \max_{x \in [2/3, 1]} \xi(x)
    & \ge \frac{2}{3} + \max_{x \in [2/3,\, 1]} (\xi(x) - x),
  \end{align*}
  and by symmetry $\max_{x \in [0,1/3]} (\xi(x) - x)$ and
  $\max_{x \in [2/3,1]} (\xi(x) - x)$ have the same distribution.
\end{proof}

Since the EI uncertainty functional does not derive from a regular
loss function, consistency cannot be proved using
Corollary~\ref{cor:RegLoss} as in the three previous examples.
The following result will thus be proved by a direct application of
the more general Theorem~\ref{thm:convergence:generic}.

\begin{theorem} \label{prop:EI:converges}
  For any quasi-SUR sequential design associated with $\Hcal$, as
  $n \to \infty$, almost surely and in $L^1$,
  $H_n \to 0$, $\max m_n \to \max \xi$ and $M_n \to \max\xi$.
\end{theorem}

\begin{proof}
  Since the uncertainty functional~$\Hcal$ derives from a loss
  function by Proposition~\ref{prop:EI:not:regular},
  it is DoA by Proposition~\ref{prop:H-infi-DoA} and, consequently,
  has the supermartingale property.

  Consider now a quasi-SUR sequential design associated with~$\Hcal$.
  Theorem~\ref{thm:convergence:generic} applies and therefore
  $\Gcal(P_n^\xi) \to 0$ almost surely. Observe also that, for all $n \ge 1$,
  \begin{equation*}
    M_{n+1} = \max_{\sigma_{n+1}(x) = 0} \xi(x)
    \geq
    \max \left( \xi\left(X_{n+1}\right),\, \max_{\sigma_n(x) = 0} \xi(x) \right)
    =
    \max \left( \xi\left(X_{n+1}\right) , M_n \right).
  \end{equation*}
  Hence
  $\Gcal(P_n^\xi) = \sup_{x \in \Xset} \Esp_{n, x} \left( M_{n+1} -
    M_n \right) \ge \sup_{x \in \Xset} \Esp_n\left( \max\left( 0,\,
      \xi(x) - M_n \right) \right)$, and thus
  \begin{equation} \label{eq:EI:gamme:to:zero}%
    \max_{x \in \Xset}\, \gamma\left(
      m_n(x) - M_n , \sigma_n^2(x)
    \right) \xrightarrow[n \to \infty]{\text{a.s.}} 0,
  \end{equation}
  where $\gamma$ denotes the function defined by
  $\gamma(a,b) = \Esp\left(\max\left(0,\, Z_{a,b}\right)\right)$,
  $Z_{a,b} \sim \Ncal(a,b)$.
  Recall from Section~3 in~\citet{vazquez2010convergence} that
  $\gamma$ is continuous and satisfies
  \begin{itemize}
  \item $\gamma (z, s^2) > 0$ if $s^2 > 0$,
  \item $\gamma (z, s^2) \ge z > 0$ if $z > 0$.
  \end{itemize}
  Recall also from Proposition~\ref{prop:conv-M:n-to-infty} that,
  almost surely, $m_n \to m_\infty$ and $\sigma_n \to \sigma_\infty$
  uniformly on~$\Xset$.
  Therefore we have
  \begin{equation}
    \label{eq:EI:gamma:is:zero}
    \forall x \in \Xset, \qquad
    \gamma\left( m_\infty(x) - M_\infty, \sigma_\infty^2(x) \right) = 0,
  \end{equation}
  almost surely, where $M_\infty$ denotes the almost sure limit of the
  increasing sequence~$(M_n)$, with $M_\infty \le \max \xi < +\infty$.
  (To see that~$(M_n)$ is increasing, observe that the set of
  points~$x \in \Xset$ such that~$\sigma_n(x) = 0$ is growing
  with~$n$, since $\left( \sigma_n(x) \right)$ is decreasing for
  any~$x$.)
  Considering the properties of $\gamma$, it follows
  from~\eqref{eq:EI:gamma:is:zero} that %
  almost surely, for all~$x \in \Xset$, %
  $\sigma_{\infty}(x) = 0$ and $ m_\infty(x) - M_\infty \leq 0$.
  Therefore, almost surely, we have $\xi = m_\infty$ and
  $M_\infty \ge \max m_\infty$.
  Since it is clear that $M_n \le \max m_n$ for all~$n$, we also have
  $M_\infty \le \max m_\infty$ in the limit, hence
  $M_\infty = \max m_\infty = \max \xi$ almost surely.

  We have proved so far that $\max m_n \to \max \xi$
  and~$M_n \to \max \xi$ almost surely.
  Moreover, $\Esp_n(\max \xi)$ is a martingale that converges almost
  surely and in~$L^1$ to~$\Esp_\infty(\max \xi) = \max \xi$ (see,
  e.g., Theorem~6.23 in~\cite{kall02}), and therefore
  $H_n = \Esp_n(\max \xi) - M_n \to 0$ almost surely.

  We conclude the proof by observing that all three convergence
  results also hold in the $L^1$ sense by the dominated convergence
  theorem.
\end{proof}

Finally, we remark that Proposition~\ref{prop:EI:converges} improves
the consistency result of \cite{vazquez2010convergence}, since it does
not impose the no-empty-ball property on the covariance function $k$.
Hence, Proposition~\ref{prop:EI:converges} also holds with very smooth
Gaussian processes, %
such as Gaussian processes with a Gaussian (a.k.a. squared
exponential) covariance function, %
or with Gaussian processes whose sample paths have symmetry
properties \cite{ginsb:sym}.

%%%%%%%%%%%%%%%%%%%%%%%%%%%%%%%%%%%%%%%%%%%%%%%%%%%%%%%%%%%%%%%%%%%%%%%%%%
%%%%%%%%%%%%%%%%%%%%%%%%%%%%%%%%%%%%%%%%%%%%%%%%%%%%%%%%%%%%%%%%%%%%%%%%%%
\appendix
\section{Technical results and proofs}

%%%%%%%%%%%%%%%%%%%%%%%%%%%%%%%%%%%%%%%%%%%%%%%%%%%%%%%%%%%%%%%%%%%%%%%%%%
%%%%%%%%%%%%%%%%%%%%%%%%%%%%%%%%%%%%%%%%%%%%%%%%%%%%%%%%%%%%%%%%%%%%%%%%%%
%%%%%%%%%%%%%%%%%%%%%%%%%%%%%%%%%%%%%%%%%%%%%%%%%%%%%%%%%%%%%%%%%%%%%%%%%%
\subsection{Measurability results} \label{app:measurability}

\begin{lemma}
  \label{lem:P-measurability-2} %
  Let $\left( \Eset, \Ecal \right)$ denote a measurable space. %
  Let $\varphi:\Sset \times \Eset \to \RplusExt$ denote an
  $\Scal \otimes \Ecal$-measurable function.
  Then the function $\MGCX \times \Eset \to \RplusExt$,
  $(\nu, y) \mapsto \int_\Sset \varphi(f, y)\, \nu(\df)$, is
  $\McalGCX \otimes \Ecal$-measurable.
\end{lemma}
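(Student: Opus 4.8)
The plan is to prove this by a standard monotone class argument, reducing first to indicators, then extending by linearity and monotone convergence. The only structural input needed is that, by the very definition of $\McalGCX$ as the $\sigma$-algebra generated by the evaluation maps, each map $\pi_A : \nu \mapsto \nu(A)$ with $A \in \Scal$ is $\McalGCX$-measurable; the Gaussian nature of the measures plays no role whatsoever.

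First I would handle measurable rectangles. For $\varphi = \one_{A \times B}$ with $A \in \Scal$ and $B \in \Ecal$, one computes $\int \one_{A \times B}(f, v)\, \Prob(\df) = \one_B(v)\, \Prob(A) = \one_B(v)\, \pi_A(\Prob)$, which is $\McalGCX \otimes \Ecal$-measurable, being the product of the $\McalGCX$-measurable factor $\Prob \mapsto \pi_A(\Prob)$ and the $\Ecal$-measurable factor $v \mapsto \one_B(v)$.

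Next I would let $\mathcal{L}$ be the collection of all $C \in \Scal \otimes \Ecal$ for which $(\Prob, v) \mapsto \int \one_C(f, v)\, \Prob(\df)$ is $\McalGCX \otimes \Ecal$-measurable. The measurable rectangles form a $\pi$-system generating $\Scal \otimes \Ecal$, and they lie in $\mathcal{L}$ by the previous step. I would then verify that $\mathcal{L}$ is a $\lambda$-system: it contains $\Sset \times \Eset$; it is closed under proper differences, since for $C \subset D$ in $\mathcal{L}$ the finiteness of $\Prob$ (total mass one) allows writing $\int \one_{D \setminus C}(f, v)\, \Prob(\df) = \int \one_D(f, v)\, \Prob(\df) - \int \one_C(f, v)\, \Prob(\df)$ as a difference of finite measurable functions; and it is closed under increasing limits, by the monotone convergence theorem together with the fact that a pointwise limit of measurable functions is measurable. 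Dynkin's $\pi$-$\lambda$ theorem then gives $\mathcal{L} = \Scal \otimes \Ecal$, so the assertion holds for every indicator $\varphi = \one_C$.

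Finally I would remove the indicator restriction. By linearity the claim extends to non-negative simple functions, and for an arbitrary $\Scal \otimes \Ecal$-measurable $\varphi$ with values in $\RplusExt$ one chooses simple functions $\varphi_k \uparrow \varphi$ and invokes the monotone convergence theorem pointwise in $(\Prob, v)$, exhibiting $(\Prob, v) \mapsto \int \varphi(f, v)\, \Prob(\df)$ as an increasing pointwise limit of measurable functions, hence measurable. Because $\varphi \ge 0$, the integral is well-defined in $\RplusExt$ at every $(\Prob, v)$, so no integrability hypothesis is required. I do not expect any genuine obstacle here; the one point deserving care is the $\lambda$-system verification, specifically the use of the finiteness of $\Prob$ to license the subtraction, which is nonetheless routine.
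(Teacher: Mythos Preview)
Your proof is correct and follows essentially the same route as the paper's: start from measurable rectangles using the $\McalGCX$-measurability of the evaluation maps $\pi_A$, extend to all indicators via a monotone class/$\pi$-$\lambda$ argument, then pass to general non-negative measurable $\varphi$ by simple-function approximation and monotone convergence. The paper merely states ``standard monotone class argument'' where you spell out the $\lambda$-system verification, but the strategy is identical.
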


\begin{proof}
  The result is clear for any $\varphi = \one_{A \times B}$, with
  $A \in \Scal$ and~$B \in \Ecal$.
  Indeed,
  $\int_\Sset \varphi(f, y)\, \nu(\df) = \pi_A(\nu)\, \one_B(y)$,
  where $\pi_A$ denotes the evaluation map~$\nu \mapsto \nu(A)$, and
  the restriction of $\pi_A$ to~$\MGCX$ is $\McalGCX$-measurable.
  It can be extended to any $\varphi = \one_\Gamma$, with
  $\Gamma \in \Scal \otimes \Ecal$, using a standard monotone class
  argument, and then to any $\Scal \otimes \Ecal$-measurable function
  by linearity and increasing approximation by simple functions.
\end{proof}

In the following lemma, the Banach space $\Ccal(\Xset \times \Xset)$
is endowed with its Borel $\sigma$-algebra.

\begin{lemma}
  \label{lem:mean-var-meas}
  The mappings $m_{\bullet}: \MGCX \to \Sset$, $\nu \mapsto m_\nu$ and
  $k_\bullet: \MGCX \to \Ccal(\Xset \times \Xset)$, $\nu \mapsto k_\nu$ are
  measurable.
\end{lemma}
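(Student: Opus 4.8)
The plan is to exploit the fact that both target spaces, $\Sset = \CcalX$ and $\Ccal(\Xset \times \Xset)$, are separable Banach spaces, so that their Borel $\sigma$-algebras coincide with the $\sigma$-algebras generated by their topological duals (this is exactly the criterion of Proposition~1.10 in \citet{vakhania87}, already invoked in the proof of Proposition~\ref{prop:meas-CcalX}). Consequently, to prove that $m_\bullet$ (resp.\ $k_\bullet$) is measurable it is enough to check that $\nu \mapsto \varphi(m_\nu)$ (resp.\ $\nu \mapsto \psi(k_\nu)$) is $\McalGCX$-measurable for every continuous linear functional $\varphi$ on $\Sset$ (resp.\ $\psi$ on $\Ccal(\Xset \times \Xset)$). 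By the Riesz representation theorem such functionals are given by finite signed measures $\mu_\varphi$ on $\Xset$ and $\mu_\psi$ on $\Xset \times \Xset$, so the task reduces to the measurability of certain $\nu$-integrals.

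For the mean I would write, using Fubini's theorem (legitimate because the second moments of a Gaussian measure on~$\Sset$ are finite, e.g.\ by Fernique's theorem together with $\Esp(\lVert \xi \rVert_\infty) < \infty$),
\[
  \varphi(m_\nu)
  = \int_\Xset m_\nu\, \ddiff\mu_\varphi
  = \int_\Xset \int_\Sset f(x)\, \nu(\df)\, \mu_\varphi(\dx)
  = \int_\Sset \varphi(f)\, \nu(\df).
\]
The integrand $f \mapsto \varphi(f)$ is continuous, hence $\Scal$-measurable; applying Lemma~\ref{lem:P-measurability-2} separately to its positive and negative parts $\varphi^+, \varphi^-$ (each integrable against every $\nu \in \MGCX$) and subtracting, one gets that $\nu \mapsto \varphi(m_\nu)$ is $\McalGCX$-measurable. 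This settles the measurability of $m_\bullet$.

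For the covariance I would split $k_\nu(x,y) = \int_\Sset f(x) f(y)\, \nu(\df) - m_\nu(x) m_\nu(y)$ and treat the two terms separately. The first term is linear in~$\nu$: by Fubini,
\[
  \int_{\Xset \times \Xset} \int_\Sset f(x) f(y)\, \nu(\df)\, \mu_\psi(\dx,\dy)
  = \int_\Sset \Psi(f)\, \nu(\df),
  \qquad
  \Psi(f) = \int_{\Xset \times \Xset} f(x) f(y)\, \mu_\psi(\dx,\dy),
\]
where $\Psi$ is a continuous (quadratic) functional on~$\Sset$, integrable against each Gaussian~$\nu$ by the same second-moment bound, so Lemma~\ref{lem:P-measurability-2} again gives measurability in~$\nu$. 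The hard part is the second term, $\int_{\Xset \times \Xset} m_\nu(x) m_\nu(y)\, \mu_\psi(\dx,\dy)$, which is \emph{not} linear in~$\nu$ and hence not directly amenable to Lemma~\ref{lem:P-measurability-2}; here I would instead bootstrap from the mean result, observing that $m \mapsto \int_{\Xset \times \Xset} m(x) m(y)\, \mu_\psi(\dx,\dy)$ is a continuous functional on~$\Sset$, and composing it with the already-established measurable map $\nu \mapsto m_\nu$. Subtracting the two measurable contributions yields the measurability of $\nu \mapsto \psi(k_\nu)$, and hence of~$k_\bullet$. The only points requiring care are the Fubini interchanges (guaranteed by finiteness of the Gaussian second moments) and this last compositional argument for the quadratic mean term.
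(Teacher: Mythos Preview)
Your proposal is correct and follows essentially the same route as the paper: reduce measurability to checking $\nu \mapsto \varphi(m_\nu)$ and $\nu \mapsto \psi(k_\nu)$ for all continuous linear functionals, represent these via Riesz as signed measures, swap integrals by Fubini, and invoke Lemma~\ref{lem:P-measurability-2}. The paper handles the covariance case with the single phrase ``similarly, working on~$\Xset \times \Xset$ instead of~$\Xset$''; your write-up is more explicit here, and rightly so---the direct analogue of the mean argument only yields the second-moment function $\nu \mapsto \int f(x)f(y)\,\nu(\df)$, and the subtraction of the quadratic term $m_\nu(x)m_\nu(y)$ does require the compositional step you describe (continuous map on~$\Sset$ composed with the already-measurable $m_\bullet$). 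That extra care is a genuine clarification of what the paper leaves implicit, not a different method.
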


\begin{proof}
  The mapping~$m_\bullet$ is measurable if, and only if,
  $\nu \mapsto \varphi(m_\nu)$ is measurable for
  all~$\varphi \in \Sset'$ \citep[see,
  e.g.,][Theorem~2.2]{vakhania87}.
  Let $\varphi \in \Sset'$: there exists a unique signed
  measure~$\mu_\varphi$ on~$\Xset$ such that
  $\varphi(f) = \int_\Xset f\, \dmu_\varphi$.  It is then easy to
  check with Fubini's theorem that
  $\varphi(m_\nu) = \int \varphi(f)\, \nu(\df)$, and the conclusion
  follows from Lemma~\ref{lem:P-measurability-2}.
  The measurability of~$k_\bullet$ is established in a similar way,
  working on~$\Xset \times \Xset$ instead of~$\Xset$.
\end{proof}

Let $\Theta \subset \Sset \times \Ccal(\Xset \times \Xset)$ denote the
range of~$\Psi = \left( m_\bullet, k_\bullet \right)$, and let $\Tcal$
denote the trace on~$\Theta$ of the Borel $\sigma$-algebra
of~$\Sset \times \Ccal(\Xset \times \Xset)$.

\begin{lemma} \label{lem:meas-psi}
  $\Psi$ is a bi-measurable mapping from~$(\MGCX, \McalGCX)$
  to~$(\Theta, \Tcal)$.
\end{lemma}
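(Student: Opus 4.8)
The plan is to treat the two measurability statements separately. Note first that $\Psi = (m_\bullet, k_\bullet)$ is injective, since a Gaussian measure on~$\Sset$ is completely determined by its mean and covariance functions; thus $\Psi : \MGCX \to \Theta$ is a bijection and ``bi-measurable'' means that both $\Psi$ and $\Psi^{-1}$ are measurable. The forward direction is immediate: since $\Sset$ and $\Ccal(\Xset\times\Xset)$ are separable metric spaces, the Borel $\sigma$-algebra on their product coincides with the product of the Borel $\sigma$-algebras, so a map into the product is measurable as soon as both components are. Lemma~\ref{lem:mean-var-meas} states precisely that $m_\bullet$ and $k_\bullet$ are $\McalGCX$-measurable, hence $\Psi$ is measurable into $\bigl( \Sset \times \Ccal(\Xset\times\Xset), \Bcal \bigr)$, and therefore into $(\Theta, \Tcal)$ as well (the preimage of a trace set $\Theta \cap \tilde E$ equals the preimage of the Borel set $\tilde E$, because $\Psi$ takes values in $\Theta$).

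The substantial part is the measurability of $\Psi^{-1} : (\Theta, \Tcal) \to (\MGCX, \McalGCX)$. Since $\McalGCX$ is generated by the evaluation maps $\pi_A : \nu \mapsto \nu(A)$, $A \in \Scal$, it suffices to show that, for each fixed $A \in \Scal$, the map
\[
  \theta = (m, k) \;\longmapsto\; \pi_A\bigl( \Psi^{-1}(\theta) \bigr) = \GM(m,k)(A)
\]
is $\Tcal$-measurable. I would fix $A$ and argue by a $\pi$--$\lambda$ (Dynkin) argument: the collection $\mathcal{L}$ of sets $A \in \Scal$ for which $\theta \mapsto \GM(m,k)(A)$ is $\Tcal$-measurable is a $\lambda$-system (it contains $\Sset$; it is stable under complementation via $\nu(A^c) = 1 - \nu(A)$; and under increasing unions via continuity from below of measures together with stability of pointwise limits of measurable functions). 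It therefore remains to verify measurability on a generating $\pi$-system.

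The natural $\pi$-system is that of finite-dimensional cylinders
\[
  C = \bigl\{ f \in \Sset : (f(x_1), \ldots, f(x_d)) \in B \bigr\},
  \qquad x_1, \ldots, x_d \in \Xset,\; B \in \Bcal(\Rset^d),
\]
which is stable under finite intersection and generates $\Scal$ (on the separable Banach space $\Sset$ the Borel $\sigma$-algebra is generated by the point evaluations $f \mapsto f(x)$, since these are continuous and, through a countable dense subset of~$\Xset$, recover the norm balls that generate $\Scal$). For such a $C$ one has $\GM(m,k)(C) = N_{\mu, \Sigma}(B)$, where $N_{\mu,\Sigma}$ denotes the (possibly degenerate) Gaussian distribution on $\Rset^d$ with mean $\mu = (m(x_i))_i$ and covariance $\Sigma = (k(x_i, x_j))_{i,j}$. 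The map $\theta \mapsto (\mu, \Sigma)$ is continuous, hence $\Tcal$-measurable, because point evaluations on $\Ccal(\Xset)$ and on $\Ccal(\Xset\times\Xset)$ are continuous. Thus the measurability of $\theta \mapsto \GM(m,k)(C)$ follows once we know that $(\mu,\Sigma) \mapsto N_{\mu,\Sigma}(B)$ is Borel measurable for every fixed $B$.

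This last point---which I expect to be the only genuine obstacle, precisely because $\Sigma$ is allowed to be singular---I would settle by a weak-continuity argument. The characteristic function $t \mapsto \exp\bigl( i\, t^\transp \mu - \tfrac12 t^\transp \Sigma t \bigr)$ depends continuously on $(\mu, \Sigma)$, so by Lévy's continuity theorem the map $(\mu, \Sigma) \mapsto N_{\mu,\Sigma}$ is continuous from the set of admissible pairs (mean vectors and symmetric positive semidefinite matrices) into the Polish space $\mathcal{P}(\Rset^d)$ of probability measures endowed with the topology of weak convergence. On that space the evaluation $\nu \mapsto \nu(B)$ is Borel measurable for each fixed Borel $B$; composing it with the continuous map $(\mu,\Sigma) \mapsto N_{\mu,\Sigma}$ yields the desired Borel measurability. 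Combining the three displays and invoking Dynkin's theorem gives $\mathcal{L} = \Scal$, which establishes the measurability of $\Psi^{-1}$ and completes the proof.
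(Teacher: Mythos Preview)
Your proof is correct and follows essentially the same route as the paper: both establish the forward direction via Lemma~\ref{lem:mean-var-meas}, reduce the measurability of~$\Psi^{-1}$ to that of $(m,k)\mapsto\GM(m,k)(A)$ for each $A\in\Scal$, verify this on finite-dimensional cylinders, and then extend by a monotone-class/$\pi$--$\lambda$ argument. The only notable difference is one of detail rather than strategy: where the paper simply asserts that measurability on cylinder sets is ``easily checked'' and appeals to the equality of the ball $\sigma$-algebra with~$\Scal$, you spell out the potentially delicate point---that $\Sigma$ may be singular---and dispatch it cleanly via weak continuity of $(\mu,\Sigma)\mapsto N_{\mu,\Sigma}$ (L\'evy) together with Borel measurability of $\nu\mapsto\nu(B)$ on the Polish space of probability measures. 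This makes your argument more self-contained at that step, but the overall architecture is the same.
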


\begin{proof}
  The measurability of~$\Psi$ follows from
  Lemma~\ref{lem:mean-var-meas}.
  Since $\McalGCX$ is generated by the evaluation maps (see
  Section~\ref{sec:Gauss-RE-RM}), $\Psi^{-1}$ is measurable if, and
  only if, $(m, k) \mapsto \left[ \GP(m,k) \right](A)$ is measurable
  for all~$A \in \Scal$.
  This is easily checked for any finite intersection of the form
  $A = \cap_k \left\{ f \in \Sset \mid f(x_k) \in \Gamma_k \right\}$,
  where $(x_k) \in \Xset^n$ and~$(\Gamma_k) \in \Bcal(\Rset)^n$.
  The result extends to the ball $\sigma$-algebra~$\Scal_0$ using a
  standard monotone class argument, which concludes the proof since
  $\Scal_0 = \Scal$ \citep[see, e.g.,][]{billi99}.
\end{proof}

\begin{proposition} \label{prop:EI:H:measurable}
  The expected improvement
  functional~\eqref{eq:EI:uncertainty-func}--\eqref{eq:EI:uncertainty-func-2}
  is $\McalGCX$-measurable.
\end{proposition}

\begin{proof}
  Let $\{ x_i \}$ denote a countable dense subset of~$\Xset$ and set,
  for all $k > 0$,
  \begin{equation*}
    \Hcal_k(\nu) = \int_\Sset \left( \max f - \min f\right)\, \nu(\df)
    - \sup_i \left( m_\nu(x_i) - \int_\Sset \min f\, \nu(\df) \right)\,
    \one_{\sigma_\nu(x_i) \le \frac{1}{k}}.
  \end{equation*}
  The mappings $\nu \mapsto \int_\Sset \max f\, \nu(\df)$,
  $\nu \mapsto \int_\Sset \min f\, \nu(\df)$,
  $(\nu, x) \mapsto m_\nu(x)$ and $(\nu, x) \mapsto \sigma_\nu^2(x)$
  are measurable by Lemma~\ref{lem:P-measurability-2}.
  As a consequence, for any~$k > 0$, the functional~$\Hcal_k$ is
  $\McalGCX$-measurable.
  The result follows from the fact that~$\Hcal_k \to \Hcal$ pointwise
  as $k \to \infty$.
\end{proof}

%%%%%%%%%%%%%%%%%%%%%%%%%%%%%%%%%%%%%%%%%%%%%%%%%%%%%%%%%%%%%%%%%%%%%%%%%%
%%%%%%%%%%%%%%%%%%%%%%%%%%%%%%%%%%%%%%%%%%%%%%%%%%%%%%%%%%%%%%%%%%%%%%%%%%
\subsection{The conditioning operator}
\label{app:conditioning}

Let $\ZZ_n = (Z_1, \ldots, Z_n)$ and~$\XX_n = (X_1, \ldots, X_n)$.
For any $(m, k) \in \Theta$, $\xx_n \in \Xset^n$
and~$\zz_n \in \Rset^n$, it is well known that the conditional mean and
covariance functions of~$(\xi(x))_{x \in \Xset}$
given~$\ZZ_n = \zz_n$, assuming a deterministic design $\XX_n = \xx_n$
(see Section~\ref{sec:bkgrnd:model}), are given by
\begin{align}
  m_n(x;\, \xx_n, \zz_n)
  & = m(x) + k(x,\xx_n)\, K(\xx_n)^\dagger \left( \zz_n - m(\xx_n) \right)
    \label{equ:mn}\\
  k_n(x, y;\, \xx_n)
  & = k(x, y) - k(x, \xx_n)\, K(\xx_n)^\dagger\, k(\xx_n, y),
    \label{equ:kn}
\end{align}
where $K(\xx_n)^\dagger$ denotes the Moore-Penrose pseudo-inverse of
$K(\xx_n) = \left( k(x_i, x_j) + \tau(x_i)^2\, \delta_{i,j} \right)_{1 \le
  i, j \le n}$, and $k(\xx_n, \cdot\,)$ and the other notations should be
self-explanatory.

\begin{lemma} \label{lem:meas-kappa-tilde}
  %q
  $\tilde\kappa_n: \left( \xx_n, \zz_n, (m,k) \right) \mapsto
  (m_n(\,\cdot\,;\, \xx_n, \zz_n),\, k_n(\,\cdot\,;\, \xx_n))$ is a
  measurable mapping from $\Xset^n \times \Rset^n \times \Theta$
  to~$\Theta$, where $\Theta$ is endowed with the
  $\sigma$-algebra~$\Tcal$ defined in the preceding section.
\end{lemma}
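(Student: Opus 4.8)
The plan is to reduce the measurability of $\tilde\kappa_n$ into $(\Theta,\Tcal)$ to the measurability of a family of scalar maps, and then to isolate the one genuinely discontinuous ingredient, the pseudo-inverse, and treat it by hand. Since $\Tcal$ is the trace on $\Theta$ of the Borel $\sigma$-algebra of $\Sset\times\Ccal(\Xset\times\Xset)$, it suffices to prove that $\tilde\kappa_n$, seen as a map into this ambient Banach space, is Borel measurable: its range does lie in $\Theta$, because the conditional distribution of a Gaussian process given finitely many noisy evaluations is again Gaussian, so for each input $(m_n,k_n)=\Psi(\nu_n)$ for the corresponding Gaussian measure $\nu_n$ (and $m_n,k_n$ are manifestly continuous for each fixed input, by \eqref{equ:mn}--\eqref{equ:kn}). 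As used in the proof of Lemma~\ref{lem:meas-psi} (the ball $\sigma$-algebra coincides with the Borel one, together with separability of $\CcalX$), the Borel $\sigma$-algebra of $\Sset=\CcalX$ is generated by the point-evaluation maps $f\mapsto f(x)$, and likewise for $\Ccal(\Xset\times\Xset)$. It therefore suffices to check that, for each fixed evaluation point $x\in\Xset$ (resp. $(x,x')\in\Xset\times\Xset$), the scalar maps
\[
  (\xx_n,\zz_n,(m,k))\ \mapsto\ m_n(x;\xx_n,\zz_n)
  \qquad\text{and}\qquad
  (\xx_n,\zz_n,(m,k))\ \mapsto\ k_n(x,x';\xx_n)
\]
are measurable on $\Xset^n\times\Rset^n\times\Theta$.

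Next I would decompose these scalar maps into elementary blocks via the explicit formulas \eqref{equ:mn} and \eqref{equ:kn}. The vectors $m(\xx_n)=(m(x_i))_i$ and $k(x,\xx_n)=(k(x,x_i))_i$, the entries of $K(\xx_n)$, and the scalar $k(x,x')$ are all built from the evaluation maps $(x,(m,k))\mapsto m(x)$ and $(x,x',(m,k))\mapsto k(x,x')$, which are jointly continuous (evaluation is continuous on $\Xset\times\CcalX$ for the sup norm, and the coordinate projections $\Theta\to\CcalX$, $\Theta\to\Ccal(\Xset\times\Xset)$ are continuous), together with the continuous function $\tau$. Hence, for fixed $x,x'$, these are measurable (indeed continuous) $\Rset^n$- and $\Rset^{n\times n}$-valued functions of $(\xx_n,\zz_n,(m,k))$. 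Since matrix and vector addition and multiplication are continuous, everything reduces to the measurability of the single map $M\mapsto M^{\dagger}$.

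The main obstacle is precisely that the Moore--Penrose pseudo-inverse is not continuous, since it jumps where the rank of $M$ changes, so continuity of the composite formulas is not available. I would dispatch this using the standard fact that $M\mapsto M^{\dagger}$ is nonetheless Borel measurable, via the Tikhonov-type identity
\[
  M^{\dagger}\;=\;\lim_{\delta\downarrow 0}\,\bigl(M^{\transp}M+\delta I\bigr)^{-1}M^{\transp},
\]
which holds for every real matrix $M$ and exhibits $M\mapsto M^{\dagger}$ as a pointwise limit of continuous maps, hence Borel measurable; alternatively, $M^{\dagger}$ is continuous on each Borel stratum $\{\operatorname{rank}M=r\}$, which again yields Borel measurability. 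Composing this with the continuous building blocks of the previous paragraph shows that $m_n(x;\cdot)$ and $k_n(x,x';\cdot)$ are measurable for every fixed $x,x'$, which by the reduction of the first paragraph establishes the measurability of $\tilde\kappa_n$ into $(\Theta,\Tcal)$ and completes the proof.
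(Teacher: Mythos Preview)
Your proof is correct and follows essentially the same approach as the paper: both reduce to the continuity of evaluation maps $(m,x)\mapsto m(x)$, $(k,x,y)\mapsto k(x,y)$ together with the Borel measurability of $K\mapsto K^{\dagger}$. The only notable difference is that you supply a self-contained argument for the measurability of the pseudo-inverse via the Tikhonov limit $M^{\dagger}=\lim_{\delta\downarrow 0}(M^{\transp}M+\delta I)^{-1}M^{\transp}$, whereas the paper simply cites \cite{schonfeld1973}; your reduction to scalar point-evaluations is also slightly more explicit than the paper's terse formulation, but the substance is the same.
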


\begin{proof}
  First observe that for any~$\xx_n$, $k_n(\,\cdot\,;\, \xx_n)$ is the
  covariance function of $\xi - m_n(\,\cdot\,; \xx_n, \ZZ_n)$, which
  is a Gaussian process with continuous sample paths.
  Thus, $(m_n(\,\cdot\,;\, \xx_n, \zz_n),\, k_n(\,\cdot\,;\, \xx_n))$
  is indeed an element of~$\Theta$.
  The result then follows from the continuity of
  $(m, x) \mapsto m(x)$, $(k, x) \mapsto k(x,\,\cdot\,)$,
  and~$(k, x, y) \mapsto k(x,y)$, and the measurability of
  $K \mapsto K^\dagger$ \citep{schonfeld1973}.
\end{proof}

\begin{proof}[Proof of Proposition~\ref{prop:Cond-map}]
  Let $\kappa_n: \Xset^n \times \Rset^n \times \MGCX \to \MGCX$ denote
  the mapping defined by
  \begin{equation}
    \kappa_n(\xx_n, \zz_n, \nu) %
    = \GP\left(
      m_n(\,\cdot\,;\, \xx_n, \zz_n),\, %
      k_n(\,\cdot\,;\, \xx_n)
    \right),
  \end{equation}
  where $\nu = \GP(m, k) \in \MGCX$.
  Observe that, using the notations introduced in the previous
  section,
  $\kappa_n(\xx_n, \zz_n, \nu) = \Psi^{-1} \left( \tilde\kappa_n
    (\xx_n, \zz_n, \Psi(\nu)) \right)$: thus, it follows from
  Lemmas~\ref{lem:meas-psi} and~\ref{lem:meas-kappa-tilde} that
  $\kappa_n$~is measurable.
  Standard algebraic manipulations then show that
  \begin{equation*}
    \kappa_{n+m} \left(\xx_{n+m}, \zz_{n+m}, \nu \right)
    = \kappa_m \left( \xx_{n+1:n+m},\, \zz_{n+1:n+m},\,
      \kappa_n \left(\xx_n, \zz_n, \nu \right) \right),
  \end{equation*}
  whence it is easy to prove recursively that
  $\Prob_n^\xi \eqdef \kappa_n\left( \XX_n, \ZZ_n, \Prob^\xi \right)$
  satisfies the property
  $\Esp\left( U\, \Prob_n^\xi(\Gamma) \right) = \Esp\left( U\,
    \one_{\xi \in \Gamma} \right)$ for any sequential
  design~$\left( X_i \right)$, any $\Fcal_n$-measurable $U$ of the
  form $U = \Pi_{i=1}^n \varphi_i(Z_i)$ and any $\Gamma \in \Scal$ of
  the form
  $\Gamma = \cap_{j=1}^J \left\{ \xi(\tilde x_j) \in \Gamma_j
  \right\}$, with $\tilde x_j \in \Xset$, $\Gamma_j \in \Bcal(\Rset)$,
  $1 \le j \le J$.
  The result extends to any $\Fcal_n$-measurable $U$ and
  any~$\Gamma \in \Scal$ thanks to a monotone class argument, which
  proves that $\Prob_n^\xi$ is a conditional distribution of~$\xi$
  given $\Fcal_n$.
  Proposition~\ref{prop:Cond-map} is thus established with
  $\Cond_{x_1, z_1, \ldots, x_n, z_n}: \nu \mapsto \kappa_n \left(
    \xx_n, \zz_n, \nu \right)$.
\end{proof}

\begin{proposition}
  \label{prop:Jcalx:meas}
  The mapping $(x, \nu) \mapsto \Jcal_x(\nu)$ is
  $\Bcal(\Xset) \otimes \McalGCX$-measurable.
\end{proposition}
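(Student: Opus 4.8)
The plan is to exploit the single-integral representation~\eqref{equ:Jn-2eme-ecriture-intsimple} of $\Jcal_x$, which exhibits the conditioning point as a deterministic function of $(x, \nu)$ and a scalar integration variable, and then to peel off the integral against the standard normal density by Tonelli's theorem. Concretely, I would write $\Jcal_x(\nu) = \int_\Rset F(x, \nu, v)\, \phi(v)\, \dv$ with $F(x, \nu, v) = \Hcal\bigl( \Cond_{x,\, m_\nu(x) + v\, s_\nu(x)}(\nu) \bigr)$, so that the whole task reduces to showing that $F$ is $\Bcal(\Xset) \otimes \McalGCX \otimes \Bcal(\Rset)$-measurable.

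First I would establish the joint measurability of the auxiliary maps $(x, \nu) \mapsto m_\nu(x)$ and $(x, \nu) \mapsto s_\nu(x)$. By Lemma~\ref{lem:mean-var-meas} the map $\nu \mapsto m_\nu$ is measurable from $(\MGCX, \McalGCX)$ to $\CcalX$, and $\nu \mapsto k_\nu$ is measurable into $\Ccal(\Xset \times \Xset)$. The evaluation maps $(g, x) \mapsto g(x)$ on $\CcalX \times \Xset$ and $(h, x) \mapsto h(x,x)$ on $\Ccal(\Xset \times \Xset) \times \Xset$ are jointly continuous (since $\Xset$ is compact, uniform convergence controls pointwise values), hence Borel measurable; moreover, as $\CcalX$ and $\Xset$ are separable, the Borel $\sigma$-algebra of the product coincides with the product $\sigma$-algebra. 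Composing, $(x, \nu) \mapsto m_\nu(x)$ and $(x, \nu) \mapsto k_\nu(x, x)$ are $\Bcal(\Xset) \otimes \McalGCX$-measurable; since $\tau$ is continuous, so are $(x, \nu) \mapsto s_\nu^2(x) = k_\nu(x,x) + \tau^2(x)$ and hence $(x, \nu) \mapsto s_\nu(x)$.

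Next I would combine these with the conditioning operator. By Proposition~\ref{prop:Cond-map} (case $n = 1$), the map $(x, z, \nu) \mapsto \Cond_{x, z}(\nu)$ is measurable from $\Xset \times \Rset \times \MGCX$ to $\MGCX$. Feeding it the measurable map $(x, \nu, v) \mapsto \bigl( x,\, m_\nu(x) + v\, s_\nu(x),\, \nu \bigr)$---measurable by the previous step and the continuity of the arithmetic operations---shows that $(x, \nu, v) \mapsto \Cond_{x,\, m_\nu(x) + v\, s_\nu(x)}(\nu)$ is $\Bcal(\Xset) \otimes \McalGCX \otimes \Bcal(\Rset)$-measurable into $\MGCX$. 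Since $\Hcal$ is measurable by assumption, the composition $F$ is jointly measurable into $\RplusExt$.

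Finally, since $F \ge 0$ and $\phi$ is measurable, Tonelli's theorem guarantees that the partial integral $(x, \nu) \mapsto \int_\Rset F(x, \nu, v)\, \phi(v)\, \dv = \Jcal_x(\nu)$ is $\Bcal(\Xset) \otimes \McalGCX$-measurable, which is the claim. The only real subtlety---and the step I would take most care over---is the joint measurability of the conditioning point through the evaluation maps; this hinges on using the closed-form single-integral representation~\eqref{equ:Jn-2eme-ecriture-intsimple} rather than the double integral~\eqref{equ:Jn-1ere-ecriture-intdouble}, since in the latter the integration over $f \in \Sset$ against the varying measure $\nu$ would require an additional kernel-measurability argument, whereas the Gaussian reduction has already integrated this out and left a clean deterministic dependence on $(x, \nu, v)$.
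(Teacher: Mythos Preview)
Your proposal is correct and follows essentially the same approach as the paper: both use the single-integral representation~\eqref{equ:Jn-2eme-ecriture-intsimple}, establish joint measurability of the integrand via the measurability of the mean/covariance maps (Lemma~\ref{lem:mean-var-meas}) and of the conditioning operator (Proposition~\ref{prop:Cond-map}), and then integrate out~$v$. Your use of Tonelli rather than Fubini is in fact the more precise formulation here since $\Hcal \ge 0$, and your extra care with the joint continuity of the evaluation maps spells out a detail the paper leaves implicit.
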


\begin{proof}
  Observe that $\Jcal_x(\nu)$ can be rewritten as
  \begin{equation}
    \label{equ:Jcalx:rewritten}
    \Jcal_x(\nu)  = \int_{\Rset} \Hcal
    \left( \kappa_1
      \left(
        x,\, m_\nu(x) + v\, s_\nu(x),\, \nu
      \right)
    \right)\, \phi(v)\, \dv,
  \end{equation}
  where $s_\nu^2 = k_\nu(x, x) + \tau^2 (x)$ and~$\kappa_1$ is defined
  as in the proof of Proposition~\ref{prop:Cond-map}.
  Using Lemma~\ref{lem:meas-psi} and the measurability of~$\kappa_1$,
  the integrand in the right-hand side of~\eqref{equ:Jcalx:rewritten}
  is easily seen to be a
  $\Bcal(\Xset) \otimes \Mcal \otimes \Bcal(\Rset)$-measurable
  function of~$(x, \nu, v)$.
  The result follows from Fubini's theorem.
\end{proof}

\begin{remark}
  As a consequence of Proposition~\ref{prop:Jcalx:meas},
  $J_n: x \mapsto \Jcal_x (\Pnxi)$ is an $\Fcal_n$-measurable process
  for all~$n$, and thus
  $J_n(X)$ is a well-defined $\Fcal_n$-measurable random variable for
  any $\Fcal_n$-measurable $\Xset$-valued random variable~$X$.
\end{remark}

%%%%%%%%%%%%%%%%%%%%%%%%%%%%%%%%%%%%%%%%%%%%%%%%%%%%%%%%%%%%%%%%%%%%%%%%%%
%%%%%%%%%%%%%%%%%%%%%%%%%%%%%%%%%%%%%%%%%%%%%%%%%%%%%%%%%%%%%%%%%%%%%%%%%%
\subsection{Convergence in $\MGCX$}
\label{app:ASCSCD:proofs}

\begin{proof}[Proof of Proposition~\ref{prop:conv-M:n-to-infty}]

  Recall from Proposition~\ref{prop:Cond-map} that the conditional
  distribution of~$\xi$ given~$\Fcal_n$ is of the
  form~$\Prob_n^\xi = \GM (m_n, k_n )$.
  Moreover, $\xi$ is a Bochner-integrable $\Sset$-valued random
  element: indeed, it is measurable (see, e.g., \cite{vakhania87}) %
  and $\lVert \xi \rVert_\infty$ is integrable (see, e.g., Theorem~2.9
  in~\cite{azais09level}).
  The conditional expectation $\Esp\left( \xi \mid \Fcal_n \right)$
  of~$\xi$ given~$\Fcal_n$ is thus well defined as an $\Sset$-valued
  random element (since $\Sset = \CcalX$ is a separable Banach space;
  see, e.g., Theorem~5.1.12 in~\cite{stroock}) and is easily seen to
  coincide with~$m_n$.
  As a consequence, it follows from Theorem~6.1.12 in~\cite{stroock}
  that $m_n$ converges uniformly, almost surely and
  in~$L^1\left(\Omega, \Fcal, \Prob \right)$, to
  $m_\infty \eqdef \Esp\left( \xi \mid \Fcal_\infty \right)$.
  The limit $m_\infty$ is, by definition of the conditional
  expectation, an $\Fcal_\infty$-measurable random element
  in~$\Sset$.

  Let us now prove that the sequence~$k_n$ converges uniformly to a
  continuous function~$k_\infty$.
  Since $\Prob_n^\xi = \Cond_{X_1, Z_1, \ldots, X_n, Z_n}(\Prob^\xi)$
  by Proposition~\ref{prop:Cond-map}, and since the sequence of
  conditional covariance functions depends only on the design
  points~$X_i$ (not on the observed values~$Z_i$), we can reduce
  without loss of generality to the case of a deterministic design
  ($X_i = x_ i \in \Rset$, for all $i \in \Nset$) and consider the
  associated \emph{deterministic} sequence~$(k_n)$.
  Let $\mu = \sum_{i = 1}^p \mu_i \delta_{\tilde x_i}$ denote any
  finitely supported measure on~$\Xset$, and let
  $\sigma_n^2(\mu) = \sum_{i,j=1}^p \mu_i \mu_j k_n (\tilde x_i,
  \tilde x_j)$ denote the conditional variance
  of~$Z = \sum_{i = 1}^p \mu_i \xi(\tilde x_i)$ given~$\Fcal_n$.
  Because $Z$ and the observations are jointly Gaussian, the sequence
  $\left( \sigma^2_n(\mu) \right)_{n \ge 1}$ is decreasing and
  therefore converges to a limit~$\sigma_\infty^2(\mu)$, for
  all~$\mu$.
  Thus,
  \begin{equation*}
    k_n(x, y) = \frac{1}{4} \left( %
      \sigma_n^2 \left( \delta_x + \delta_y \right)
      - \sigma_n^2 \left( \delta_x - \delta_y \right)
    \right) %
    \;\xrightarrow[n \to \infty]{}\;
    \frac{1}{4} \left( %
      \sigma_\infty^2 \left( \delta_x + \delta_y \right)
      - \sigma_\infty^2 \left( \delta_x - \delta_y \right)
    \right),
  \end{equation*}
  which proves convergence to a limit~$k_\infty(x, y)$.
  Moreover, we have for any $x, y, x',
  y' \in \Xset$:
  \begin{align}
    \left| k_n\left(x,y\right) - k_n\left(x',y'\right) \right|
    & \le \sigma_n(\delta_x)\, \sigma_n\left( \delta_y - \delta_{y'} \right)
    + \sigma_n(\delta_{y'})\, \sigma_n\left( \delta_x - \delta_{x'} \right)\\
    & \le \sigma_0(\delta_x)\, \sigma_0\left( \delta_y - \delta_{y'} \right)
    + \sigma_0(\delta_{y'})\, \sigma_0\left( \delta_x - \delta_{x'} \right).
  \end{align}
  Letting~$n$ go to~$+\infty$ in the left-hand side, we conclude
  that~$k_\infty$ is continuous.
  To see that the convergence $k_n \to k_\infty$ is uniform, consider
  the sequence of functions~$\Xset^2 \to \Rset$,
  $\left( x, y \right) \mapsto \sigma^2_n \left(\delta_x + \delta_y
  \right)$.
  This is a decreasing sequence of continuous functions, which
  converges pointwise to the continuous function
  $\left( x, y \right) \mapsto \sigma^2_\infty \left(\delta_x +
    \delta_y \right)$.
  Since $\Xset^2$ is compact, the convergence is uniform by Dini's
  first theorem.
  The same argument applies to
  $\left( x, y \right) \mapsto \sigma^2_n \left(\delta_x - \delta_y
  \right)$ and therefore to~$k_n$ by polarization.

  Finally, let $\Qrob$ denote any conditional distribution of~$\xi$
  given~$\Fcal_\infty$.
  We will prove that the $\Fcal_\infty$-measurable random measure
  $\Qrob$ is almost surely a Gaussian measure.
  Let $x \in \Xset$ and let $\phi_x$ denote the (random)
  characteristic function of~$\Qrob \circ \delta_x^{-1}$.
  It follows from Theorem~6.23 in~\citet{kall02} that, for all
  $u \in \Rset$,
  $ \phi_x(u) = \Esp_\infty \left( e^{iu\xi(x)} \right)
  \stackrel{\as}{=} \lim_{n \to \infty} \Esp_n \left(
    e^{iu\xi(x)} \right)$.
  Since
  \begin{equation*}
    \Esp_n \left(
      e^{iu\xi(x)} \right)
    = e^{ium_n(x)}\, e^{-\frac{1}{2}k_n(x,x)u^2}
    \xrightarrow[n\to\infty]{\as}
    e^{ium_\infty(x)}\, e^{-\frac{1}{2}k_\infty(x,x)u^2},
  \end{equation*}
  we conclude from the continuity of~$\phi_x$ and Levy's theorem that
  $\Qrob \circ \delta_x^{-1} = \Ncal \left( m_\infty(x),
    k_\infty(x,x) \right)$ almost surely.
  The argument extends to any image measure of the
  form~$\Qrob \circ h^{-1}$, with
  $h = \left( \delta_{y_1}, \ldots, \delta_{y_m} \right)$.
  Considering first the case where the $y_j$'s are taken in a
  countable dense subset of~$\Xset$ and then using the continuity of
  the elements of~$\Sset$, we conclude that there is an almost sure
  event $\Omega_0 \in \Fcal_\infty$ such that, for
  $\omega \in \Omega_0$, $\left( \delta_x \right)_{x \in \Xset}$ is a
  Gaussian process defined on the probability space
  $\left( \Sset, \Scal, \Qrob(\omega, \cdot) \right)$, %
  and thus $\Qrob(\omega, \cdot)$ is a Gaussian measure for all
  $\omega \in \Omega_0$.
  Finally, letting
  \begin{equation*}
    \Prob_\infty^\xi(\omega, \cdot) =
    \begin{cases}
      Q(w, \cdot) & \text{if } w \in \Omega_0,\\
      \GP(0, 0) & \text{otherwise,}
    \end{cases}
  \end{equation*}
  we have constructed an $\Fcal_\infty$-measurable random element
  in~$\MGCX$ such that $\Prob_n^\xi \to \Prob_\infty^\xi$ a.s.\ for
  the topology introduced in Definition~\ref{def:topology:Mzero},
  thereby concluding the proof.
\end{proof}

\begin{proof}[Proof of Proposition~\ref{prop:conv-M:cond-pt}]

  Let $\nu = \GP(m, k) \in \MGCX$ and let
  $(x_j, z_j) \to (x_\infty, z_\infty)$ in~$\Xset \times \Rset$.
  For any $j \in \Nset \cup\{+\infty\}$, we have
  $\Cond_{x_j,\, z_j} (\nu) = \GP(m_1(\,\cdot\,;\, x_j, z_j),\,
  k_1(\,\cdot\,;\, x_j)$, where~$m_1$ and~$k_1$ are given
  by~\eqref{equ:mn}--\eqref{equ:kn}.
  It is then easy to check that $m_1(\,\cdot\,;\, x_j, z_j)$
  and~$k_1(\,\cdot\,;\, x_j)$ converge uniformly
  to~$m_1(\,\cdot\,;\, x_\infty, z_\infty)$
  and~$k_1(\,\cdot\,;\, x_\infty)$, respectively,
  using the facts that $k$~is uniformly continuous
  over~$\Xset \times \Xset$ (since $k$~is continuous
  and~$\Xset \times \Xset$ is a compact metric space) and that
  $K \mapsto K^\dagger$ is continuous at
  $K = k_\nu(x, x) + \tau^2(x) > 0$ (the covariance matrix is actually
  a scalar in this case).
\end{proof}

%%%%%%%%%%%%%%%%%%%%%%%%%%%%%%%%%%%%%%%%%%%%%%%%%%%%%%%%%%%%%%%%%%%%%%%%%%
%%%%%%%%%%%%%%%%%%%%%%%%%%%%%%%%%%%%%%%%%%%%%%%%%%%%%%%%%%%%%%%%%%%%%%%%%%
\subsection{Existence of SUR and quasi-SUR sequential designs}
\label{sec:SUR:existence}

This section contains general existence results for
$\varepsilon$-quasi-SUR sequential designs.
Recall that $\Xset$ is assumed, throughout the paper, to be a compact
metric space (see Standing assumptions~\ref{ass:standing:assumption}).

%%%
%%% THEOREME: EXISTENCE D'UNE STRATEGIE QUASI-SUR
%%%
\begin{theorem} \label{thm:Jn-cont:and:SUR-exist}
  Let the assumptions of Theorem~\ref{thm:convergence:bis} hold.
  Then,
  \begin{enumerate}[ a)]
  \item\label{assert:Jn-continuous} for any sequential design, the
    sample paths of~$J_n$ are continuous
    on~$\left\{ x \in \Xset: s_n^2(x) > 0 \right\}$;
  \item\label{assert:SUR-strat-exists} for any sequence
    $\varepsilon = \left( \varepsilon_n \right)$ of strictly positive real
    numbers, there exists an $\varepsilon$-quasi-SUR sequential
    design~$\left( X_n \right)_{n\ge 1}$ associated with~$\Hcal$.
  \end{enumerate}
\end{theorem}

\begin{proof}
  We will assume without loss of generality that $\Hcal_0 = 0$,
  since~$\Hcal_0$ only adds a constant term (i.e., a term that does
  not depend on~$x$) to the value of the sampling criterion.

  %
  % STEP 1: J_n continuous
  %
  Let us first prove Assertion~\eqref{assert:Jn-continuous}. %
  Since $J_n(x) = \Jcal_x\bigl( \Pnxi \bigr)$, it is equivalent to
  prove that the result holds at~$n = 0$ for
  any~$\Prob_0^\xi \in \MGCX$. %
  Assume then that~$n = 0$, fix $x \in \Xset$ such that
  that~$s_0^2(x) = k(x,x) + \tau^2(x) > 0$, %
  and let $\left(x_j\right)$ denote a sequence in~$\Xset$ such
  $x_j \to x$. %
  Recall from~\eqref{equ:Jn-def-rigoureuse} that
  $J_0(x) = \Jcal_x(\Prob_0^\xi) = \Esp\bigl( \Hcal \bigl( \Cond_{x,\,
    Z_1(x)} \bigl( \Prob_0^\xi \bigr) \bigr) \bigr)$. %
  Set $\nug_k = \Cond_{x_k,\, Z_1(x_k)} \bigl( \Prob_0^\xi \bigr)$ and
  $\nug_\infty = \Cond_{x,\, Z_1(x)} \bigl( \Prob_0^\xi \bigr)$. %
  We have $\nug_k \in \Pfrak(\xi)$ for all
  $n \in \Nset \cup \{+\infty\}$, and $\nug_k \to \nug_\infty$ by
  Proposition~\ref{prop:conv-M:cond-pt}. %
  It follows that $\Hcal (\nug_k) \toas \Hcal (\nug_\infty)$ since
  $\Hcal$ is \AscFgcd, and thus
  $\Jcal_{x_k}(\Prob_0^\xi) = \Esp\bigl( \Hcal(\nug_k) \bigr) \to
  \Esp\bigl( \Hcal(\nug_\infty) \bigr) = \Jcal_x(\Prob_0^\xi)$ since
  $\left( \Hcal(\nug_k) \right)$ is uniformly integrable.
  Assertion~\eqref{assert:Jn-continuous} is proved.

  %
  % STEP 2: Existence of a SUR strategy
  %
  Consider now the following compact subsets of~$\Xset$:
  \begin{align}
    B_{n,\gamma}(\omega) %
    &= \left\{ x \in \Xset \mid\, s_n\left( \omega, x \right)
      \ge \gamma^{-1} > 0 \right\},\\
    A_{n,\gamma}(\omega) %
    &= B_{n, \gamma} (\omega) \,\cap\, %
      \left\{ x \in \Xset \mid\, J_n \left( \omega, x \right) \le
      \inf J_n(\omega, x) + \varepsilon_n \right\}.
  \end{align}
  Let us prove that, on the event~$\left\{ s_n \not\equiv 0 \right\}$,
  the set~$A_{n, \gamma}(\omega)$ is non-empty for large values
  of~$\gamma$.
  Assume that $s_n(\omega, \cdot) \not\equiv 0$,
  and recall that $J_n\left( \omega, \cdot \right) \le H_n(\omega)$
  by~\eqref{equ:Jn-smaller-than-Hn}.
  If $J_n(\omega, \cdot) \equiv H_n$, then for any~$x$ such
  that~$s_n(\omega, x) > 0$ and any $\gamma \ge s_n(\omega, x)^{-1}$
  we have $x \in A_{n,\gamma}(\omega) = B_{n,\gamma}(\omega)$.
  If $\inf_x J_n(\omega, x) < H_n$, pick a sequence~$(x_k)$ such that
  $J_n(\omega, x_k) \to \inf_x J_n(\omega, x)$.
  For some $k$ large enough,
  $J_n(\omega, x_k) \le \inf_x J_n(\omega, x) + \varepsilon_n$ and
  $J_n(\omega, x_k) < H_n$.
  As a consequence, $s_n(\omega, x_k) > 0$ (this follows
  from~\eqref{equ:Jn-2eme-ecriture-intsimple} and the fact that
  $\Cond_{x, m_\nu(x)} (\nu) = \nu$ if $s^2_\nu(x) = 0$) and thus
  $x_k \in A_{n, \gamma}(\omega)$ for any
  $\gamma \ge s_n(\omega, x_k)^{-1}$.
  In both cases the claim is proved.

  Since $\Xset$~is a compact metric space, it is easily proved that
  $\omega \mapsto A_{n,\gamma}(\omega)$ is an $\Fcal_n$-measurable
  \RCS, and thus admits \citep[see,
  e.g.,][Theorem~2.13]{molchanov06} an $\Fcal_n$-measurable
  selection~$X_{n+1}^{(\gamma)}$, i.e., an $\Xset$-valued random
  variable such that $X_{n+1}^{(\gamma)} \in A_{n, \gamma}$ on the
  event~$\{ A_{n, \gamma} \neq \varnothing \}$.
  Let $\tilde x$ denote an arbitrary fixed point in~$\Xset$.  Setting
  \begin{equation}
    X_{n+1} = \left\{
      \begin{aligned}
        & \tilde x \quad %
        && \text{if } s_n \equiv 0,\\
        & X_{n+1}^{(k)} \quad %
        && \text{if } A_{n,k} \neq \varnothing %
        \text{ and } A_{n,l} = \varnothing,\; \forall l < k,\\
      \end{aligned}
    \right.
  \end{equation}
  provides the desired $\varepsilon$-quasi-SUR strategy and thus
  completes the proof.
\end{proof}

In some situations, it is possible to prove directly the continuity of
the sampling criteria~$J_n$ on the whole of~$\Xset$ (see
Section~\ref{sec:example:ei} for an example), in which case a stronger
existence result can be formulated as
in~\cite{vazquez2010convergence}, that does not even require the
supermartingale property:

\begin{theorem} \label{thm:Jn-cont:and:SUR-exist:VARIANT}
  Let $\Hcal$ denote a measurable uncertainty functional on~$\MGCX$,
  such that, for all $\nu \in \MGCX$, $x \mapsto \Jcal_x(\nu)$ is
  finite and continuous on~$\Xset$.
  Then,
  \begin{enumerate}[ a)]
  \item for any sequential design, the
    sample paths of~$J_n$ are continuous on~$\Xset$;
  \item there exists a SUR sequential
    design~$\left( X_n \right)_{n\ge 1}$ associated with~$\Hcal$.
  \end{enumerate}
\end{theorem}

\begin{proof}
  Assertion a) follows trivially from the fact that
  $J_n(x) = \Jcal_x\bigl(\Prob_n^\xi\bigr)$, and a SUR sequential
  design is again obtained using the measurable selection theorem for
  \RCSs.
\end{proof}

%%%%%%%%%%%%%%%%%%%%%%%%%%%%%%%%%%%%%%%%%%%%%%%%%%%%%%%%%%%%%%%%%%%%%%%%%%
%%%%%%%%%%%%%%%%%%%%%%%%%%%%%%%%%%%%%%%%%%%%%%%%%%%%%%%%%%%%%%%%%%%%%%%%%%
\subsection{Miscellaneous}

\begin{lemma}
  \label{lem:ortho-sum}
  Let $U$, $V$ and~$W$ be real-valued random variables such that
  \begin{enumerate}
  \item $W$ is independent of~$(U, V)$,
  \item $V$ and~$W$ are Gaussian.
  \end{enumerate}
  If $U$ is orthogonal to~$L^2(V+W)$, then $U$ is orthogonal
  to~$L^2(V)$.
\end{lemma}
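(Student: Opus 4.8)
The plan is to reduce the whole statement to a computation with characteristic functions, exploiting the independence hypothesis to strip off~$W$. First I would observe that, for the orthogonality $U \perp L^2(V+W)$ to make sense, $U \in L^2$, and that taking as test functions the real and imaginary parts $\cos(t(V+W))$ and $\sin(t(V+W))$---which are bounded, hence members of $L^2(V+W)$---gives
\[
  \Esp\bigl( U\, e^{it(V+W)} \bigr) = 0
  \qquad \text{for all } t \in \Rset .
\]

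Next I would bring in the independence. Since $W$ is independent of the \emph{pair}~$(U,V)$, the complex random variable $U\,e^{itV}$ (a function of $(U,V)$) is independent of $e^{itW}$ (a function of $W$), so the expectation factorizes:
\[
  \Esp\bigl( U\, e^{it(V+W)} \bigr)
  = \Esp\bigl( U\, e^{itV} \bigr)\; \Esp\bigl( e^{itW} \bigr).
\]
Because $W$ is Gaussian, $\Esp(e^{itW})$ is a Gaussian characteristic function and therefore never vanishes; dividing by this nonzero factor yields $\Esp(U\,e^{itV}) = 0$ for every~$t \in \Rset$. It is worth noting that only the Gaussianity of~$W$ is used here (indeed, any~$W$ with non-vanishing characteristic function would do).

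Finally I would recover orthogonality to $L^2(V)$ by Fourier uniqueness. Writing $g(v) = \Esp(U \mid V = v)$, which is well defined and in~$L^1$ since $U \in L^2 \subset L^1$, and $\mu = \Prob_V$, the tower property gives
\[
  \Esp\bigl( U\, e^{itV} \bigr)
  = \int_{\Rset} e^{itv}\, g(v)\, \mu(\ddiff v),
\]
the Fourier transform of the finite signed measure $g\,\ddiff\mu$. Its vanishing for all~$t$ forces $g\,\ddiff\mu = 0$, hence $g = 0$ $\mu$-almost everywhere, i.e.\ $\Esp(U \mid V) = 0$ almost surely. Then for any $Y \in L^2(V)$ one has $\Esp(UY) = \Esp\bigl( \Esp(U\mid V)\,Y \bigr) = 0$, which is exactly the claimed orthogonality.

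There is no deep difficulty in this argument; the single point that demands care is the factorization step, where one must use that $W$ is independent of the \emph{joint} pair~$(U,V)$---and not merely of $U$ and of $V$ separately---so that $U\,e^{itV}$ and $e^{itW}$ are genuinely independent and the expectation of their product splits. The Gaussian hypothesis then plays only the modest role of ensuring $\Esp(e^{itW}) \neq 0$ so that the division is legitimate.
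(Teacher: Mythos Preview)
Your proof is correct and takes a genuinely different route from the paper's. The paper argues by contraposition: assuming $U$ is not orthogonal to~$L^2(V)$, it uses that the Hermite polynomials $\bigl(H_k(V)\bigr)_{k\ge 0}$ form an orthonormal basis of~$L^2(V)$ (this is where the Gaussianity of~$V$ enters) to produce a smallest~$k_0$ with $\Esp(U V^{k_0}) \neq 0$, then expands $(V+W)^{k_0}$ by the binomial formula and uses independence to factor $\Esp(U V^k W^{k_0-k}) = \Esp(U V^k)\,\Esp(W^{k_0-k})$, leaving only the $k=k_0$ term, which is nonzero. Your approach instead tests against complex exponentials, factors the expectation via independence, divides by the nonvanishing Gaussian characteristic function of~$W$, and concludes by Fourier uniqueness that $\Esp(U\mid V)=0$. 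The payoff is that your argument never uses the Gaussianity of~$V$---as you correctly observe, only the nonvanishing of the characteristic function of~$W$ matters---so it actually proves a strictly stronger statement than the lemma as written. The paper's proof, by contrast, leans on both Gaussian hypotheses.
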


\begin{remark}
  The reverse implication is also true, but not needed in the paper.
\end{remark}

\begin{proof}
  Assume without loss of generality that~$U$, $V$ and~$W$ are
  centered.
  Assume further that~$U$ is not orthogonal to~$L^2(V)$.
  Then, there exists a smallest integer~$k_0$ such that
  $\cov (U, V^{k_0}) \neq 0$.
  Indeed, we would have otherwise $\cov (U, H_k(V)) = 0$ for all~$k$,
  where $H_k$ denotes the $k^{\text{th}}$ Hermite polynomial, and thus
  $U$ would be orthogonal to~$L^2(V)$ since
  $\left( H_k(V) \right)_{k \in \Nset}$ is an orthonormal basis
  of~$L^2(V)$.
  Using that $\cov (U, V^k) = 0$ for all $k < k_0$, we have:
  \begin{equation}
    \cov\left( U, (V+W)^{k_0} \right)
    \;=\; \sum_{k=0}^{k_0} \binom{k_0}{k}\,
    \Esp\left( U V^k \right) \Esp\left( W^{k - k_0} \right)
    \;=\; \Esp\left( U V^{k_0} \right)
    \;\neq\; 0.
  \end{equation}
  Therefore $U$ is not orthogonal to~$L^2(V+W)$, which concludes the
  proof by contraposition.
\end{proof}

\section*{Acknowledgments}

Part of David Ginsbourger's contribution took place within the
framework of the ``Bayesian set estimation under random field priors''
project (Number 146354) funded by the Swiss National Science
Foundation.
Fran\c{c}ois Bachoc acknowledges support from the ``PEPITO'' project
funded by the French National Agency for Research.
The authors would like to thank Luc Pronzato for pointing out the
important connection with DeGroot's earlier work on uncertainty
functionals, Dario Azzimonti for proofreading,
and two anonymous reviewers for their careful reading and numerous
suggestions of improvement.

\bibliographystyle{apalike}
\bibliography{surconv-paper}

\end{document}